\pgfplotsset{compat=1.11}
\newcommand{\reluSym}[0]{\ensuremath{\mathrm{ReLU}}}
\newcommand{\relu}[1]{\ensuremath{\reluSym\left(#1\right)}}
\newcommand{\zonoLower}[1]{\ensuremath{\underline{#1}}}
\newcommand{\zonoUpper}[1]{\ensuremath{\overline{#1}}}
\newcommand{\affineForm}{\text{\textctyogh}}%
\newcommand{\zonotope}{\ensuremath{\mathcal{Z}}}
\newcommand{\errorSoftmaxSigmoid}[2]{\ensuremath{\mathrm{err}_{\sigma}\left(#1,#2\right)}}
\newcommand{\errorSoftmaxPolytope}[2]{\ensuremath{\mathrm{err}_{\text{poly}}\left(#1,#2\right)}}
\DeclareAcronym{OLNNV}{
    short = open-loop NNV,
    long = open-loop neural network verification,
    first-style=short
}
\DeclareAcronym{CLNNV}{
    short = closed-loop NNV,
    long = closed-loop neural network verification,
    first-style=short
}
\DeclareAcronym{NN}{
    short = NN,
    long = neural network
}
\DeclareAcronym{FNN}{
    short = FNN,
    long = feed forward neural network
}
\DeclareAcronym{NNCS}{
    short = NNCS,
    long = neural network based control system
}
\DeclareAcronym{DNF}{
    short = DNF,
    long = disjunctive normal form
}
\DeclareAcronym{ACASX}{
    short = ACAS X,
    long = Airborne Collision Avoidance System X,
    cite={olson2015airborne}
}
\DeclareAcronym{ACASXu}{
    short = ACAS Xu,
    long = Airborne Collision Avoidance System X unmanned
}
\newcommand{\keymaeraxtext}{Ke{\kern-0.75ptY}maera X}
\DeclareAcronym{keymaerax}{
    short = \keymaeraxtext{},
    long = \keymaeraxtext{},
    first-style = long,
    tag = noindexplease
}
\DeclareAcronym{modelplex}{
    short = ModelPlex,
    long = ModelPlex,
    first-style = long,
    tag = noindexplease
}
\DeclareAcronym{DNNV}{
    short = DNNV,
    long = DNNV,
    cite={Shriver2021},
    first-style = long,
    tag = noindexplease
}
\DeclareAcronym{OVERT}{
    short = OVERT,
    long = OVERT,
    first-style = long,
    tag = noindexplease
}
\DeclareAcronym{dL}{
    short=$d\mathcal{L}$,
    long=differential dynamic logic
}
\DeclareAcronym{NMAC}{
    short=NMAC,
    long=Near Mid-Air Collision
}
\DeclareAcronym{CPS}{
    short=CPS,
    long=Cyber-Physical System
}
\DeclareAcronym{SNNT}{
    short=\textsc{N$^3$V},
    long=\textbf{N}on-linear \textbf{N}eural \textbf{N}etwork \textbf{V}erifier,
    first-style=short
}
\DeclareAcronym{SMT}{
    short=SMT,
    long=Satisfiability Modulo Theories
}
\DeclareAcronym{RL}{
    short=RL,
    long=reinforcement learning
}
\DeclareAcronym{MILP}{
    short=MILP,
    long=Mixed Integer Linear Programming
}
\DeclareAcronym{ACC}{
    short=ACC,
    long=Adaptive Cruise Control
}
\DeclareAcronym{TCAS}{
    short=TCAS,
    long=Traffic Alert and Collision Avoidance System
}
\DeclareAcronym{FAA}{
    short=FAA,
    long=Federal Aviation Administration
}
\newcommand{\bad}[1]{{\color{red} #1}}
\newcommand{\good}[1]{{\color{green!30!black} #1}}
\begin{document}
\title{%
Revisiting Differential Verification:\\
Equivalence Verification with Confidence
}
\titlerunning{Revisiting Differential Verification: Equivalence Verification with Confidence}

\author{Samuel Teuber\orcidID{0000-0001-7945-9110} \and
Philipp Kern\orcidID{0000-0002-7618-7401} \and\\
Marvin Janzen \and
Bernhard Beckert\orcidID{0000-0002-9672-3291}}
\institute{Karlsruhe Institute of Technology, Karlsruhe, Germany}

\maketitle              %
\begin{abstract}
\looseness=-1
When validated \acp{NN} are pruned (and retrained) before deployment, it is desirable to prove that the new \ac{NN} behaves \emph{equivalently} to the (original) reference \ac{NN}.
To this end, our paper revisits the idea of \emph{differential verification} which performs reasoning on differences between \acp{NN}:
On the one hand, our paper proposes a novel abstract domain for differential verification admitting more efficient reasoning about equivalence.
On the other hand, we investigate empirically and theoretically which equivalence properties are (not) efficiently solved using differential reasoning.
Based on the gained insights, and following a recent line of work on confidence-based verification, we propose a novel equivalence property that is amenable to Differential Verification while providing guarantees for \emph{large parts of the input space} instead of small-scale guarantees constructed w.r.t. predetermined input points.
We implement our approach in a new tool called \emph{VeryDiff} and perform an extensive evaluation on numerous old and new benchmark families, including new pruned \acp{NN} for particle jet classification in the context of CERN's LHC where we observe median speedups $>300\times$ over the State-of-the-Art verifier $\alpha,\beta$-CROWN.

\keywords{Neural Network Verification \and Equivalence Verification \and Differential Verification \and Confidence-Based Verification \and Zonotopes.}
\end{abstract}

\section{Introduction}
\looseness=-1
Specifying what an \ac{NN} is supposed to do is a difficult problem, that is at most partially solved.
One class of specifications that is comparatively easy to formalize are equivalence properties:
Given an ``old'' reference \ac{NN} $f_1$, we aim to prove that a ``new'' \ac{NN} $f_2$ behaves in some way equivalently.
For example $\varepsilon$ equivalence~\cite{kleine_buning_verifying_2020,paulsen_reludiff_2020} requires that the numerical outputs of $f_1$ and $f_2$ for the same input point differ by at most $\varepsilon$ or Top-1 equivalence~\cite{kleine_buning_verifying_2020} requires that the two \acp{NN}' classifications match.
Known applications of equivalence verification are verification after retraining or pruning~\cite{WANG2024127347}, student-teacher training~\cite{kleine_buning_verifying_2020,Teuber2021a}, analysis of sensitivity to \ac{NN}-based preprocessing steps~\cite{narodytska_verifying_2018} and construction of quantized \acp{NN}~\cite{CEG4N}.
Several publications~\cite{paulsen_reludiff_2020,paulsen_neurodiff_2020,kleine_buning_verifying_2020,Teuber2021a,Eleftheriadis2022,WANG2024127347,DBLP:conf/nfm/PP24} have proposed methods for the verification of equivalence properties (sometimes
calling it ``approximate conformance'').
While it is known that equivalence verification w.r.t.\ the $\varepsilon$ equivalence (\Cref{def:epsilon_equivalence}) property is coNP-complete~\cite{Teuber2021a}, the complexity-theoretic status of Top-1 equivalence verification (\Cref{def:top1_equivalence}) was to date unclear.

\paragraph{Contribution.}
\looseness=-1
This work encompasses multiple theoretical and practical contributions to the field of equivalence verification:
\begin{itemize}
    \item[(C1)] We prove that deciding if two $\reluSym{}$ \acp{NN} are Top-1 equivalent is a coNP-complete decision problem, i.e. it is as hard as $\varepsilon$-equivalence verification~\cite{Teuber2021a} or the classic \ac{NN} verification problem~\cite{katz_reluplex_2017,DBLP:conf/rp/SalzerL21}.
    \item[(C2)] We propose \emph{Differential Zonotopes}: An abstract domain that allows the usage of the differential verification methodology w.r.t. the Zonotope abstract domain by propagating a Zonotope bounding the \emph{difference} between two \acp{NN} in lock-step with a reachability analysis for the individual \acp{NN}.
    \item[(C3)] We implement the proposed approach in a new tool and evaluate its efficiency.
    For $\varepsilon$ equivalence we achieve median speedups >10 for 8 of 9 comparisons (4.5 in the other case).
    \item[(C4)]
    For Top-1 equivalence we demonstrate empirically that Differential Zonotopes do not aid verification.
    We provide a theoretical intuition for this observation and demonstrate this is a fundamental limitation of Differential Verification in general -- independently of the chosen abstract domain. %
    \item[(C5)] Based on these insights, we propose a new confidence-based equivalence property for classification \ac{NN} which is
    \begin{enumerate*}
        \item verifiable on larger parts of the input space of \acp{NN};
        \item amenable to differential verification.
    \end{enumerate*}
    Furthermore, we propose a simpler \emph{and} more precise linear approximation of the $\mathrm{softmax}$ function in comparison to prior work~\cite{DBLP:conf/cav/AthavaleBCMNW24}.
    In additional experiments, we demonstrate that our tool can certify 327\% more benchmark queries than $\alpha,\beta$-CROWN for confidence-based equivalence.
\end{itemize}

\paragraph{Related Work.}
\looseness=-1
Prior work on Zonotope-based \ac{NN} verification~\cite{GehrZonotope,Singh18} verified non-relational properties w.r.t. a single \ac{NN}.
We extend this work by providing a methodology that allows reasoning about \emph{differences} between \acp{NN}.
Equivalence properties, can in principle, be analyzed using classical \ac{NN} verification techniques such as $\alpha,\beta$-CROWN~\cite{zhang22babattack,xu2020automatic,zhang2018efficient,zhang2022general,shi2024genbab}
for a single \ac{NN} by building ``product-networks'' (similar to product-programs in classical program verification~\cite{DBLP:conf/fm/BartheCK11}), but early work on \ac{NN} equivalence verification demonstrated that this approach is inefficient due to the accumulation of overapproximation errors in the two independent \acp{NN}~\cite{paulsen_reludiff_2020}.
While this view was recently challenged by \cite{DBLP:conf/nfm/PP24}, \Cref{sec:evaluation} conclusively demonstrates that tailored verification tools still outperform State-of-the-Art ``classical'' \ac{NN} verification tools for \acp{NN} with similar weight structures.

Prior work suggested using Star-Sets for equivalence verification without analyzing weight differences and heavily relied on LP solving~\cite{Teuber2021a}.
Prior work on differential verification~\cite{paulsen_reludiff_2020,paulsen_neurodiff_2020} did not verify the equivalence of classifications and also fell short of using the Zonotope abstract domain.
\Cref{sec:evaluation} compares to equivalence verifiers~\cite{paulsen_neurodiff_2020,kleine_buning_verifying_2020,Teuber2021a}.
In another line of work, QEBVerif~\cite{DBLP:conf/cav/ZhangSS23} proposes a sound and complete analysis technique tailored to quantized \acp{NN} which is not directly applicable to other kinds of \acp{NN} studied in our evaluation.

Another line of research analyzes relational properties w.r.t. multiple runs of a \emph{single} \ac{NN}.
All listed works do not verify equivalence.
For example, Banerjee \emph{et al.}~\cite{DBLP:journals/pacmpl/BanerjeeXS24} propose an abstract domain for relational properties, but assume that all executions happen on the same \ac{NN}.
This makes their approach incompatible with our benchmarks which require the analysis of multiple \emph{different} \ac{NN}.
Another incomplete relational verifier~\cite{banerjee2024relational} also assumes executions on a single \ac{NN} and requires tailored relaxations not available for equivalence properties.
Encoding relational properties via product \acp{NN} has also been explored by Athavale \emph{et al.}~\cite{DBLP:conf/cav/AthavaleBCMNW24}.
We compare against Marabou (which they used) and we prove that our approximation of softmax, though simpler, is always more precise.

\paragraph{Overview.}
\Cref{sec:background} introduces the necessary background on \ac{NN} verification via Zonotopes, equivalence verification and confidence based \ac{NN} verification.
\Cref{sec:complexity} proves the coNP-completeness of Top-1 equivalence.
Subsequently, we introduce \emph{Differential Zonotopes} as an abstract domain for differential reasoning via Zonotopes (\Cref{sec:diff_zono}) and explain how Differential Zonotopes can be used to perform equivalence verification (\Cref{sec:top1}).
\Cref{sec:classification} explains why Top-1 equivalence does not benefit from differential reasoning in general and derives a new confidence-based equivalence property that may hold on large parts of the input space and can be verified more efficiently using differential verification.
Finally, \Cref{sec:evaluation} provides an evaluation of our approach.

\section{Background}
\label{sec:background}
\looseness=-1
We deal with the verification of piece-wise linear, feed-forward \acfp{NN}.
A \ac{NN} with input dimension $I \in \mathbb{N}$ and output dimension $O \in \mathbb{N}$ and $L\in\mathbb{N}$ layers can be summarized as a function $f:\mathbb{R}^I \to \mathbb{R}^O$ which maps input vectors $\mathbf{x}^{(0)} \in \mathbb{R}^I$ to output vectors $\mathbf{x}^{(L)} \in \mathbb{R}^O$.
In more detail, each layer of the \ac{NN} consists of an affine transformation $\tilde{\mathbf{x}}^{(i)} = W^{(i)} \mathbf{x}^{(i-1)} + b^{(i)}$ (for a matrix $W^{(i)}$ and a vector $b^{(i)}$) followed by the application of a non-linear function $\mathbf{x}^{(i)} = h^{(i)}\left(\tilde{\mathbf{x}}^{(i)}\right)$.
Many feed-forward architectures can be compiled into this format~\cite{shriver_dnnv_2021}.
We focus on the case of \acp{NN} with \reluSym{} activations, i.e. $h^{(i)}\left(\tilde{\mathbf{x}}^{(i)}\right) = \relu{\tilde{\mathbf{x}}^{(i)}}=\max\left(0,\tilde{\mathbf{x}}^{(i)}\right)$ for all 
$1 \leq i \leq L$.
$f^{(i)}\left(\mathbf{x}\right)$ is the computation of the \ac{NN}'s first $i$ layers.
We uniformly denote vectors in bold ($\mathbf{v}$) and matrices in capital letters ($M$) and Affine Forms/Zonotopes are denoted as $\affineForm$/$\zonotope$.

\paragraph{\ac{NN} Verification.}
A well-known primitive in the literature on \ac{NN} verification are \emph{Zonotopes}~\cite{GehrZonotope,Singh18}: An abstract domain that allows the efficient propagation of an interval box over the input space through (piece-wise) affine systems:
\begin{definition}[Zonotope]
A \emph{Zonotope} with input dimension $n$ and output dimension $m$ is 
a collection of $m$ \emph{Affine Forms} of the structure
\begin{align*}
\mathbf{g} \mathbf{\epsilon} + c &&\left(\mathbf{\epsilon}\in\left[-1,1\right]^n, \mathbf{g}\in\mathbb{R}^n,c\in\mathbb{R}\right)
\end{align*}
We denote a single Affine Form as a tuple $\left(\mathbf{g},c\right)$.
Given an Affine Form $\affineForm{}=\left(\mathbf{g},c\right)$ and a vector $\mathbf{v}\in\left[-1,1\right]^d$ ($d \leq n$) we denote by $\affineForm{}\left(\mathbf{v}\right)$ the Affine Form (or value for $d=n$) where the first $d$ values of $\mathbf{\epsilon}$ are fixed to $\mathbf{v}$, i.e. to $\left(\mathbf{g}_{d+1:n},\mathbf{g}_{1:d}\mathbf{v}+c\right)$.
For $\affineForm{}=\left(\mathbf{g},c\right)$, we denote the set of points described by $\affineForm{}$ as $\langle \affineForm{} \rangle = \left\{ \affineForm{}\left(\mathbf{\epsilon}\right) ~\middle|~ \mathbf{\epsilon}\in\left[-1,1\right]^n\right\}$.
\end{definition}
\noindent
Via $\affineForm{}\left(\mathbf{v}\right)$ we denote the values reachable given the (input) vector $\mathbf{v}$.
To improve clarity, some transformations applied to Zonotopes will be described for the 1-dimensional case, i.e. to a single Affine Form.
Nonetheless, a major advantage of Zonotopes lies in their Matrix representation: $m$ Affine Forms are then a matrix $G\in\mathbb{R}^{n \times m}$ and a vector $\mathbf{c} \in \mathbb{R}^m$ (then denoted as $\zonotope{}=\left(G,\mathbf{c}\right)$).
A component of $\mathbf{g}$/a column of $G$ is called a \emph{generator}.
Given a Zonotope $\zonotope{}$ we denote its $i$-th Affine Form (represented by $G$'s $i$-th row and $\mathbf{c}$'s $i$-th component) as $\zonotope{}_i$.
Similar to the affine forms, we define
$
\langle \zonotope \rangle = \left\{
\mathbf{x} \in \mathbb{R}^m \middle|
\mathbf{\epsilon}\in\left[-1,1\right]^n,
\mathbf{x}_i = \zonotope_i\left(\mathbf{\epsilon}\right)
\right\}
$.
Zonotopes are a good fit for analyzing (piece-wise) linear \ac{NN} as they are closed under affine transformations \cite{bak_improved_2020}:
\begin{proposition}[Affine Zonotope Transformation]
\label{prop:background:zono_affine}
    For some Zonotope $\zonotope=\left(G,\mathbf{c}\right)$ and an affine transformation $h\left(\mathbf{x}\right)=W\mathbf{x}+\mathbf{b}$,
    the Affine Form $\hat{\zonotope}=\left(WG,W\mathbf{c}+\mathbf{b}\right)$ exactly describes the affine transformation applied to the points $x \in \langle \zonotope \rangle$, i.e.
    for all $d \leq n$ and $\mathbf{v} \in \left[-1,1\right]^d$:
    $
    \left\{ W\mathbf{x}+\mathbf{b} ~\middle|~ \mathbf{x} \in \langle \zonotope\left(\mathbf{v}\right) \rangle\right\} =
    \langle \hat{\zonotope}\left(\mathbf{v}\right) \rangle
    $
\end{proposition}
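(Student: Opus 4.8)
The plan is a direct unfolding of definitions, exploiting that a zonotope is \emph{affine} in its noise symbols $\mathbf{\epsilon}$ and that the composition of two affine maps is again affine; consequently the two sides of the claimed identity turn out to be literally the same parametrized family of points. Concretely, I would first make the partial evaluation $\zonotope(\mathbf{v})$ explicit. Split the generator matrix columnwise as $G = [\,G^L \mid G^R\,]$, where $G^L$ holds the first $d$ columns (those attached to the fixed symbols $\mathbf{v}$) and $G^R$ the remaining $n-d$ columns. Applying the defining rule $\affineForm(\mathbf{v}) = (\mathbf{g}_{d+1:n},\, \mathbf{g}_{1:d}\mathbf{v}+c)$ row-by-row over the $m$ Affine Forms of $\zonotope$ gives $\zonotope(\mathbf{v}) = \bigl(G^R,\; G^L\mathbf{v} + \mathbf{c}\bigr)$, hence
\[
\langle \zonotope(\mathbf{v}) \rangle \;=\; \bigl\{\, G^R \mathbf{\epsilon}' + G^L\mathbf{v} + \mathbf{c} \;\big|\; \mathbf{\epsilon}' \in [-1,1]^{n-d} \,\bigr\}.
\]

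Next I would push this set through $h(\mathbf{x}) = W\mathbf{x} + \mathbf{b}$. For every $\mathbf{\epsilon}' \in [-1,1]^{n-d}$,
\[
W\bigl(G^R \mathbf{\epsilon}' + G^L\mathbf{v} + \mathbf{c}\bigr) + \mathbf{b} \;=\; (WG^R)\,\mathbf{\epsilon}' \;+\; (WG^L)\,\mathbf{v} \;+\; (W\mathbf{c} + \mathbf{b}),
\]
and since matrix multiplication respects the column split ($W[\,G^L \mid G^R\,] = [\,WG^L \mid WG^R\,]$), the matrices $WG^L$ and $WG^R$ are exactly the corresponding column blocks of $WG$. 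Reading the right-hand side back through the definition of $\affineForm(\mathbf{v})$, it is precisely the evaluation at $\mathbf{\epsilon}'$ of $\hat{\zonotope}(\mathbf{v})$ for $\hat{\zonotope} = (WG,\, W\mathbf{c} + \mathbf{b})$. Therefore both $\{\,W\mathbf{x}+\mathbf{b} \mid \mathbf{x}\in\langle\zonotope(\mathbf{v})\rangle\,\}$ and $\langle\hat{\zonotope}(\mathbf{v})\rangle$ are the image of $[-1,1]^{n-d}$ under the single affine map $\mathbf{\epsilon}' \mapsto (WG^R)\mathbf{\epsilon}' + (WG^L)\mathbf{v} + W\mathbf{c} + \mathbf{b}$, so the two inclusions are immediate and the sets coincide. (The degenerate case $d = n$ is just $n - d = 0$, where both sides are the singleton $\{\,W(G^L\mathbf{v} + \mathbf{c}) + \mathbf{b}\,\}$; taking $d = 0$ recovers the unconstrained statement for $\langle\zonotope\rangle$ itself.)

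I do not anticipate a genuine obstacle: the proposition is essentially a restatement of the distributivity of matrix multiplication over the zonotope's affine parametrization. The only point needing a little care is the bookkeeping around partial evaluation — verifying that the row-wise rule for $\affineForm(\mathbf{v})$ lifts to the matrix-level identity $\zonotope(\mathbf{v}) = (G^R, G^L\mathbf{v} + \mathbf{c})$ and that this operation commutes with left-multiplication by $W$ — both of which are one-line consequences of linearity.
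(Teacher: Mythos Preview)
The paper does not provide a proof of this proposition; it is stated in the background section as a known result from prior work (cited to Bak et al.), and is only invoked later as a black box. Your proposal is correct and is the standard direct argument --- unfolding the definition of $\zonotope(\mathbf{v})$ to $(G^R,\,G^L\mathbf{v}+\mathbf{c})$, applying $W(\cdot)+\mathbf{b}$, and reading the result back as $\hat{\zonotope}(\mathbf{v})$ via distributivity of matrix multiplication over the column split --- so there is nothing to compare against beyond noting that your verification is sound.
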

\noindent
This proposition implies $\left\{ W\mathbf{x}+\mathbf{b} ~\middle|~ \mathbf{x} \in \langle \zonotope{}\rangle\right\} = \langle \hat{\zonotope{}} \rangle$, but it is even stronger
as it also guarantees a linear map from an input ($\mathbf{v}$) to all reachable outputs ($\langle \hat{\zonotope{}}\left(\mathbf{v}\right)\rangle$).
Zonotopes also admit efficient computation of interval bounds for their outputs:
\begin{proposition}[Zonotope Output Bounds]
\label{prop:background:zono_bounds}
    Consider some Affine Form $\affineForm{}=\left(\mathbf{g},c\right)$
    it holds for all $\mathbf{x} \in \left[-1,1\right]^n$ that:
    \[
    \mathbf{g}\mathbf{x}+c \in
    \left[\zonoLower{\affineForm{}},\zonoUpper{\affineForm{}}\right] \coloneqq
    \left[ c-\sum_{i=0}^n \left|\mathbf{g}_{i}\right|,c+\sum_{i=0}^n \left|\mathbf{g}_{i}\right|  \right]
    \]
\end{proposition}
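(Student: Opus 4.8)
The plan is to bound the affine functional $\mathbf{x}\mapsto \mathbf{g}\mathbf{x}+c$ coordinate-wise over the box $\left[-1,1\right]^n$. First I would write $\mathbf{g}\mathbf{x}=\sum_{i} \mathbf{g}_i \mathbf{x}_i$ and note that, since the feasible region is the Cartesian product $\left[-1,1\right]^n$, each summand can be bounded independently. For a fixed index $i$ and $\mathbf{x}_i\in\left[-1,1\right]$ one has $-\left|\mathbf{g}_i\right|\le \mathbf{g}_i\mathbf{x}_i\le \left|\mathbf{g}_i\right|$: when $\mathbf{g}_i\ge 0$ this reads $\mathbf{g}_i\cdot(-1)\le \mathbf{g}_i\mathbf{x}_i\le \mathbf{g}_i\cdot 1$, and when $\mathbf{g}_i<0$ the inequalities swap direction but the two extreme values are unchanged.

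Second, I would sum these $n$ inequalities and add the constant $c$, obtaining $c-\sum_i\left|\mathbf{g}_i\right|\le \mathbf{g}\mathbf{x}+c\le c+\sum_i\left|\mathbf{g}_i\right|$, which is precisely the claimed membership $\mathbf{g}\mathbf{x}+c\in\left[\zonoLower{\affineForm{}},\zonoUpper{\affineForm{}}\right]$. Equivalently, and even more compactly, the same conclusion follows in one line from the triangle inequality: $\left|\mathbf{g}\mathbf{x}\right|\le \sum_i\left|\mathbf{g}_i\right|\left|\mathbf{x}_i\right|\le \sum_i\left|\mathbf{g}_i\right|$ because $\left|\mathbf{x}_i\right|\le 1$, hence $\mathbf{g}\mathbf{x}+c$ lies within distance $\sum_i\left|\mathbf{g}_i\right|$ of $c$.

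There is no real obstacle here. The one point worth a remark is that the proposition only asserts an inclusion, so no witness attaining the endpoints is required; nonetheless $\mathbf{x}_i=\operatorname{sign}(\mathbf{g}_i)$ attains the upper endpoint and $\mathbf{x}_i=-\operatorname{sign}(\mathbf{g}_i)$ the lower one (any choice for $\mathbf{x}_i$ when $\mathbf{g}_i=0$), so the stated interval is in fact the exact range of the affine form and cannot be tightened — a fact worth keeping in mind when reasoning about the precision of subsequent Zonotope operations.
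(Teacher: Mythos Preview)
Your argument is correct. The paper states this proposition as standard background material without proof, so there is no paper proof to compare against; your coordinate-wise bounding (equivalently, the one-line triangle-inequality version) is exactly the elementary justification one would expect here.
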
%
\noindent

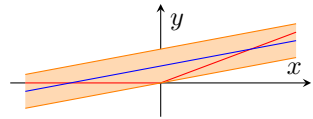
\begin{wrapfigure}[7]{r}{0.35\textwidth}
\centering
\vspace*{-1.2em}
\begin{tikzpicture}
  \begin{axis}[
    axis lines = middle,
    xlabel = {$x$},
    ylabel = {$y$},
    y=3cm,
    x=2cm,
    xmin=-1.0, xmax=1.0,
    ymin=-0.15, ymax=0.35,
    xtick distance=5.0,
    ytick distance=5.0]
  \addplot [name path = relu1,
    domain = -0.9:0,
    samples = 20,red] {0};
  \addplot [name path = relu2,
    domain = 0:0.9,
    samples = 20,red] {0.25*x};
  \addplot [name path = relu_interpolation,
    domain = -0.9:0.9,
    samples = 20, blue] {0.25*1.2/(1.2+1.2)*x+0.25*0.5*(1.2*1.2/(1.2+1.2))};
  \addplot [name path = zonoUp,
    domain = -0.9:0.9,
    samples = 20, orange] {0.25*(1.2/(1.2+1.2)*x+(1.2*1.2/(1.2+1.2)))};
  \addplot [name path = zonoDown,
    domain = -0.9:0.9,
    samples = 20, orange] {0.25*(1.2/(1.2+1.2)*x)};
  \addplot [orange!30] fill between [of = zonoDown and zonoUp, soft clip={domain=-0.9:0.9}];
  \end{axis}
\end{tikzpicture}
\caption{$\reluSym$ approximation}
\label{fig:zono_relu_approx}
\end{wrapfigure}%
\noindent
Zonotopes cannot exactly represent the application of $\reluSym$, but we can approximate the effect by distinguishing three cases:
\begin{enumerate*}
    \item The upper-bound $\zonoUpper{\affineForm{}}$ is negative and thus $\relu{x}=0$ for $x \in \langle \affineForm{}\rangle$;
    \item The lower-bound $\zonoLower{\affineForm{}}$ is positive and thus $\relu{x}=x$ for $x \in \langle \affineForm{}\rangle$;
    \item The $\reluSym{}$-node is \emph{instable} and its output is thus piece-wise linear.
\end{enumerate*}
The first and second case can be represented as an affine transformation and the third case requires an approximation (see \Cref{fig:zono_relu_approx} for intuition):
\emph{Interpolation} between $\relu{x}=0$ and $\relu{x}=x$ (see the blue line in \Cref{fig:zono_relu_approx} and $\lambda \mathbf{g}$ in \Cref{prop:background:zono_relu}) yields a representation, that we can turn into a sound overapproximation (meaning all possible output values of $\relu{x}$ are contained).
This is achieved by adding a new generator that appropriately bounds the error of the interpolated function (see orange lines in \Cref{fig:zono_relu_approx} and $\tfrac{1}{2}\lambda\zonoLower{\affineForm{}}$ in \Cref{prop:background:zono_relu}).
This result is summarized as follows: %
\begin{proposition}[\reluSym{} Zonotope Transformation~\cite{Singh18}]
\label{prop:background:zono_relu}
Consider some Affine Form $\affineForm{}=\left(\mathbf{g},c\right)$.
Define a new Affine Form $\hat{\affineForm{}}=\left(\hat{\mathbf{g}},\hat{c}\right)$ such that:
\begin{align*}
    \hat{\mathbf{g}}&=0 \in \mathbb{R}^{n}&\hat{c}&=0 && \text{if }\zonoUpper{z}<0\\
    \hat{\mathbf{g}}&=\mathbf{g} \in \mathbb{R}^{n}&\hat{c}&=c && \text{if }\zonoLower{z}>0\\
    \hat{\mathbf{g}}&=\left(
    \begin{array}{c}
        \lambda \mathbf{g}\\
        \tfrac{1}{2}\lambda\zonoLower{\affineForm{}}
    \end{array}
    \right) \in \mathbb{R}^{n+1} &\hat{c}&=c-\tfrac{1}{2}\lambda\zonoLower{\affineForm{}}&&\text{else}
\end{align*}
for $\lambda = \frac{\zonoUpper{\affineForm{}}}{\left(\zonoUpper{\affineForm{}}-\zonoLower{\affineForm{}}\right)}$.
Then $\hat{\affineForm{}}$ guarantees for all $d \leq n$ and $\mathbf{v} \in \left[-1,1\right]^d$ that:
\[
\left\{\reluSym\left(x\right) \middle| x \in \langle \affineForm{}\left(\mathbf{v}\right) \rangle\right\} \subseteq \langle \hat{\affineForm{}}\left(\mathbf{v}\right)\rangle
\]
\end{proposition}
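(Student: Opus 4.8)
The plan is to establish the pointwise strengthening of the claim: for every $d\le n$, every $\mathbf{v}\in[-1,1]^d$, and every $\mathbf{\epsilon}\in[-1,1]^n$ with $\mathbf{\epsilon}_{1:d}=\mathbf{v}$, the single value $\relu{\affineForm{}(\mathbf{\epsilon})}$ lies in $\langle\hat{\affineForm{}}(\mathbf{v})\rangle$. Since $\langle\affineForm{}(\mathbf{v})\rangle=\{\affineForm{}(\mathbf{\epsilon})\mid\mathbf{\epsilon}\in[-1,1]^n,\ \mathbf{\epsilon}_{1:d}=\mathbf{v}\}$, this immediately yields the stated set inclusion. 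The first thing to observe is that, by \Cref{prop:background:zono_bounds}, $\affineForm{}(\mathbf{\epsilon})\in[\zonoLower{\affineForm{}},\zonoUpper{\affineForm{}}]$ for \emph{every} such $\mathbf{\epsilon}$: these bounds are computed from the full generator vector and only shrink when the first $d$ noise symbols are fixed, so the three-way case split --- which is decided purely by the global bounds $\zonoLower{\affineForm{}},\zonoUpper{\affineForm{}}$ --- applies uniformly to all admissible $\mathbf{\epsilon}$ simultaneously. Note also that the fresh $(n{+}1)$-th noise symbol added in the unstable case is never among the first $d$ symbols that get fixed, so it stays free in $\hat{\affineForm{}}(\mathbf{v})$.

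The two stable cases are immediate. If $\zonoUpper{\affineForm{}}<0$, then $\affineForm{}(\mathbf{\epsilon})\le\zonoUpper{\affineForm{}}<0$, hence $\relu{\affineForm{}(\mathbf{\epsilon})}=0$; and $\hat{\affineForm{}}$ is the zero form, which is still the constant $0$ after restriction, so $0\in\langle\hat{\affineForm{}}(\mathbf{v})\rangle$. If $\zonoLower{\affineForm{}}>0$, then $\reluSym{}$ acts as the identity on every reachable value and $\hat{\affineForm{}}=\affineForm{}$, so $\relu{\affineForm{}(\mathbf{\epsilon})}=\affineForm{}(\mathbf{\epsilon})\in\langle\affineForm{}(\mathbf{v})\rangle=\langle\hat{\affineForm{}}(\mathbf{v})\rangle$ by definition of the restriction operator.

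The unstable case carries the actual content. Write $l=\zonoLower{\affineForm{}}$ and $u=\zonoUpper{\affineForm{}}$; in the generic situation $l<0<u$, so $\lambda=\tfrac{u}{u-l}\in(0,1)$ and $1-\lambda=\tfrac{-l}{u-l}$ (the boundary sub-cases $l=0$ resp.\ $u=0$ make $\lambda=1$ resp.\ $\lambda=0$ and degenerate to the stable behaviour). Fix an admissible $\mathbf{\epsilon}$ and abbreviate $x=\affineForm{}(\mathbf{\epsilon})\in[l,u]$. Substituting $\mathbf{g}\mathbf{\epsilon}=x-c$ into the definition of $\hat{\affineForm{}}$ and folding the constants of the center, one sees that --- with the $n-d$ original free symbols fixed to match $\mathbf{\epsilon}$ --- the value of $\hat{\affineForm{}}(\mathbf{v})$ ranges, as its remaining free symbol $t\in[-1,1]$ varies, exactly over the interval $[\lambda x,\ \lambda x-\lambda l]=[\lambda x,\ \lambda(x-l)]$, an interval of constant width $-\lambda l=\lambda\lvert l\rvert$ whose midpoint $\lambda x-\tfrac12\lambda l$ (attained at $t=0$) is exactly the interpolation line and whose half-width $\tfrac12\lambda\lvert l\rvert$ is exactly the fresh generator's coefficient. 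It therefore remains to verify the one-dimensional sandwich $\relu{x}\in[\lambda x,\ \lambda(x-l)]$ for all $x\in[l,u]$. Lower bound: if $x\le 0$ then $\lambda x\le 0=\relu{x}$; if $x\ge 0$ then $\lambda x\le x=\relu{x}$ since $\lambda\le 1$. Upper bound: if $x\le 0$ then $\relu{x}=0\le\lambda(x-l)$ since $x\ge l$; if $x\ge 0$ then $\relu{x}=x\le\lambda(x-l)$ is equivalent to $(1-\lambda)x\le-\lambda l$, i.e.\ to $x\le u$, using $1-\lambda=\tfrac{-l}{u-l}$. This closes the case and the proof.

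I expect the only real difficulty to be bookkeeping rather than mathematics: carefully threading the restriction to $\mathbf{v}$ through all three cases (checking that fixing the first $d$ symbols leaves the interval bounds, the case decision, \emph{and} the fresh generator untouched), and the routine constant-folding that turns $\lambda\mathbf{g}\mathbf{\epsilon}+\hat{c}$ back into $\lambda\,\affineForm{}(\mathbf{\epsilon})$ plus the center shift. Once those are in place, the core is just the observation that $\reluSym{}$ on $[l,u]$ lies between the two parallel lines of slope $\lambda$ through $(0,0)$ and through $(l,0)$, a gap of constant vertical width $\lambda\lvert l\rvert$ --- i.e.\ the classical Zonotope $\reluSym{}$ relaxation of~\cite{Singh18}, here restated so as to also certify the map from a fixed input $\mathbf{v}$ to the set of reachable outputs.
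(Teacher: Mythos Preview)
Your argument is correct for the intended transformer and follows the classical analysis of the Zonotope $\reluSym$ relaxation; the paper itself does not prove this proposition but cites it as background from~\cite{Singh18}, so there is nothing on the paper's side to compare your route against.

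There is, however, a genuine issue hiding in the step you wave through as ``folding the constants of the center.'' To get $\lambda\mathbf{g}\mathbf{\epsilon}+\hat{c}$ to collapse to $\lambda\,\affineForm(\mathbf{\epsilon})$ plus the shift $-\tfrac{1}{2}\lambda l$, you need $\hat{c}=\lambda c-\tfrac{1}{2}\lambda\zonoLower{\affineForm}$. The paper's statement instead reads $\hat{c}=c-\tfrac{1}{2}\lambda\zonoLower{\affineForm}$, without the $\lambda$ multiplying $c$. With that literal value the interval you would obtain after fixing $\mathbf{\epsilon}$ is $[\lambda x+(1-\lambda)c,\ \lambda(x-l)+(1-\lambda)c]$, and the claimed inclusion fails: take $\affineForm=(2,1)$ with $n=1$, so $l=-1$, $u=3$, $\lambda=3/4$; at $\epsilon=-\tfrac{1}{2}$ one has $x=0$ and $\relu{x}=0$, but the interval is $[\tfrac{1}{4},1]$. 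This is a transcription slip in the paper's restatement of the transformer (the center in~\cite{Singh18} is $\lambda c-\tfrac{1}{2}\lambda l$), not a flaw in your reasoning, but your write-up should flag the discrepancy explicitly rather than silently correct it in the folding step.
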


\noindent
\ac{NN} verification via Zonotopes typically proceeds as follows:
An input set described as Zonotope is propagated through the \ac{NN} using the transformers from \Cref{prop:background:zono_affine,prop:background:zono_relu}.
This yields an overapproximation of the \ac{NN}'s behavior.
Depending on the verification property, one can either check the property by computing the Zonotope's bounds (\Cref{prop:background:zono_bounds}) or by solving a linear program.
If a property cannot be established, the problem is refined by either splitting the input space, w.r.t. its dimensions (input-splitting, e.g.~\cite{Singh18}) or w.r.t. a particular neuron to eliminate the $\reluSym{}$'s nonlinearity (neuron-splitting, e.g.~\cite{bak_improved_2020}).

\paragraph{Equivalence Verification.}
To show that two \acp{NN} $f_1,f_2$ behave equivalently, we can, for example, verify that the \acp{NN}' outputs are equal up to some $\varepsilon$:
\begin{definition}[$\varepsilon$ Equivalence~\cite{kleine_buning_verifying_2020,paulsen_reludiff_2020}]
\label{def:epsilon_equivalence}
    Given two \acp{NN} $f_1,f_2:\mathbb{R}^I \to \mathbb{R}^O$ and an input set $X \subseteq \mathbb{R}^I$ we say $g_1$ and $g_2$ are \emph{$\varepsilon$ equivalent} w.r.t. a $p$-norm iff for all $\mathbf{x} \in X$ it holds that $\left\lVert f_1\left(\mathbf{x}\right) - f_2\left(\mathbf{x}\right)\right\rVert_p < \varepsilon$
\end{definition}
\noindent
Deciding $\varepsilon$ equivalence is coNP-complete~\cite{Teuber2021a}.
Another line of work proposes verification of Top-1 equivalence which is important for classification \acp{NN}~\cite{kleine_buning_verifying_2020}:
\begin{definition}[Top-1 Equivalence~\cite{kleine_buning_verifying_2020}]
\label{def:top1_equivalence}
    Given two \acp{NN} $f_1,f_2:\mathbb{R}^I \to \mathbb{R}^O$ and an input set $X \subseteq \mathbb{R}^I$, $f_1$ and $f_2$ are \emph{Top-1 equivalent} iff for all $\mathbf{x} \in X$ we have $\mathrm{argmax}_i~f_{1, i}\left(\mathbf{x}\right) = \mathrm{argmax}_i~f_{2, i}\left(\mathbf{x}\right)$, i.e. for every $k\in\left[1,O\right],\mathbf{x}\in X$ it holds   $f_{1,k}\left(\mathbf{x}\right)\geq f_{1,j}\left(\mathbf{x}\right)$ (for all $j\neq k$) implies that $f_{2,k}\left(\mathbf{x}\right)\geq f_{2,j}\left(\mathbf{x}\right)$ (for all $j\neq k$).
\end{definition}
\noindent
For $\varepsilon$ equivalence, prior work by Paulsen \emph{et al.}~\cite{paulsen_reludiff_2020,paulsen_neurodiff_2020} introduced \emph{differential verification}: For two \acp{NN} of equal depth (i.e. $L_1=L_2$),
equivalence can be verified more effectively by reasoning about weight differences.
To this end, Paulsen \emph{et al.} used symbolic intervals~\cite{wang_efficient_2018,wang_formal_2018} not only for bounding values of a single \ac{NN}, but to bound the \emph{difference} of values between two \acp{NN} $f_1,f_2$, i.e. at any layer $1 \leq i \leq L$ we compute two linear symbolic bound functions $l^{(i)}_\Delta\left(\mathbf{x}\right),u^{(i)}_\Delta\left(\mathbf{x}\right)$ such that $l^{(i)}_\Delta\left(\mathbf{x}\right) \leq f_1^{(i)}\left(\mathbf{x}\right) - f_2^{(i)}\left(\mathbf{x}\right) \leq u^{(i)}_\Delta\left(\mathbf{x}\right)$.
Differential bounds are computed by propagating symbolic intervals through two \acp{NN} in lock-step.
This enables the computation of bounds on the difference at every layer.

\paragraph{Confidence Based Verification}
Many classification \acp{NN} provide a confidence for their classification by using the Softmax function $\mathrm{softmax}_i\left(x\right) = \frac{e^{x_i}}{\sum_{i=1}^n x_i}$.
A recent line of work~\cite{DBLP:conf/cav/AthavaleBCMNW24} proposes to use the confidence values classically provided by classification \acp{NN} as a starting point for verification.
For example, Athavale et al.~\cite{DBLP:conf/cav/AthavaleBCMNW24} propose a global, confidence-based robustness property which stipulates that inputs classified with high confidence must be robust to noise (i.e. we require the same classification for some bounded perturbation of the input).
While this introduces reliance on the \ac{NN}'s confidence, it enables \emph{global} specifications verified on the full input space and this limitation can be addressed by an orthogonal direction of research that aims at training calibrated \acp{NN}~\cite{DBLP:conf/icml/GuoPSW17,DBLP:conf/uai/AoRS23}, i.e. \acp{NN} that correctly estimate the confidence of their predictions.

\section{Complexity of Top-1 Verification}
\label{sec:complexity}
\looseness=-1
Prior work showed that the classic \ac{NN} verification problem 
for a single \reluSym{}-\ac{NN} is an NP-complete problem~\cite{katz_reluplex_2017,DBLP:conf/rp/SalzerL21}.
\begin{textAtEnd}
\subsection{Proofs on NP-completeness}
Prior work demonstrated that finding counterexamples for the specification of a $\reluSym{}$ \ac{NN} is an NP-complete decision problem~\cite{katz_reluplex_2017}:
\begin{definition}[\textsc{NetVerify}]
The problem \textsc{NetVerify} is concerned with the following task:
Given linear constraints $\psi_1\left(\mathbf{x}\right) \equiv C_1 \mathbf{x} \leq \mathbf{b_1}$, $\psi_2\left(\mathbf{y}\right) \equiv C_2 \mathbf{y} \leq \mathbf{b_2}$ and a \reluSym{}-\ac{NN} $f$, find $\mathbf{\hat{x}} \in \mathbb{R}^I, \mathbf{\hat{y}}\in\mathbb{R}^O$ such that $\psi_1\left(\mathbf{\hat{x}}\right),\psi_2\left(\mathbf{\hat{y}}\right)$ and furthermore $\mathbf{\hat{y}} = f\left(\mathbf{\hat{x}}\right)$.
\end{definition}
\noindent
This problem is NP-complete. While the original proof contained a subtle flaw, this was later fixed~\cite{DBLP:conf/rp/SalzerL21}.
The proof of NP-completeness for finding $\varepsilon$ equivalence violation reduces the \textsc{NetVerify} problem to the problem of finding an $\varepsilon$-violation.
While the proof on membership in NP for $\varepsilon$ equivalence~\cite{Teuber2021a} suffers from the same problem as the proof by Katz et al.~\cite{katz_reluplex_2017}, it can be repaired in the manner suggested by Sälzer and Lange~\cite{DBLP:conf/rp/SalzerL21}: Instead of guessing an input violating $\varepsilon$-equivalence directly, we guess a configuration of \reluSym{}-phases and solve the corresponding LP problem.
Unfortunately, Top-1 equivalence is subtly different: While a counterexample for $\varepsilon$ equivalence has a difference $\geq \varepsilon$, a counterexample for Top-1 equivalence requires another node which is \emph{strictly} larger.
Thus, the same reduction is not readily applicable to the case of Top-1 equivalence.
To prove the NP-completeness of finding violations for Top-1 equivalence, we thus begin by showing the NP-completeness of the following, modified, \ac{NN}-verification problem which considers \emph{strict inequalities}:
\begin{definition}[\textsc{StrictNetVerify}]
\label{def:strict_net_verify}
The problem \textsc{StrictNetVerify} is concerned with the following task:
Given linear constraints $\psi_1\left(\mathbf{x}\right) \equiv C_1 \mathbf{x} < \mathbf{b_1}$, $\psi_2\left(\mathbf{y}\right) \equiv C_2 \mathbf{y} < \mathbf{b_2}$ and a \reluSym{}-\ac{NN} $f$, find $\mathbf{\hat{x}} \in \mathbb{R}^I, \mathbf{\hat{y}}\in\mathbb{R}^O$ such that $\psi_1\left(\mathbf{\hat{x}}\right)$ and $\psi_2\left(\mathbf{\hat{y}}\right)$ are satisfied and furthermore $\mathbf{\hat{y}} = g\left(\mathbf{\hat{x}}\right)$.
\end{definition}
\begin{lemma}[\textsc{StrictNetVerify} is NP-complete]
\label{lem:strict_net_verify}
The problem \textsc{StrictNetVerify} is NP-complete.
\end{lemma}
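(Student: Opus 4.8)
The plan is to establish both membership in NP and NP-hardness for \textsc{StrictNetVerify}. For \textbf{membership in NP}, I would follow the repair of Sälzer and Lange~\cite{DBLP:conf/rp/SalzerL21}: instead of guessing a satisfying input $\hat{\mathbf{x}}$ directly, guess the \emph{phase configuration} of every $\reluSym{}$ node in $f$ (one bit per node, hence a polynomial-size certificate). Fixing a node to its active phase (adding $\tilde{\mathbf{x}}^{(i)}_j \geq 0$ and $\mathbf{x}^{(i)}_j = \tilde{\mathbf{x}}^{(i)}_j$) or to its inactive phase (adding $\tilde{\mathbf{x}}^{(i)}_j \leq 0$ and $\mathbf{x}^{(i)}_j = 0$) turns $f$ into an affine map of $\mathbf{x}$ and turns all internal behaviour into a conjunction of non-strict linear inequalities; together with the strict constraints $C_1\mathbf{x} < \mathbf{b}_1$ and $C_2 f(\mathbf{x}) < \mathbf{b}_2$ this yields a mixed strict/non-strict linear feasibility problem over $\mathbf{x}$, which is decidable in polynomial time (e.g.\ by a standard linear program that maximises a bounded slack added to the strict rows). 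Soundness is immediate since the phase equalities force $\hat{\mathbf{y}} = f(\hat{\mathbf{x}})$; completeness holds because for any genuine solution $\hat{\mathbf{x}}$ the true sign pattern of its pre-activations (breaking ties towards the active phase, which is consistent because the $\reluSym{}$ output is then $0$) is a certificate for which the feasibility problem is solvable.

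For \textbf{NP-hardness} I would reduce from \textsc{NetVerify}, which is NP-complete~\cite{katz_reluplex_2017,DBLP:conf/rp/SalzerL21}. The only mismatch is that \textsc{NetVerify} uses non-strict constraints, so the idea is to relax the right-hand sides by a tiny additive slack $\delta > 0$ and then strictify: from $(C_1,\mathbf{b}_1,C_2,\mathbf{b}_2,f)$ output the \textsc{StrictNetVerify} instance $(C_1,\mathbf{b}_1+\delta\mathbf{1},C_2,\mathbf{b}_2+\delta\mathbf{1},f)$. The forward direction is trivial: a solution of the original instance satisfies the relaxed constraints \emph{strictly} because $\delta>0$. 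For the backward direction, recall that $f$ is piece-wise affine, so $\mathbb{R}^I$ decomposes into (exponentially many) closed polyhedral pieces $Q$ on each of which $f(\mathbf{x}) = A_Q\mathbf{x}+\mathbf{c}_Q$; on each piece the original restricted problem is an ordinary linear feasibility system $M_Q\mathbf{x}\leq\mathbf{q}_Q$, and whenever it is infeasible the minimal right-hand-side perturbation $\delta^\ast_Q$ that would restore feasibility is the optimum of a linear program and hence bounded below by $2^{-\mathrm{poly}(s_Q)}$, where $s_Q$ is that program's bit-size. Since $A_Q$, $\mathbf{c}_Q$ and the facets of $Q$ are built from sums and products of the network weights along active paths, $s_Q$ is bounded by a \emph{single} polynomial $p$ in the size $s$ of the \textsc{NetVerify} instance, uniformly over all pieces. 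Choosing $\delta = 2^{-p(s)-1}$ (polynomial bit-size, hence computable in polynomial time) then guarantees that the relaxed problem restricted to any piece $Q$ is feasible only if the original one is; taking the union over pieces gives the desired equivalence of the two instances, and hence the reduction.

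The step I expect to be the main obstacle is exactly this uniform, polynomially bounded choice of $\delta$: one must argue that, although there are exponentially many linear pieces $Q$, the "feasibility margin" of each is bounded below by the \emph{same} inverse-exponential quantity, which rests on the observation that every piece is described by rational data of polynomial bit-size. The remaining ingredients — polynomial-time feasibility of mixed strict/non-strict linear systems, the tie-breaking convention in the NP certificate, and the inverse-exponential lower bound on a positive LP optimum — are routine. Combining the two parts yields that \textsc{StrictNetVerify} is NP-complete.
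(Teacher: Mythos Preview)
Your membership argument is essentially identical to the paper's: guess a phase configuration, linearise, and decide the resulting mixed strict/non-strict system by maximising a slack variable in polynomial time.

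Your hardness argument, however, is genuinely different. The paper does \emph{not} reduce from \textsc{NetVerify}; it goes back to \textsc{3Sat} directly and builds a fresh gadget encoding that works entirely with strict inequalities. Concretely, it clamps inputs via $\mathrm{Bound}(x)=\relu{x}-\relu{x-1}$, enforces near-Booleanity via a repaired gadget $\mathrm{Bool}^\circ(x)=\epsilon-x+\relu{2x-1}$ whose output lies in $(0,\epsilon)$ iff $x\in(0,\epsilon)\cup(1-\epsilon,1)$, and then argues that for a suitably small (but explicitly chosen) $\epsilon<\tfrac{1}{n+3}$ the final $\mathrm{And}$ output separates satisfying from non-satisfying assignments by a strict margin. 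All input and output constraints are manifestly strict, so the whole construction lands in \textsc{StrictNetVerify} without any bit-complexity reasoning.

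Your route---relax $\mathbf{b}_1,\mathbf{b}_2$ by a uniform $\delta$ chosen below the minimum positive infeasibility margin over all linear pieces---is correct: the margin on each piece is the optimum of an LP whose data (the piece constraints, $A_Q$, $\mathbf{c}_Q$) are products and sums of the weight matrices and hence have bit-size polynomial in the instance, so the positive optimum is bounded below by a single $2^{-\mathrm{poly}(s)}$, uniformly over the exponentially many pieces. What you gain is economy (you reuse \textsc{NetVerify} as a black box) at the price of invoking the standard LP bit-bound; what the paper gains is a completely self-contained, constructive reduction with an explicit $\epsilon$ that avoids the piecewise decomposition and the uniform-margin argument altogether. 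Both are valid; the paper's version is arguably cleaner for a reader who does not want to chase the vertex-complexity bound, while yours is the more modular argument.
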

\begin{proof}
For the proof of membership in NP we follow the NP-completeness proof for \textsc{NetVerify}~\cite{DBLP:conf/rp/SalzerL21}:
We guess a phase configuration for all \reluSym{} nodes of our problem and then encode the constraints as a linear program.
We add an additional variable $\delta\geq0$ to our linear program and add $+\delta$ to all strict inequalities.
We can then maximize $\delta$ w.r.t. the linear program which is a problem solvable in polynomial time~\cite{DBLP:conf/stoc/Karmarkar84}.
The problem is feasible iff the maximal value of $\delta$ is larger than $0$.
For the hardness result, we follow the proof strategy by Katz \emph{et al.}~\cite{katz_reluplex_2017}.
However, instead of using the fix from Sälzer and Lange~\cite{DBLP:conf/rp/SalzerL21}, we propose an alternative fix for the \textsc{Bool}-gadget.
The proof works by reducing the problem \textsc{3Sat} to (\textsc{Strict})\textsc{NetVerify}.
Consider a \textsc{3Sat} formula $\phi \equiv \phi_1 \land \dots \land \phi_n$ over $k$ variables where each $\phi_i$ is a disjunction over 3 literals $q_{i1} \lor q_{i2} \lor q_{i3}$.
The problem of determining whether there exists an assignment of the $k$ variables such that $\phi$ is satisfied is a well-known NP-complete problem~\cite{DBLP:conf/stoc/Cook71}.
Katz \emph{et al.}~\cite{katz_reluplex_2017} propose three \emph{gadgets} that allow us to encode negation, disjunction of three literals, and conjunction as \reluSym{}-\acp{NN}:
\begin{align*}
    \mathrm{Neg}\left(q_1\right) &= 1 - q_1\\
    \mathrm{Or}\left(q_1,q_2,q_3\right) &= 1-\relu{1-q_1-q_2-q_3}\\
    \mathrm{And}\left(q_1,\dots,q_n\right) &= \sum_{i=1}^n q_i
\end{align*}
Clearly, if all $q_i$ are in $\left\{0,1\right\}$ then $\mathrm{Neg}$ returns the negation (i.e. $0$ if originally $1$ and $1$ otherwise), $\mathrm{Or}$ returns the disjunction over the three $q_i$ (i.e. it is $1$ if at least one of the $q_i$ is one) and $\mathrm{And}$ returns the conjunction over all $q_i$: It is $n$ iff all $q_i$ are $1$.
Thus, using these gadgets and assuming $k$ inputs from $\left\{0,1\right\}$ it is possible to construct an \ac{NN} of which the output is $n$ iff the given \textsc{3Sat} formula $\phi$ is satisfied.
However, the question now is how to extend this to a convex input set described by strict linear inequalities and how to phrase the output constraint (currently $=n$, i.e. the opposite of a strict inequality).
We choose some $\epsilon < \frac{1}{n+3} < \frac{1}{2}$ (where $n$ is the number of clauses in the \textsc{3Sat} problem).
Here, we diverge from the proof of Katz \emph{et al.}~\cite{katz_reluplex_2017} and Sälzer and Lange~\cite{DBLP:conf/rp/SalzerL21}.
Instead, we fix the input constraint as follows: $\bigwedge_{i=1}^k 0-\epsilon < x_i < 1+\epsilon$.
We then use a new gadget to clamp the values of the inputs to the interval $\left[0,1\right]$ again.
To this end, each input individually is passed through the following function:
\begin{equation*}
    \mathrm{Bound}\left(x\right) = \relu{x}-\relu{x-1}
\end{equation*}
We then apply the output of $\mathrm{Bound}\left(x_i\right)$ to the \reluSym{}-\ac{NN} which computes the output of $\phi$ (see above).
However, this still allows arbitrary values in the interval $\left[0,1\right]$ as input for the NN computing $\phi$.
To mitigate this, we use the following, fixed, $\mathrm{Bool^\circ}$ gadget:
\begin{equation*}
    \mathrm{Bool^\circ}\left(x\right) = \epsilon-x + \relu{2x-1}
\end{equation*}
For $x=0$ and $x=1$ this yields $\mathrm{Bool^\circ}\left(x\right)=\epsilon$.
For $x \in \left(0,\epsilon\right)$ this yields $\epsilon-x \in \left(0,\epsilon\right)$.
For $x \in \left(1-\epsilon,1\right)$ this yields $\epsilon-x+2x-1=\epsilon+x-1 \in \left(0,\epsilon\right)$.
For values in between, i.e. for $x \in \left[\epsilon,1-\epsilon\right]$ the gadget yields either $\epsilon-x \leq 0$ (for $x\geq\epsilon$ and $2x-1\leq0$) or $\epsilon-x+2x-1=\epsilon+x-1 \leq 0$ (for $2x-1> 0$ which implies $1-\epsilon \geq x>0.5$).
We can thus guarantee that all $x$ are in $\left(0,\epsilon\right) \cup \left(1-\epsilon,1\right)$ by making $0 < \mathrm{Bool}^\circ\left(x\right) < \epsilon$ an output constraint for the \ac{NN}.
Given a ``truth-value'' of this kind, the output of $\mathrm{Neg}$ is again in $\left(0,\epsilon\right) \cup \left(1-\epsilon,1\right)$ and the output of $\mathrm{Or}$ is in $\left(0,3\epsilon\right) \cup \left(1-\epsilon,1\right)$.
Thus, all conjunctions are satisfied iff $\mathrm{And}$'s output is in $\left(n\left(1-\epsilon\right),n\right)$.
On the contrary, if at least one clause is not satisfied, the output is in the range $\left(0,(n-1)+3\epsilon\right)$.
To ensure that this interval's upper bound is strictly smaller than the lower bound of ``true'' we must ensure that $n-1+3\epsilon < n-n\epsilon \leftrightarrow \epsilon < \frac{1}{n+3}$, which is already satisfied by our choice of $\epsilon$.
Thus, the output of $\mathrm{And}$ can only be larger $n\left(1-\epsilon\right)$ iff the inputs are assigned with a satisfying solution.
Let $x_1,\dots,x_k$ be the input variables and $h\left(x_1,\dots,x_k\right)$ be the \ac{NN} encoding the formula $\phi$ using the gadgets $\mathrm{Neg},\mathrm{Or},\mathrm{And}$ then we get the following encoding as \textsc{StrictNetVerify}:
\begin{align*}
    \psi_1 \equiv &\bigwedge_{i=0}^{i=k} 0-\epsilon < x_i < 1+\epsilon\\
    \psi_2 \equiv &\bigwedge_{i=0}^{i=k} 0 < \textsc{Bool}^\circ\left(\textsc{Bound}\left(x_i\right)\right) < \epsilon \land\\
    & n\left(1-\epsilon\right) < h\left(\textsc{Bound}\left(x_1\right),\dots,\textsc{Bound}\left(x_k\right)\right)<n
\end{align*}
As argued above, this encoding is satisfiable iff there exists a satisfying assignment for the underlying \textsc{3Sat} problem.
Observe, that this encoding only uses strict inequalities and is in polynomial size of the \textsc{3Sat} instance.
\end{proof}
Based on this modified version of the \ac{NN} verification problem we can now prove the NP-completeness of finding Top-1 equivalence violations:
\end{textAtEnd}
Similarly, the problem of finding a violation for $\varepsilon$ equivalence is NP-complete for \reluSym{}-\acp{NN} 
as the single-\ac{NN} verification problem can be reduced to this setting~\cite{Teuber2021a}.
We now show, that finding a violation of Top-1 equivalence (\textsc{Top-1-Net-Equiv}) is also NP-complete implying that proving absence of counterexamples is coNP-complete (see proof on \cpageref{proof:nn_program_existence}):
\begin{theoremE}[\textsc{Top-1-Net-Equiv} is coNP-complete][end,restate,text link=]
\label{thm:top_1_comp}
    Let $X\subseteq\mathbb{R}^I$ be some polytope over the input space of two $\reluSym{}$-\acp{NN} $f_1,f_2$.
    Deciding whether there exists $\mathbf{x} \in X$ and a $k \in \left[1,O\right]$ s.t. $\left(f_1\left(\mathbf{x}\right)\right)_k \geq \left(f_1\left(\mathbf{x}\right)\right)_i$ for all $i \in \left[1,O\right]$ but for some $j\in\left[1,O\right]$ it holds that $\left(f_2\left(\mathbf{x}\right)\right)_k < \left(f_2\left(\mathbf{x}\right)\right)_j$ is NP-complete.
\end{theoremE}%
{
\renewcommand\proofname{Proof Sketch}
\begin{proof}
Our proof differs from the proof for $\varepsilon$ equivalence in the reduced problem which is the ``classical'' \ac{NN} verification problem~\cite{katz_reluplex_2017,DBLP:conf/rp/SalzerL21} for $\varepsilon$ equivalence verification.
To apply a similar proof technique to Top-1 equivalence, we require an \ac{NN} verification instance with only \emph{strict} inequality constraints in the input and output.
Therefore, we first prove the NP-completeness of verifying \emph{strict} inequality constraints (\Cref{def:strict_net_verify} in \Cref{apx:proofs}) by adapting prior proofs \cite{katz_reluplex_2017,DBLP:conf/rp/SalzerL21}.
Then, we can reduce this problem to Top-1 equivalence verification.
The reduction works by constructing an instance of \textsc{Top-1-Net-Equiv} that has a Top-1 violation iff the violating input also satisfies the constraints of the original NN verification problem.
\end{proof}
}
\begin{proofEnd}
\label{proof:nn_program_existence}
The proof proceeds in a similar manner as previous work on the NP-completeness of finding $\varepsilon$ equivalence violations~\cite{Teuber2021a}, however, we updated the proof on NP-membership based on the new results by Sälzer and Lange~\cite{DBLP:conf/rp/SalzerL21}.
First, observe that the problem is in NP:
Assuming we guessed a \reluSym{} phase configuration (our witness) for all nodes of the \ac{NN}, we can encode the two \acp{NN} as a linear program~\cite{DBLP:conf/rp/SalzerL21}.
For each $k\in\left[1,O\right]$ we can then constrain the program so that $k$ is among the maxima of $f_1$ and perform $(O-1)$ optimizations on the outputs of $f_2$ where we try to find an $x$ which maximizes some output node $j\neq k$ (we do this by maximizing $\left(f_2\left(\mathbf{x}\right)\right)_j-\left(f_2\left(\mathbf{x}\right)\right)_k$ using the appropriate variables from the linear program).
If for any node we obtain a maximum larger 0 we have found a concrete violation.
Since LP optimization is polynomial~\cite{DBLP:conf/stoc/Karmarkar84} and we solve a number of optimization problems linear in the problem size (more specifically in the NN's output dimension), we get that \textsc{Top-1-Net-Equiv} is in NP.

Following up on this, we now propose a reduction from \textsc{StrictNetVerify} (see \Cref{def:strict_net_verify}) to finding Top-1 equivalence violations.
Here, it suffices to construct Top-1 equivalence instances with only 2 output neurons.
Instances for \textsc{StrictNetVerify} are given as a \reluSym{}-\ac{NN} $f$ together with a conjunction of linear constraints over the input (denoted $C_1 \mathbf{x} < \mathbf{b_1}$) and a conjunction of linear constraints over the output (denoted $C_2 \mathbf{y} < \mathbf{b_2}$).
We say the instance is satisfiable iff there exists an $\mathbf{\hat{x}}\in\mathbb{R}^I$ such that $C_1 \mathbf{\hat{x}} < \mathbf{b_1}$ and $C_2 f\left(\mathbf{\hat{x}}\right) < \mathbf{b_2}$.
We now need to construct an instance of \textsc{Top-1-Net-Equiv} based of this \textsc{StrictNetVerify} instance.
We construct a first \ac{NN} $f_1$ with the following outputs $y_{11}, y_{12}$:
\begin{align*}
    y_{1 1} &= \left(f\left(x\right)\right)_1 + 1.0
    & y_{1 2} &= \left(f\left(x\right)\right)_1 - 1.0
\end{align*}
Note, that this \reluSym{}-\ac{NN} can be constructed in polynomial time and by construction $y_{1 1}$ will always be larger than $y_{1 2}$, i.e. $y_{1 1}$ is the maximum.
As an intermediate step, we construct a \reluSym{}-\ac{NN} $\tilde{f}_2\left(x\right)$ which has the following outputs:
\begin{align*}
    C_1 \mathbf{x} - b_1\\
    C_2 f\left(\mathbf{x}\right) - b_2
\end{align*}
Note, that this $\tilde{f}_2$ can also be constructed in polynomial time and furthermore that $\tilde{f}_2\left(\mathbf{x}\right)$ has a strictly negative output \emph{in all components} iff the input $\mathbf{x}$ satisfies the given constraints of the \textsc{NetVerify} instance.
Given two values $a,b\in\mathbb{R}$ we can compute the maximum of the two using the following function which is expressible with $\reluSym{}$:
\begin{align*}
\mathrm{remax}\left(a,b\right) &= a - \relu{b-a}\\ &= \relu{a} - \relu{-a} + \relu{b-a}    
\end{align*}
\noindent
By using a logarithmic number of layers and using a pyramid of $\mathrm{remax}$ functions, we can thus compute the maximum over all outputs of $\tilde{f}_2$.
We call this maximum $d^*$.
The output of the final second \ac{NN} $f_2$ is then:
\begin{align*}
    y_{2 1} &= d^*
    & y_{2 2} &= -d^*
\end{align*}
In this case $y_{2 1} < y_{2 2}$ iff $d^* < -d^*$ which is true iff $d^* < 0$.
We find a $\mathbf{\hat{x}}$ with this property iff it satisfies all strict inequalities $C_1 \mathbf{\hat{x}} < \mathbf{b_1}$ and all strict inequalities $C_2 f\left(\mathbf{\hat{x}}\right) < \mathbf{b_2}$.
Thus, $\mathbf{\hat{x}}$ satisfies the constraints of \textsc{StrictNetVerify} iff it is a violation of Top-1 equivalence as the maximal output of $f_1$ is by construction the first node.
Note, that the constructed \textsc{Top-1-Net-Equiv} instance is polynomial in the size of the original \textsc{StrictNetVerify} instance.
Thus, this reduces the NP-complete \textsc{StrictNetVerify} problem to the problem of finding Top-1 equivalence violations.
\end{proofEnd}
\begin{textAtEnd}
\subsection{Proofs for Differential Verification Methodology}
\end{textAtEnd}

\section{Equivalence Analysis via Zonotopes}
\label{sec:diff_zono}
\looseness=-1
Given two \acp{NN} $f_1,f_2:\mathbb{R}^I \to \mathbb{R}^O$ with $L$ layers each,
we follow the basic principle of differential verification, i.e. we bound the difference $f_1^{(i)}-f_2^{(i)}$ at every layer $1 \leq i \leq L$.
For our presentation, we assume that all layers of the \ac{NN} have the same width, i.e. the same number of nodes.
In practice, this limitation can be lifted by enriching the thinner layer with zero rows~\cite{WANG2024127347}.
Our approach to differential verification is summarized in \Cref{algorithm:diff_zono:overall}:
First and foremost, we perform a classic reachability analysis via Zonotopes for the two \acp{NN} $f_1,f_2$.
To this end, we propagate a given input Zonotope $\zonotope{}_{\text{in}}$ through both \acp{NN} resulting in output Zonotopes $\zonotope{}',\zonotope{}''$.
This part of the analysis uses the well-known Zonotope transformers described in \Cref{prop:background:zono_affine,prop:background:zono_relu}.
The individual reachability analysis is complemented with the computation of the \emph{Differential Zonotope} $\zonotope{}^\Delta$ which is initialized with $0$ (meaning the NN's inputs are initially equal) and computed in lock-step to the computation of $\zonotope{}'$ and $\zonotope{}''$:
At every layer, we overapproximate the maximal deviation between the two \acp{NN}.
Using the transformers described in the remainder of this Section, we can prove that the Differential Zonotope always overapproximates the difference between the two \acp{NN} (see proof on \cpageref{proof:verydiff_verysound}):
\begin{theoremE}[Soundness][end,restate,text link=]
    \label{thm:verydiff_verysound}
    Let $f_1,f_2$ be two feed-forward \textsc{ReLU}-\acp{NN}, $\zonotope{}_{\text{in}}$ some Zonotope mapping $n$ generators to $I$ dimensions and $\zonotope{}',\zonotope{}'',\zonotope{}^\Delta$ the output of $\textsc{Reach}_\Delta\left(f_1,f_2,\zonotope{}_{\text{in}}\right)$.
    The following statements hold for all $\mathbf{v} \in \left[-1,1\right]^n$:
    \begin{enumerate}
        \item $f_1\left(\zonotope{}_{\text{in}}\left(\mathbf{v}\right) \right) \in \langle \zonotope{}'\left(\mathbf{v}\right) \rangle$ and $f_2\left(\zonotope{}_{\text{in}}\left(\mathbf{v}\right) \right) \in \langle \zonotope{}''\left(\mathbf{v}\right) \rangle$
        \item $\big(~f_1\left(\zonotope{}_{\text{in}}\left(\mathbf{v}\right)\right) - f_2\left(\zonotope{}_{\text{in}}\left(\mathbf{v}\right)\right)~\big) \in \langle \zonotope{}^\Delta\left(\mathbf{v}\right) \rangle$
    \end{enumerate}
\end{theoremE}
\begin{proofEnd}
\label{proof:verydiff_verysound}
This proof presumes \Cref{def:diff_affine_form_containment} and the results from \Cref{lemma:diff_zono:zono_affine,lemma:diff_zono:zono_relu}.
The first statement immediately follows from the soundness results on Zonotope-based \ac{NN} verification~\cite{Singh18,bak_improved_2020}.
The second statement can be shown inductively over the number of hidden layers by proving that for all $\mathbf{v} \in [-1,1]^n$
it holds that $\left(\mathbf{x},\mathbf{y}\right) \in \langle \left(\zonotope{}'\left(\mathbf{v}\right),\zonotope{}''\left(\mathbf{v}\right),\zonotope{}^\Delta\left(\mathbf{v}\right)\right) \rangle$ where resp. $\mathbf{x}$ and $\mathbf{y}$ are any value reachable after propagating $\zonotope{}_{\text{in}}(\mathbf{v})$ through the first $i$ hidden layers of resp. $f_1$ and $f_2$.
Initially (i.e. before propagation through the \acp{NN}), the values are equal.
Since we initialized the difference with $0$ and initialized $Z_1$ and $Z_2$ with the same Zonotope this yields $(\zonotope{}_{\text{in}}\left(\mathbf{v}\right),\zonotope{}_{\text{in}}\left(\mathbf{v}\right)) \in \langle \left(\zonotope{}'\left(\mathbf{v}\right),\zonotope{}''\left(\mathbf{v}\right),\zonotope{}^\Delta\left(\mathbf{v}\right)\right) \rangle$.
For induction, we can assume $\left(\mathbf{x},\mathbf{y}\right) \in \langle \left(\zonotope{}'\left(\mathbf{v}\right),\zonotope{}''\left(\mathbf{v}\right),\zonotope{}^\Delta\left(\mathbf{v}\right)\right) \rangle$ for any outputs $\mathbf{x},\mathbf{y}$ of the previous layer reachable from $\zonotope{}_{\text{in}}$.
For both the affine and the ReLU case  \Cref{lemma:diff_zono:zono_affine,lemma:diff_zono:zono_relu} (see remainder of \Cref{sec:diff_zono}) have exactly these assumptions and immediately prove that the newly generated $\hat{\zonotope{}}^\Delta$ satisfies our inductive hypothesis.
Thus, after propagating the Zonotopes through all layers, our induction yields that all values $\left(\mathbf{x},\mathbf{y}\right)$ reachable from $\zonotope{}_{\text{in}}\left(\mathbf{v}\right)$ via $f_1$ and $f_2$ are represented $\langle \left(\zonotope{}'\left(\mathbf{v}\right),\zonotope{}''\left(\mathbf{v}\right),\zonotope{}^\Delta\left(\mathbf{v}\right)\right) \rangle$.
By construction, this implies that that $\big(~f_1\left(\zonotope{}_{\text{in}}\left(\mathbf{v}\right)\right) - f_2\left(\zonotope{}_{\text{in}}\left(\mathbf{v}\right)\right)~\big) \in \langle \zonotope{}^\Delta\left(\mathbf{v}\right) \rangle$
\end{proofEnd}

\begin{corollaryE}[Bound on output difference][all end]
\label{cor:verydiff_tight_bounds}
Let $f_1,f_2$ be two feed-forward \textsc{ReLU}-\acp{NN}, $\zonotope{}_{\text{in}}$ some Zonotope with input dimension $n$ and output dimension $I$ and $\zonotope{}',\zonotope{}'',\zonotope{}^\Delta$ the output of $\textsc{Reach}_\Delta\left(f_1,f_2,\zonotope{}_{\text{in}}\right)$.
For all $\mathbf{v} \in \left[-1,1\right]^n$:
\[
\big(f_1\left(\zonotope{}_{\text{in}}\left(\mathbf{v}\right)\right), f_2\left(\zonotope{}_{\text{in}}\left(\mathbf{v}\right)\right)\big) \in \langle \left(\zonotope{}'\left(\mathbf{v}\right),\zonotope{}''\left(\mathbf{v}\right),\zonotope{}^\Delta\left(\mathbf{v}\right)\right)\rangle.
\]
\end{corollaryE}
\begin{proofEnd}
    This result follows by the same proof as as for \Cref{thm:verydiff_verysound} (note the inductive hypothesis of \Cref{thm:verydiff_verysound}'s proof).
\end{proofEnd}

\begin{algorithm}[t]
    \caption{Verification with Differential Zonotopes}
    \label{algorithm:diff_zono:overall}
    \begin{algorithmic}
    \Require \acp{NN} $g_1,g_2:\mathbb{R}^I \to \mathbb{R}^O$ with $L$ layers, Input-Zonotope $Z_{\text{in}}=\left(G_{\text{in}},b_{\text{in}}\right)$
    \Ensure Reachable Zonotopes $\zonotope{}',\zonotope{}'',\zonotope{}^\Delta$
    \Procedure{$\textsc{Reach}_\Delta$}{$g_1,g_2,Z_{\text{in}}$}
    \State $\zonotope{}' \leftarrow Z_{\text{in}}$
    \State $\zonotope{}'' \leftarrow \text{copy}\left(Z_{\text{in}}\right)$
    \State $\zonotope{}^\Delta \leftarrow \left(0,0\right) \in \mathbb{R}^{I\times I} \times \mathbb{R}^I$
    \Comment{Initialize Differential Zonotope with 0}
    \For{$l \in \left[1,L\right]$}
        \If{layer $l$ is affine}
            \State $\zonotope{}^\Delta \leftarrow \textsc{Affine}_\Delta\left(\zonotope{}^\Delta,\zonotope{}',\zonotope{}'',W^{(l)}_{1/2},\mathbf{b}^{(l)}_{1/2}\right)$\\
            \Comment{See \Cref{lemma:diff_zono:zono_affine}}
            \State $\zonotope{}' \leftarrow \textsc{Affine}\left(\zonotope{}',W^{(l)}_1,\mathbf{b}^{(l)}_1\right)$\\
            \Comment{See Transformation in \Cref{prop:background:zono_affine}}
            \State $\zonotope{}'' \leftarrow \textsc{Affine}\left(\zonotope{}'',W^{(l)}_2,\mathbf{b}^{(l)}_2\right)$\\
            \Comment{See Transformation in \Cref{prop:background:zono_affine}}
        \Else \Comment{Transformation for \reluSym{} Layer}
            \State $\hat{\zonotope{}}',\hat{\zonotope{}}'' \leftarrow \textsc{ReLU}\left(\zonotope{}'\right), \textsc{ReLU}\left(\zonotope{}''\right)$\\
            \Comment{See Transformation in \Cref{prop:background:zono_relu}}
            \State $\zonotope{}^\Delta \leftarrow \textsc{ReLU}_\Delta\left(\zonotope{}^\Delta,\zonotope{}',\zonotope{}'',\hat{\zonotope{}}',\hat{\zonotope{}}''\right)$\\
            \Comment{See \Cref{lemma:diff_zono:zono_relu}}
            \State $\zonotope{}',\zonotope{}'' \leftarrow \hat{\zonotope{}}',\hat{\zonotope{}}''$
        \EndIf
    \EndFor
    \Return $\zonotope{}',\zonotope{}'',\zonotope{}^\Delta$
    \EndProcedure
    
    \end{algorithmic}
\end{algorithm}

\noindent
\looseness=-1
For the descriptions of transformations, we again focus on a single Affine Form.
In this section, we denote the representation of an Affine Form as $\affineForm{}=\left(\mathbf{e},\mathbf{a},c\right)$ where $\mathbf{e}$ are the $n$ original (\emph{e}xact) generators present in an Affine Form of $\left(\zonotope{}_{\text{in}}\right)_i$ and $\mathbf{a}$ are the $p$ (\emph{a}pproximate) generators added via \reluSym{} transformations.
For Differential Affine Forms we split the approximate generators into $\mathbf{a}'^\Delta,\mathbf{a}''^\Delta$ and $\mathbf{a}^\Delta$, 
distinguishing their origin ($\affineForm{}{}'$,$\affineForm{}''$ or generators added to $\affineForm{}^\Delta$ directly).
This yields the Affine Form $\affineForm{}^\Delta = \left(\mathbf{e}^\Delta,\mathbf{a}'^\Delta,\mathbf{a}''^\Delta,\mathbf{a}^\Delta,c^\Delta\right)$.
We assume that the corresponding vectors have equal dimensions (i.e. $\mathbf{e}'$,$\mathbf{e}''$ and $\mathbf{e}^\Delta$ all have dimension $n_1$; $\mathbf{a}'$ and $\mathbf{a}'^\Delta$ have dimension $n_2$ and $\mathbf{a}''$ as well as $\mathbf{a}''^\Delta$ have dimension $n_3$).
The points described by two Affine Forms and a Differential Affine Form are then described via common generator values
$\mathbf{\epsilon_1},
\mathbf{\epsilon_2},
\mathbf{\epsilon_3},
\mathbf{\epsilon_4}$ across all three Affine Forms.
This definition naturally generalizes to the multidimensional (Zonotope) case by fixing all $\mathbf{\epsilon}$ values across dimensions and Zonotopes. 
We also denote the points contained in a tuple $\left(\affineForm{}',\affineForm{}'',\affineForm{}^\Delta\right)$ using $\langle \cdot \rangle$:
\begin{definition}[Points Contained by Differential Affine Form]
\label{def:diff_affine_form_containment}
Given two Affine Forms $\affineForm{}'=(\mathbf{e}',\mathbf{a}',c'), \affineForm{}''=\left(\mathbf{e}'',\mathbf{a}'',c''\right)$ and a Differential Affine Form $\affineForm{}^\Delta = \left(\mathbf{e}^\Delta,\mathbf{a}'^\Delta,\mathbf{a}''^\Delta,\mathbf{a}^\Delta,c^\Delta\right)$ with resp. a matching number of columns, we define the set of points described by $(\affineForm{}',\affineForm{}'',\affineForm{}^\Delta)$ where $n_1,n_2,n_3,n_4$ are resp. the number of columns in $\mathbf{e}^\Delta,\mathbf{a}'^\Delta,\mathbf{a}''^\Delta,\mathbf{a}^\Delta$ and $\overline{n}=n_1+n_2+n_3+n_4$:
\begin{align*}
\langle
(\affineForm{}',\affineForm{}'',\affineForm{}^\Delta)
\rangle &= \\
\Big\{
\left(x,y\right) ~\Big|&~
\exists~
\mathbf{\epsilon_1} \in \left[-1,1\right]^{n_1},~
\mathbf{\epsilon_2} \in \left[-1,1\right]^{n_2},~
\mathbf{\epsilon_3} \in \left[-1,1\right]^{n_3},~
\mathbf{\epsilon_4} \in \left[-1,1\right]^{n_4}~
\\
& x = 
\mathbf{e}' \mathbf{\epsilon_1} + \mathbf{a}' \mathbf{\epsilon_2} + c'~\land~\\
&y = \mathbf{e}''\mathbf{\epsilon_1} + \mathbf{a}''\mathbf{\epsilon_3} + c''~\land\\
&
(x-y) = 
\mathbf{e}' \mathbf{\epsilon_1} + 
\mathbf{a}'^\Delta \mathbf{\epsilon_2} +
\mathbf{a}''^\Delta \mathbf{\epsilon_3} + 
\mathbf{a}^\Delta \mathbf{\epsilon_4} + c^\Delta
\Big\}
\end{align*}
\end{definition}%

\begin{wrapfigure}[20]{r}{0.25\textwidth}
  \centering
  \begin{subfigure}[t]{\linewidth}
  \begin{subfigure}{\linewidth}
  \includegraphics[width=\linewidth]{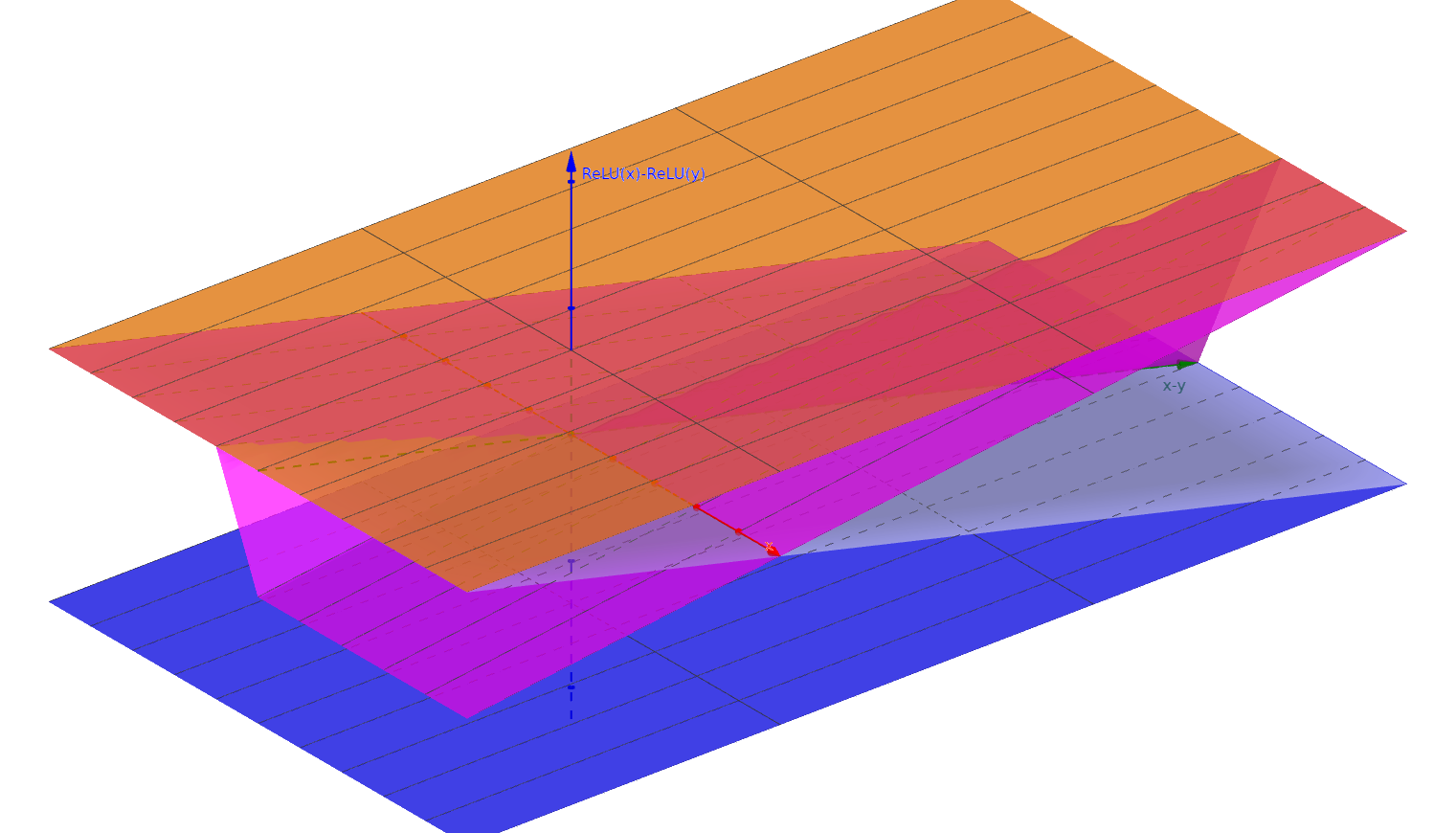}
  \caption{Bounds on difference of two instable neurons.}
  \label{subfig:3d_instable}
  \end{subfigure}
  
  \begin{subfigure}{\linewidth}
  \includegraphics[width=\linewidth]{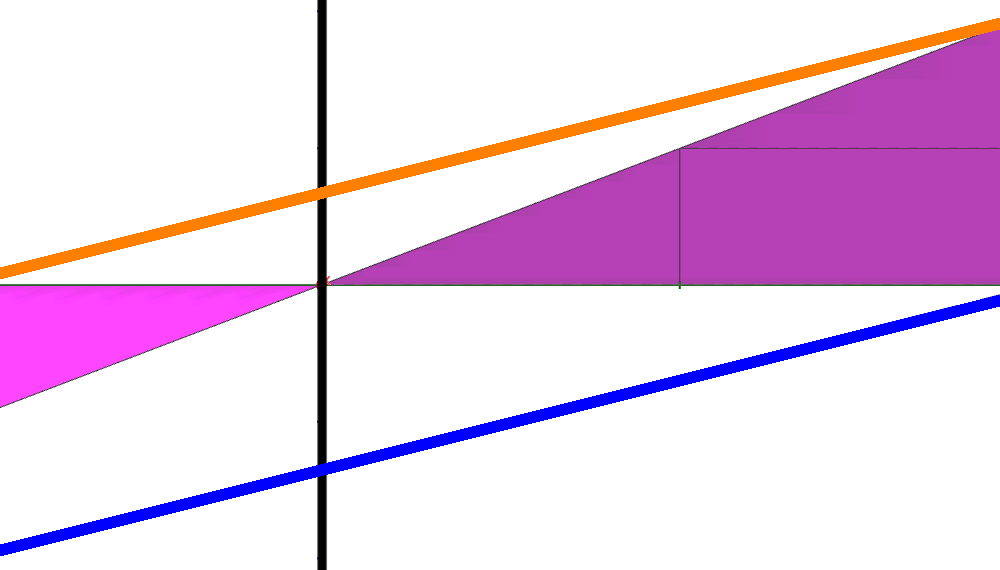}
    \caption{Projection of \Cref{subfig:3d_instable} w.r.t. neuron difference.}
  \label{fig:instable_neurons}
  \end{subfigure}
  \end{subfigure}
  \caption{Visualization for the construction of a new Affine Form via $\textsc{ReLU}_\Delta$}
  \label{fig:relu_delta_visualization}
\end{wrapfigure}
\noindent
Omitting the computation of $\zonotope{}^\Delta$, $\textsc{Reach}_\Delta$ corresponds to computing reachable outputs of $f_1,f_2$ w.r.t. a common input Zonotope $\zonotope{}_{\text{in}}$.
This algorithm implicitly computes the \emph{naive} Differential Affine Form $\affineForm{}' - \affineForm{}''$, i.e. the affine form $\affineForm{}^\Delta = \left(\mathbf{e}'-\mathbf{e}'',\mathbf{a}',-\mathbf{a}'',0,c'-c''\right)$.

\paragraph{Affine Transformations.}
For affine transformations, we construct a transformation based on the insights from Differential Symbolic Bounds~\cite{paulsen_reludiff_2020} that exactly two quantities determine the output difference:
First, the difference accumulated so far (represented by the current Differential Affine Form $\affineForm{}_\Delta$ and scaled by the layer's affine transformation); second, the difference between the affine transformations in the current layer.
Given two affine transformations $w^{(l)}_1 x + b^{(l)}_1$ and $w^{(l)}_2 x + b^{(l)}_2$ and Affine Forms $\affineForm{}'=\left(\mathbf{e}',\mathbf{a}',c'\right),\affineForm{}''=\left(\mathbf{e}'',\mathbf{a}'',c''\right),\affineForm{}^\Delta=\left(\mathbf{e}^\Delta,\mathbf{a}'^\Delta,\mathbf{a}''^\Delta,\mathbf{a}^\Delta,c^\Delta\right)$ we 
propose $\textsc{Affine}_\Delta$ which returns a new Affine Form $\hat{\affineForm{}}^\Delta=\left(\hat{\mathbf{e}}^\Delta,\hat{\mathbf{a}}'^\Delta,\hat{\mathbf{a}}''^\Delta,\hat{\mathbf{a}}^\Delta,\hat{c}^\Delta\right)$ with:
\begin{align}
\label{eq:affine_diff_zono}
\begin{aligned}
    \hat{\mathbf{e}}^\Delta &= w^{(l)}_1 \mathbf{e}^\Delta + \left(w^{(l)}_1-w^{(l)}_2\right) \mathbf{e}''
    &\hat{\mathbf{a}}'^\Delta &= w^{(l)}_1 \mathbf{a}'^\Delta\\
    \hat{\mathbf{a}}''^\Delta &= w^{(l)}_1 \mathbf{a}''^\Delta + \left(w^{(l)}_1-w^{(l)}_2\right) \mathbf{a}'' & \hat{\mathbf{a}}^\Delta &= w^{(l)}_1  \mathbf{a}^{\Delta}\\
    \rlap{$\hat{c}^\Delta = w^{(l)}_1 c^\Delta + \left(w^{(l)}_1-w^{(l)}_2\right) c'' + \left(b^{(l)}_1-b^{(l)}_2\right)$}
\end{aligned}
\end{align}
\looseness=-1
We scale prior differences by $w^{(l)}_1$ and add the new difference $(w^{(l)}_1-w^{(l)}_2)$ to $\hat{\mathbf{e}}^\Delta$, $\hat{\mathbf{a}}''$ (scaled by the reachable values of $f_2$, i.e. by $\affineForm{}''$) and $\hat{c}^\Delta$ (and resp. $(b^{(l)}_1-b^{(l)}_2)$).
The transformation is sound (proof on \cpageref{proof:diff_zono:zono_affine} generalizes to Zonotopes):
\begin{lemmaE}[Soundness of $\textsc{Affine}_\Delta$][end,restate,text link=]
\label{lemma:diff_zono:zono_affine}
    For affine transformations $\alpha_1\left(x\right)=w^{(l)}_1 x + b^{(l)}_1$ and $\alpha_2\left(x\right)=w^{(l)}_2 x + b^{(l)}_2$ and Affine Forms
    $\affineForm{}'=\left(\mathbf{e},\mathbf{a}',c'\right),\affineForm{}''=\left(\mathbf{e}'',\mathbf{a}'',\mathbf{c}''\right),\affineForm{}^\Delta=\left(\mathbf{e}^\Delta,\mathbf{a}^\Delta,\mathbf{a}^\Delta,\mathbf{a}^\Delta,c^\Delta\right)$ the transformation $\text{Affine}_\Delta$ is sound, i.e. if $\hat{\affineForm{}}^\Delta$ is the result of $\textsc{Affine}_\Delta$, %
    then for all $d \leq n$ and $\mathbf{v} \in \left[-1,1\right]^d$
    with $(x,y) \in \langle (\affineForm{}'\left(\mathbf{v}\right),\affineForm{}''\left(\mathbf{v}\right),\affineForm{}^\Delta\left(\mathbf{v}\right)) \rangle$ 
    and $\hat{\affineForm{}}',\hat{\affineForm{}}''$ the outputs of $\textsc{Affine}$ (\Cref{prop:background:zono_affine}) we get:
    \[
    \left(\alpha_1\left(x\right),\alpha_2\left(y\right)\right)
    \in 
    \langle (\tilde{z}_1\left(v\right),\tilde{z}_2\left(v\right),\tilde{z}_\Delta\left(v\right)) \rangle
    \]
\end{lemmaE}
\begin{proofEnd}
\label{proof:diff_zono:zono_affine}
    We perform the proof w.r.t. the Matrix representation of Zonotopes ($\zonotope{}=\left(G,\mathbf{c}\right)$) and weights ($W$) -- this obviously implies the one dimensional case.
    Let $\left(\mathbf{x},\mathbf{y}\right) \in \langle (\zonotope{}'\left(\mathbf{v}\right),\zonotope{}''\left(\mathbf{v}\right),\zonotope{}^\Delta\left(\mathbf{v}\right)) \rangle$, then we have $\mathbf{\epsilon}_1,\mathbf{\epsilon}_2,\mathbf{\epsilon}_3,\mathbf{\epsilon}_4$ representing $\mathbf{x},\mathbf{y}$ and $\mathbf{x}-\mathbf{y}$ w.r.t. $\zonotope{}'\left(\mathbf{v}\right),\zonotope{}''\left(\mathbf{v}\right)$ and $\zonotope{}^\Delta\left(\mathbf{v}\right)$.
    The value of a vector $\mathbf{v}$ can be integrated by moving the the columns multiplied by $v$ into the Zonotope's bias.
    \begin{align*}
        & \left(W^{(l)}_1 \mathbf{x} + \mathbf{b}^{(l)}_1\right) - \left(W^{(l)}_2 \mathbf{y} + \mathbf{b}^{(l)}_2\right)\\
        =& \left(W^{(l)}_1 \mathbf{x} - W^{(l)}_2 \mathbf{y}\right) + \left(\mathbf{b}^{(l)}_1 - \mathbf{b}^{(l)}_2\right)\\
        =& \left(W^{(l)}_1 (\mathbf{y}+(\mathbf{x}-\mathbf{y})) - W^{(l)}_2 \mathbf{y})\right) + \left(\mathbf{b}^{(l)}_1 - \mathbf{b}^{(l)}_2\right)\\
        =& \left(\left(W^{(l)}_1 - W^{(l)}_2\right) \mathbf{y} + W^{(l)}_1 (\mathbf{x}-\mathbf{y}))\right) + \left(\mathbf{b}^{(l)}_1 - \mathbf{b}^{(l)}_2\right)\\
        =& \left(W^{(l)}_1 - W^{(l)}_2\right) \underbrace{\mathbf{y}}_{\in \langle \zonotope{}''\left(\mathbf{v}\right) \rangle} + \left(\mathbf{b}^{(l)}_1 - \mathbf{b}^{(l)}_2\right) + W^{(l)}_1 \underbrace{(\mathbf{x}-\mathbf{y})}_{\in \zonotope{}^\Delta\left(\mathbf{v}\right)}\\
        \intertext{Thus, we can replace $\mathbf{y}$ and $\mathbf{x}-\mathbf{y}$ by their representation w.r.t. generators. We will now use this fact to derive that the difference after application of an affine transformation is contained in the newly constructed $\hat{\zonotope{}}^\Delta$. To this end, first note that we can rewrite the formula as follows:}\\
        =&
        \left(W^{(l)}_1 - W^{(l)}_2\right) \left(E''\mathbf{\epsilon}_1 + A''\mathbf{\epsilon}_3 + \mathbf{c}''\right) +
        \left(\mathbf{b}^{(l)}_1 - \mathbf{b}^{(l)}_2\right)~+\\
        &W^{(l)}_1 
        \left(E^\Delta\mathbf{\epsilon}_1 + A'^\Delta\mathbf{\epsilon}_2 + A''^\Delta\mathbf{\epsilon}_3 + A^\Delta\mathbf{\epsilon}_4 + \mathbf{c}^\Delta\right)\\
        \intertext{Sorting the components by $\mathbf{\epsilon}_i$, this yields:}
        =&
        \left(\left(W^{(l)}_1 - W^{(l)}_2\right)E'' + W^{(l)}_1 E^\Delta\right) \mathbf{\epsilon}_1~+ \\
        &  W^{(l)}_1 A'^\Delta \mathbf{\epsilon}_2~+\\
        & \left(\left(W^{(l)}_1 - W^{(l)}_2\right)A'' + W^{(l)}_1 A''^\Delta\right) \mathbf{\epsilon}_3~+ \\
        &  W^{(l)}_1 A^\Delta \mathbf{\epsilon}_4~+\\
        & \left(W^{(l)}_1 - W^{(l)}_2\right) \mathbf{c}'' + 
        \left(\mathbf{b}^{(l)}_1 - \mathbf{b}^{(l)}_2\right) + 
        W^{(l)}_1 \mathbf{c}^\Delta\\
    \end{align*}
    The latter formula is obviously included in $\hat{\zonotope{}}^\Delta$ constructed according to the rules in \Cref{eq:affine_diff_zono}.
    The inclusion of outputs w.r.t. $\hat{\zonotope{}}_1$ and $\hat{\zonotope{}}_2$ is ensured via \Cref{prop:background:zono_affine}, i.e. for example $W_1^{(l)} \mathbf{x} + b_1^{(l)} = W_1^{(l)} (E' \mathbf{\epsilon}_1 + A' \mathbf{\epsilon}_2 + \mathbf{c}') + \mathbf{b}_1^{(l)} = 
     W_1^{(l)} E' \mathbf{\epsilon}_1 + W_1^{(l)} A' \mathbf{\epsilon}_2 + (\mathbf{c}' + \mathbf{b}_1^{(l)})$.
\end{proofEnd}

\paragraph{\reluSym{} Transformations.}
\looseness=-1
Since \reluSym{} is piece-wise linear, we cannot hope to construct an exact transformation for this case.
However, by carefully distinguishing the possible cases (e.g. both nodes are solely negative, one is negative one is instable, etc.) we provide linear representations for 6 out of the 9 cases (both nodes stable or a negative and an instable node) while overapproximating the other three.
Given $\affineForm{}'=\left(\mathbf{e}',\mathbf{a}',c'\right),\affineForm{}''=\left(\mathbf{e}'',\mathbf{a}'',c''\right)$ and a Differential Affine Form $\affineForm{}^\Delta=\left(\mathbf{e}^\Delta,\mathbf{a}'^\Delta,\mathbf{a}''^\Delta,\mathbf{a}^\Delta,c^\Delta\right)$
as well as $\hat{\affineForm{}}',\hat{\affineForm{}}''$ (the result of applying $\reluSym{}$ in the individual \acp{NN}), we compute the result of the $\reluSym{}_\Delta$ transformation, using the case distinctions from \Cref{tab:relu_transform}.
The transformation is sound (see \cpageref{proof:diff_zono:zono_relu}):
\begin{lemmaE}[
Soundness of $\reluSym{}_\Delta$][end,restate,text link=]
\label{lemma:diff_zono:zono_relu}
Consider Affine Form $\affineForm{}'=\left(\mathbf{e},\mathbf{a}',c'\right),\affineForm{}''=\left(\mathbf{e}'',\mathbf{a}'',\mathbf{c}''\right)$ and a Differential Affine Form $\affineForm{}^\Delta=\left(\mathbf{e}^\Delta,\mathbf{a}'^\Delta,\mathbf{a}''^\Delta,\mathbf{a}^\Delta,c^\Delta\right)$.
Let $\hat{\affineForm{}}'$/$\hat{\affineForm{}}''$ be the result of the $\textsc{ReLU}$ transformation on $\affineForm{}'$/$\affineForm{}''$ (see \Cref{prop:background:zono_relu}).
Define $\hat{\affineForm{}}^\Delta=\left(\hat{\mathbf{e}}^\Delta,\hat{\mathbf{a}}^\Delta,\hat{\mathbf{a}}^\Delta,\hat{\mathbf{a}}^\Delta,c^\Delta\right)$ such that it matches the case distinctions in \Cref{tab:relu_transform}.
Then $\hat{\affineForm{}}^\Delta$  overapproximates the behavior of $\reluSym{}$, i.e. for all $d \leq n$ and $\mathbf{v} \in \left[-1,1\right]^d$ with  $(x,y) \in \langle (\affineForm{}'\left(\mathbf{v}\right),\affineForm{}''\left(\mathbf{v}\right),\affineForm{}^\Delta\left(\mathbf{v}\right)) \rangle$ we get that:
\[
\left(\relu{x},\relu{y}\right) \in \langle (\hat{\affineForm{}}'\left(\mathbf{v}\right),\hat{\affineForm{}}''\left(\mathbf{v}\right),\hat{\affineForm{}}^\Delta\left(\mathbf{v}\right)) \rangle
\]
\end{lemmaE}%
\begin{proofEnd}
\label{proof:diff_zono:zono_relu}
Given arbitrary $x,y$ as constrained above, we fix $\mathbf{\epsilon}_1,\mathbf{\epsilon}_2,\mathbf{\epsilon}_3,\mathbf{\epsilon}_4$ such that 
$
x = \mathbf{e}' \mathbf{\epsilon}_1 + \mathbf{a}' \mathbf{\epsilon}_2 + c'
$,
$y=\mathbf{e}'' \mathbf{\epsilon}_1 + \mathbf{a}'' \mathbf{\epsilon}_3 + c''$,
and
$x-y =  \mathbf{e}^\Delta \mathbf{\epsilon}_1 + \mathbf{a}'^\Delta \mathbf{\epsilon}_2 + \mathbf{a}''^\Delta \mathbf{\epsilon}_3 + \mathbf{a}^\Delta \mathbf{\epsilon}_4 + \mathbf{c}^\Delta$.
Then, we know via \Cref{prop:background:zono_relu} (and its proof) for the Zonotopes $\hat{\affineForm{}}',\hat{\affineForm{}}''$ that there are $\mathbf{\epsilon}_2^+,\mathbf{\epsilon}_3^+\in\left[-1,1\right]$ such that for $\hat{\mathbf{\epsilon}}_i = \left(\mathbf{\epsilon}_i^T~\mid~\mathbf{\epsilon}_{i}^+\right)^T$ with $i \in\left\{2,3\right\}$ it holds that
$
\relu{x}=
\hat{\mathbf{e}}' \mathbf{\epsilon}_1 + \hat{\mathbf{a}}' \hat{\mathbf{\epsilon}}_2 + \hat{c}'$
and
$\relu{y}=
\hat{\mathbf{e}}'' \mathbf{\epsilon}_1
+ \hat{\mathbf{a}}'' \mathbf{\epsilon}_3 + \hat{c}''$
(via $\affineForm{}'\left(\mathbf{\epsilon}_1\right)$/$\affineForm{}''\left(\mathbf{\epsilon}_1\right)$).
Note, that $\hat{\mathbf{\epsilon}}_i$ and $\mathbf{\epsilon}$ are equal on its common prefix of dimensions
It remains to show that $\relu{x}-\relu{y}$ is inside $\hat{\affineForm{}}^\Delta$ for the fixed values of $\hat{\mathbf{\epsilon}}_1=\mathbf{\epsilon}_1$ and $\hat{\mathbf{\epsilon}}_2,\hat{\mathbf{\epsilon}}_3$ as well as $\hat{\mathbf{\epsilon}}_4$ which will be case dependent:
For the first six cases, we set $\hat{\mathbf{\epsilon}}_4=\mathbf{\epsilon}_4$. For the remaining three cases, $\hat{\mathbf{\epsilon}}_4$ and $\mathbf{\epsilon}_4$ are equal on the common prefix of dimensions, but $\hat{\mathbf{\epsilon}}_4$ is extended by an additional dimension as explained below.
We perform a case distinction over the 9 cases of \Cref{tab:relu_transform}.
Where the dimension does not match for $\mathbf{a}'$/$\mathbf{a}''$ in comparison to $\mathbf{a}'^\Delta$/$\mathbf{a}''^\Delta$ we can silently append zeros.
\begin{itemize}
    \item[--,--]
    All $x,y$ are negative. This yields:
    \[
    \relu{x}-\relu{y}=0-0=0\in\langle\left(0,0,0,0,0\right)\rangle
    \]
    \item[--,+]
    All $x$ are negative while all $y$ (with $y \in \langle \left(\mathbf{e}'',\mathbf{a}'',c''\right) \rangle$) are positive. This yields:
    \begin{align*}
    &\relu{x}-\relu{y}=0-y~=\\
    =~&-\mathbf{e}'' \mathbf{\epsilon}_1 - \mathbf{a}'' \mathbf{\epsilon}_3 - c''
    \in\langle\left(-\mathbf{e}'',0,-\mathbf{a}'',0,-c''\right)\rangle)
    \end{align*}
    \item[+,--]
    All $x$ are positive (with $x \in \langle \left(\mathbf{e}',\mathbf{a}',c'\right) \rangle$) while all $y$ are negative. This yields:
    \begin{align*}
    &\relu{x}-\relu{y}=x-0~=\\
    =~&\mathbf{e}' \mathbf{\epsilon}_1 + \mathbf{a}' \mathbf{\epsilon}_2 + c'
    \in\langle\left(\mathbf{e}',-\mathbf{a}',0,0,+c'\right)\rangle)
    \end{align*}
    \item[+,+]
    All $x$ and $y$ are positive with $\left(x-y\right) \in \langle\left( \mathbf{e}^\Delta,\mathbf{a}^\Delta,\mathbf{a}^\Delta,\mathbf{a}^\Delta,c^\Delta \right)\rangle$. This yields by assumption:
    \[
    \relu{x}-\relu{y}=x-y\in\langle\left( \mathbf{e}^\Delta,\mathbf{a}^\Delta,\mathbf{a}^\Delta,\mathbf{a}^\Delta,c^\Delta \right)\rangle
    \]
    \item[$\sim$,--]
    $x$ is instable while all $y$ are negative with $\relu{x}\in\langle\left(\hat{\mathbf{e}}',\hat{\mathbf{a}}',\hat{c}'\right)\rangle$. This yields:
    \begin{align*}
    &\relu{x}-\relu{y}=\relu{x}-0~=\\
    =~&
    \hat{\mathbf{e}}' \hat{\mathbf{\epsilon}}_1 + \hat{\mathbf{a}}' \hat{\mathbf{\epsilon}}_2 + \hat{c}'%
    \in\langle\left(
    \hat{\mathbf{e}}',\hat{\mathbf{a}}',0,0,\hat{c}'
    \right)\rangle
    \end{align*}
    \item[--,$\sim$]
    $x$ is negative while all $y$ are instable with $\relu{y}\in\langle\left(\hat{\mathbf{e}}'',\hat{\mathbf{a}}'',\hat{c}''\right)\rangle$. This yields
    \begin{align*}
    &\relu{x}-\relu{y}=0-\relu{y}~=\\
    =~&
    -\hat{\mathbf{e}}'' \hat{\mathbf{\epsilon}}_1 - \hat{\mathbf{a}}'' \hat{\mathbf{\epsilon}}_3 - \hat{c}'%
    \in\langle\left(
    -\hat{\mathbf{e}}'',
    0,
    -\hat{\mathbf{a}}'',0,-\hat{c}''
    \right)\rangle
    \end{align*}
    \item[$\sim$,+]
    $x$ is instable while all $y$ are positive with $\left(x-y\right) \in \langle\left(
    \mathbf{e}^\Delta,\mathbf{a}'^\Delta,\mathbf{a}''^\Delta,\mathbf{a}^\Delta,c^\Delta
    \right)\rangle$ and $\relu{x}\in\langle\left(\hat{\mathbf{e}}',\hat{\mathbf{a}}',\hat{c}'\right)\rangle$. This yields:
    \begin{align*}
    &\relu{x}-\relu{y}~=\\
    =~&\relu{x}-(x-(x-y))~=\\
    =~&(x-y)+\max\left(0,-x\right)=\\
    =~&
    \mathbf{e}^\Delta \mathbf{\epsilon}_1
    + \mathbf{a}'^\Delta \mathbf{\epsilon}_2
    + \mathbf{a}''^\Delta \mathbf{\epsilon}_3
    + \mathbf{a}^\Delta \mathbf{\epsilon}_4
    + c^\Delta + \max\left(0,-x\right)
    \end{align*}
    By using the classical Zonotope construction for $\max$, we get for $\lambda' = \frac{-\zonoLower{\affineForm{}'}}{\zonoUpper{\affineForm{}'}-\zonoLower{\affineForm{}'}}$ that $\max\left(0,-x\right)$ is contained in the following Affine Form:
    \[
    \langle \left( -\lambda' \mathbf{e}', \left(-\lambda' (\mathbf{a})'^T \mid 0.5*\lambda'\zonoUpper{\affineForm{}'}\right)^T, -\lambda' c' +0.5*\lambda'\zonoUpper{\affineForm{}'}\right)\rangle.
    \]
    Note, that for this formulation we can then reuse $\hat{\mathbf{\epsilon}}_1,\hat{\epsilon}_2$ and can turn it into a correct representation of $\max\left(0,-x\right)$ by choosing an appropriate value for the newly introduced generator.
    Since the new generator is \emph{only} relevant for the Differential Affine Form, we append it to $\hat{\mathbf{a}}^\Delta$ and choose an appropriate ${\mathbf{\epsilon}}_4^+\in\left[-1,1\right]$.
    Since the Affine Form above bounds all values of $\max\left(0,-x\right)$ for $x$ inside $\langle \affineForm{}' \rangle$ and fixed $\hat{\mathbf{\epsilon}}_1,\hat{\mathbf{\epsilon}}_2$, we know that such a fresh value ${\mathbf{\epsilon}}_4^+$ exists based on the intermediate value theorem.
    $(x-y)-\min\left(0,x\right)$ then is contained in the Affine Form
    \begin{align*}
    \rlap{$
    \langle \left(
    \hat{\mathbf{e}}^\Delta,
    \hat{\mathbf{a}}'^\Delta,
    \hat{\mathbf{a}}''^\Delta,
    \hat{\mathbf{a}}^\Delta,
    \hat{c}^\Delta
    \right)\rangle
    $}\\
    \intertext{with:}
    \hat{\mathbf{e}}^\Delta &=
    \mathbf{e}^\Delta - \lambda' \mathbf{e}',\\
    \hat{\mathbf{a}}'^\Delta &=
    \mathbf{a}'^\Delta - \lambda' \mathbf{a}',\\
    \hat{\mathbf{a}}''^\Delta &=
    \mathbf{a}''^\Delta,\\
    \hat{\mathbf{a}}^\Delta &=
    ( (\mathbf{a}^{\Delta})^T \mid 0.5*\lambda'\zonoUpper{\affineForm{}'} )^T,\\
    \hat{c} &=
    c^\Delta - \lambda' c' + 0.5*\lambda'\zonoUpper{\affineForm{}'}.
    \end{align*}

    \item[+,$\sim$]
    $y$ is instable while all $x$ are positive with $\left(x-y\right) \in \langle\left(
    \mathbf{e}^\Delta,\mathbf{a}'^\Delta,\mathbf{a}''^\Delta,\mathbf{a}^\Delta,c^\Delta
    \right)\rangle$ and $\relu{y}\in\langle\left(\hat{\mathbf{e}}'',\hat{\mathbf{a}}'',\hat{c}''\right)\rangle$. This yields:
    \begin{align*}
    &\relu{x}-\relu{y}~=\\
    =~&((x-y)+y)-\relu{y}~=\\
    =~&(x-y)-\max\left(0,-y\right)
    \end{align*}
    By the same argument as above (switching $\hat{\mathbf{\epsilon}}_2$ with $\hat{\mathbf{\epsilon}}_3$ and considering the switched sign of $\max\left(0,-y\right)$) this yields that $(x-y)+\max\left(0,-y\right)$ is in the following set:
    \begin{align*}
    \rlap{$
    \langle \left(
    \hat{\mathbf{e}}^\Delta,
    \hat{\mathbf{a}}'^\Delta,
    \hat{\mathbf{a}}''^\Delta,
    \hat{\mathbf{a}}^\Delta,
    \hat{c}^\Delta
    \right)\rangle
    $}\\
    \intertext{with:}
    \hat{\mathbf{e}}^\Delta &=
    \mathbf{e}^\Delta + \lambda'' \mathbf{e}''\\
    \hat{\mathbf{a}}'^\Delta &=
    \mathbf{a}'^\Delta\\
    \hat{\mathbf{a}}''^\Delta &=
    \mathbf{a}''^\Delta + \lambda'' \mathbf{a}''\\
    \hat{\mathbf{a}}^\Delta &=
    ( (\mathbf{a}^{\Delta})^T \mid 0.5*\lambda''\zonoUpper{\affineForm{}''} )^T\\
    \hat{c} &=
    c^\Delta + \lambda'' c'' - 0.5*\lambda''\zonoUpper{\affineForm{}''}
    \end{align*}
    With $\lambda'' = \frac{-\zonoLower{\affineForm{}''}}{\zonoUpper{\affineForm{}''}-\zonoLower{\affineForm{}''}}$
    \item[$\sim$,$\sim$]
    Both $x$ and $y$ are instable with $\left(x-y\right) \in \langle\left(
    \mathbf{e}^\Delta,\mathbf{a}'^\Delta,\mathbf{a}''^\Delta,\mathbf{a}^\Delta,c^\Delta
    \right)\rangle$.
    This yields:
    \begin{align}
    &\relu{x}-\relu{y}~= \nonumber\\
    =~&
    \relu{x}-\relu{x-(x-y)}~= \nonumber\\
    =~&\begin{cases}
        (x-y) & x \geq 0 \land x \geq (x-y)\\
        x & x \geq 0 \land x < (x-y)\\
        (x-y)-x & x< 0 \land x \geq (x-y)\\
        0 & x < 0 \land x < (x-y)
    \end{cases}\label{eq:diff_relu:relu_cases}
    \end{align}
    For this case, we need to construct an Affine Form with an additional generator to take account of the outputs' nonlinearities.
    We prove that the lower/upper bound of the constructed Affine Form is always below/above the function values.
    Since the Affine Form contains all points between the linear lower/upper bound considered here, this ensures that our output always lies within the output Affine Form.
    Moreover, we note that, since we consider lower and upper bounds, according to the intermediate value theorem we can choose a suitable assignment for the fresh generator $\mathbf{\epsilon}_4^+$ for given assignments of $\hat{\mathbf{\epsilon}}_1,\hat{\mathbf{\epsilon}}_2,\hat{\mathbf{\epsilon}}_3,\hat{\mathbf{\epsilon}}_4$ as the lower/upper bound properties shown below imply that for a given assignment there must exist a concrete $\mathbf{\epsilon}_4^+$ value such that the Affine Form's equation \emph{exactly} represents the difference $(x-y)$.

    The Affine Form's tuple in this case reads as follows:
    \begin{align*}
    \rlap{$
    \langle \left(
    \hat{\mathbf{e}}^\Delta,
    \hat{\mathbf{a}}'^\Delta,
    \hat{\mathbf{a}}''^\Delta,
    \hat{\mathbf{a}}^\Delta,
    \hat{c}^\Delta
    \right)\rangle
    $}\\
    \intertext{with:}
    \hat{\mathbf{e}}^\Delta &=
    \lambda^\Delta \mathbf{e}^\Delta\\
    \hat{\mathbf{a}}'^\Delta &=
    \lambda^\Delta \mathbf{a}'^\Delta\\
    \hat{\mathbf{a}}''^\Delta &=
    \lambda^\Delta \mathbf{a}''^\Delta\\
    \hat{\mathbf{a}}^\Delta &=
    ( (\lambda^\Delta \mathbf{a}^{\Delta})^T \mid \mu^\Delta )^T\\
    \hat{c} &=
    \lambda^\Delta c^\Delta + \nu^\Delta - \mu^\Delta\\
    \lambda^\Delta &= \mathrm{clamp}\left(\frac{\zonoUpper{\affineForm{}^\Delta}}{\zonoUpper{\affineForm{}^\Delta}-\zonoLower{\affineForm{}^\Delta}},0,1\right)\\
    \mu^\Delta &= 0.5*\max\left(-\zonoLower{\affineForm{}^\Delta},\zonoUpper{\affineForm{}^\Delta}\right)\\
    \nu^\Delta &= \lambda^\Delta * \max\left(0,-\zonoLower{\affineForm{}^\Delta}\right)
    \end{align*}
    And this can be expressed as a bound on $\delta = \relu{x}-\relu{x-(x-y)}$:
    \begin{equation*}
    \lambda^\Delta (x-y) + \nu^\Delta - 2\mu^\Delta \leq \delta \leq  \lambda^\Delta (x-y) + \nu^\Delta
    \end{equation*}

    Our proof procedes as follows:
    First, we prove that the new Zonotope bounds the equation $(x-y)$ everywhere.
    Then we prove that the new Zonotope also bounds $0$ everywhere.
    Finally, we show that the bounds also apply to the cases 2 and 3 of \Cref{eq:diff_relu:relu_cases}.

    \paragraph{Lower Bound for $\mathbf{\left(x-y\right)}$.}
    We now show that the Affine Form's lower bound is indeed a bound for $\left(x-y\right)$.
    The Affine Form's lower bound reads
    \[
    \lambda^\Delta \underbrace{(x-y)}_{\in \left[\zonoLower{\affineForm{}^\Delta},\zonoUpper{\affineForm{}^\Delta}\right]} + \lambda^\Delta \max\left(0,-\zonoLower{\affineForm{}^\Delta}\right) - \max\left(-\zonoLower{\affineForm{}^\Delta},\zonoUpper{\affineForm{}^\Delta}\right)
    \]
    Case 1: $\zonoLower{\affineForm{}^\Delta} \geq 0$\\
    In this case, the formula reduces to
    \[
    \underbrace{\lambda^\Delta}_{\in\left[0,1\right]} \underbrace{(x-y)}_{\in \left[\zonoLower{\affineForm{}^\Delta},\zonoUpper{\affineForm{}^\Delta}\right]} - \zonoUpper{\affineForm{}^\Delta}.
    \]
    Since we know that $\zonoUpper{\affineForm{}^\Delta}$ must also be positive, this guarantees that the bound is smaller than $\left(x-y\right)$.\\[1em]
    Case 2: $\zonoLower{\affineForm{}^\Delta} < 0$\\
    Case 2.1.: $-\zonoLower{\affineForm{}^\Delta} \geq \zonoUpper{\affineForm{}^\Delta}$\\
    In this case, the formula reduces to
    \[
    \underbrace{\lambda^\Delta}_{\in\left[0,1\right]}
    (
        \underbrace{(x-y)}_{\in \left[\zonoLower{\affineForm{}^\Delta},\zonoUpper{\affineForm{}^\Delta}\right]}
        -\zonoLower{\affineForm{}^\Delta}
    )
    +\zonoLower{\affineForm{}^\Delta}.
    \]
    This equation is linear in $\left(x-y\right)$ and we thus consider the endpoints of its interval bounds:
    For $\left(x-y\right)=\zonoLower{\affineForm{}^\Delta}$ the formula reduces to $\zonoLower{\affineForm{}^\Delta}$ which is a valid bound.
    For $\left(x-y\right)=\zonoUpper{\affineForm{}^\Delta}$ the formula reduces to $\lambda^\Delta \left(\zonoUpper{\affineForm{}^\Delta} - \zonoLower{\affineForm{}^\Delta}\right)+\zonoLower{\affineForm{}^\Delta}$.
    Note, that for $\lambda^\Delta\in\left\{0,1\right\}$ this is a valid bound for $\left(x-y\right)=\zonoUpper{\affineForm{}^\Delta}$.
    For the remaining case ($\lambda^\Delta = \frac{\zonoUpper{\affineForm{}^\Delta}}{\zonoUpper{\affineForm{}^\Delta}-\zonoLower{\affineForm{}^\Delta}}$), the formula reduces to $\zonoUpper{\affineForm{}^\Delta}+\zonoLower{\affineForm{}^\Delta}$ which is a valid bound because $\zonoLower{\affineForm{}^\Delta}<0$ (Case 2).\\[1em]
    Case 2.2: $-\zonoLower{\affineForm{}^\Delta} < \zonoUpper{\affineForm{}^\Delta}$\\
    In this case, we know that $\zonoUpper{\affineForm{}^\Delta}>-\zonoLower{\affineForm{}^\Delta}>0$ (Case 2.2 and Case 2) and the formula reduces to
    \[
    \underbrace{\lambda^\Delta}_{\in\left[0,1\right]}
    (
        \underbrace{(x-y)}_{\in \left[\zonoLower{\affineForm{}^\Delta},\zonoUpper{\affineForm{}^\Delta}\right]}
        -\zonoLower{\affineForm{}^\Delta}
    )
    -\zonoUpper{\affineForm{}^\Delta}.
    \]
    We again consider the maxima w.r.t. $\left(x-y\right)$:
    For $\left(x-y\right)=\zonoLower{\affineForm{}^\Delta}$ the formula reduces to $-\zonoUpper{\affineForm{}^\Delta}$ which  (due to Case 2.2) is smaller than $\zonoUpper{\affineForm{}^\Delta} = \left(x-y\right)$.
    For $\left(x-y\right)=\zonoUpper{\affineForm{}^\Delta}$ the bound becomes $\lambda^\Delta \left(\zonoUpper{\affineForm{}^\Delta} - \zonoLower{\affineForm{}^\Delta}\right) - \zonoUpper{\affineForm{}^\Delta}$ which can be rewritten as
    $\left(\lambda^\Delta - 1\right) \zonoUpper{\affineForm{}^\Delta} - \lambda^\Delta \zonoLower{\affineForm{}^\Delta}$ which is a valid bound for $\lambda^\Delta\in\left\{0,1\right\}$.
    For the remaining case ($\lambda^\Delta = \frac{\zonoUpper{\affineForm{}^\Delta}}{\zonoUpper{\affineForm{}^\Delta}-\zonoLower{\affineForm{}^\Delta}}$), the formula reduces to $0$ which is a valid bound since we can assume that $\zonoUpper{\affineForm{}^\Delta}>0$ (Case 2.2).

    \paragraph{Upper Bound for $\mathbf{\left(x-y\right)}$.}
    The Affine Form's upper bound reads:
    \[
    \lambda^\Delta \underbrace{(x-y)}_{\in \left[\zonoLower{\affineForm{}^\Delta},\zonoUpper{\affineForm{}^\Delta}\right]} + \lambda^\Delta \max\left(0,-\zonoLower{\affineForm{}^\Delta}\right)
    \]
    Case 1: $\zonoLower{\affineForm{}^\Delta}<0$\\
    In this case, the bound reduces to $\lambda^\Delta \left( (x-y) - \zonoLower{\affineForm{}^\Delta} \right)$.
    Again, this formula is linear in $\left(x-y\right)$ and we look at its maximal values:
    For $\left(x-y\right) = \zonoLower{\affineForm{}^\Delta}$ the formula reduces to $0$ which is a valid upper bound (due to Case 1).
    For $\left(x-y\right) = \zonoUpper{\affineForm{}^\Delta}$ the formula reduces to $\lambda^\Delta \left( \zonoUpper{\affineForm{}^\Delta} - \zonoLower{\affineForm{}^\Delta} \right)$.
    For $\lambda=0$ the formula reduces to $0$ but for this to happen it must be the case that $\zonoUpper{\affineForm{}^\Delta} \leq 0$, i.e. we get a valid upper bound.
    For $\lambda=1$, it must be the case that $\zonoLower{\affineForm{}^\Delta}\geq 0$ which we excluded (Case 1).
    For the remaining case ($\lambda^\Delta = \frac{\zonoUpper{\affineForm{}^\Delta}}{\zonoUpper{\affineForm{}^\Delta}-\zonoLower{\affineForm{}^\Delta}}$), the formula reduces to $\zonoUpper{\affineForm{}^\Delta}$ which is a valid upper bound.\\[1em]
    Case 2: $\zonoLower{\affineForm{}^\Delta} \geq 0$\\
    In this case, we get that $\lambda^\Delta=1$.
    Thus, the formula reduces to $\left(x-y\right)$ which is obviously a valid upper bound.\\[1em]

    We now proceed to show that our Affine Form always bounds $0$.

    \paragraph{Lower Bound for 0.}
    Due to the prior proofs, we can assume that $\zonoUpper{\affineForm{}^\Delta}>0$ (otherwise, the Affine Form's lower bound is trivially a lower bound for $0$ due to the lower bound property for $(x-y)$).\\
    Case 1: $\zonoLower{\affineForm{}^\Delta}<0$\\
    In this case $\lambda^\Delta \notin \left\{0,1\right\}$ (i.e. $\lambda^\Delta = \frac{\zonoUpper{\affineForm{}^\Delta}}{\zonoUpper{\affineForm{}^\Delta}-\zonoLower{\affineForm{}^\Delta}}$) and the formula reduces to
    \[
    \underbrace{\lambda^\Delta \left(((x-y)-\zonoLower{\affineForm{}^\Delta}\right))}_{\leq \zonoUpper{\affineForm{}^\Delta}} - \underbrace{\max\left(-\zonoLower{\affineForm{}^\Delta},\zonoUpper{\affineForm{}^\Delta}\right)}_{\geq \zonoUpper{\affineForm{\Delta}}},
    \]
    which trivially bounds $0$ from below.\\[1em]

    Case 2: $\zonoLower{\affineForm{}^\Delta}\geq 0$\\
    In this case $\lambda=1$ and the formula reduces to
    \[
    \underbrace{(x-y)}_{\leq \zonoUpper{\affineForm{}^\Delta}} - \zonoUpper{\affineForm{}^\Delta} \leq 0.
    \]

    \paragraph{Upper Bound for 0.}
    Due to the prior proofs we can assume that $\zonoLower{\affineForm{}^\Delta}<0$ (otherwise, the Affine Form's upper bound is trivially an upper bound for 0 due to the upper bound property for $(x-y)$).
    In this case, the formula reduces to
    \[
    \underbrace{\lambda^\Delta}_{\in\left[0,1\right]} (
    \underbrace{(x-y)}_{\geq \zonoLower{\affineForm{}^\Delta}}-\zonoLower{\affineForm{}^\Delta}) \geq 0.
    \]
    
    \paragraph{Remaining cases.}
    We have now shown that our Affine Form always bounds $(x-y)$ and  $0$ from above and below.
    This obviously covers cases 1 and 4 in \Cref{eq:diff_relu:relu_cases}.
    Moreover, it covers case 2 as $0\leq x <(x-y)$.
    Concerning case 3, observe that $(x-y)-x \leq 0$ as $(x-y)\leq x$ and furthermore $(x-y)-x\geq (x-y)$ as $x < 0$.
    Thus, case 3 is also covered by the inequalities above.
    
\end{itemize}
\end{proofEnd}%
\noindent
\looseness=-1
Unfortunately, reasoning about the difference of two \reluSym{}s is not particularly intuitive.
The first 6 cases in \Cref{tab:relu_transform} follow from substituting Node 1 and Node 2 values.
In the other cases (\emph{Positive + Instable} and \emph{All Instable}), the difference is \emph{not} linear in the input or output Zonotopes.
Thus, we append an additional generator to the Differential Zonotope, i.e. to $\mathbf{a}^\Delta$.
The approximation for the case of two instable neurons is plotted in \Cref{fig:relu_delta_visualization}:
The bounding planes ensure that $0$ is always reachable and that the prior difference is within the bound.

\begin{table}[t]
    \centering
    \caption{Case Distinction for the construction of a new Affine Form $\textsc{ReLU}_\Delta\left(
    \affineForm{}^\Delta,
    \affineForm{}',
    \affineForm{}'',
    \hat{\affineForm{}}',
    \hat{\affineForm{}}''\right) = \hat{\affineForm{}}^\Delta=
    \left(\hat{\mathbf{e}}^\Delta,\hat{\mathbf{a}}'^\Delta,\hat{\mathbf{a}}''^\Delta,\hat{\mathbf{a}}^\Delta,\hat{c}^\Delta\right)$. Additions of vectors with different lengths signify addition for the components in the common-length prefix. We distinguish the nodes' different phases by positive ($+$), negative (--), and instable ($\sim$). The variables $\lambda$, $\mu$ and $\nu$ are initialized as follows:
    $\lambda' = \frac{-\zonoLower{\affineForm{}'}}{\zonoUpper{\affineForm{}'} - \zonoLower{\affineForm{}'}}$,
    $\mu' = 0.5*\lambda'\zonoUpper{\affineForm{}'}$ (accordingly for $\lambda'',\mu''$) and
    $\lambda^\Delta = \mathrm{clamp}\left(\frac{\zonoUpper{\affineForm{}^\Delta}}{\zonoUpper{\affineForm{}^\Delta}-\zonoLower{\affineForm{}^\Delta}},0,1\right)$,
    $\mu^\Delta = 0.5*\max\left(-\zonoLower{\affineForm{}^\Delta},\zonoUpper{\affineForm{}^\Delta}\right)$, $\nu^\Delta = \lambda^\Delta * \max\left(0,-\zonoLower{\affineForm{}^\Delta}\right)$}
    \begin{tabular}{c|c||c|c|c|c|c}
        \multicolumn{2}{c||}{\textbf{Neurons}} & \multicolumn{5}{c}{\textbf{New Affine Form} $\hat{\affineForm{}}^\Delta$}\\\hline
        $f_1$ & $f_2$& $\hat{\mathbf{e}}$ & $\hat{\mathbf{a}}'^\Delta$ & $\hat{\mathbf{a}}''^\Delta$ & $\hat{\mathbf{a}}^\Delta$ & $\hat{c}$\\\hline\hline
    
        \multicolumn{6}{l}{\textbf{Both nodes stable}}\\\hline
    
        -- & -- & 
        $0$ & $0$ & $0$ & $0$ & $0$\\\hline
    
        -- & + &
        $-\mathbf{e}''$ & $0$ & $-\mathbf{a}''$ & $0$ & $-c''$\\\hline
    
        + & -- &
        $\mathbf{e}'$ & $\mathbf{a}'$ & $0$ & $0$ & $c'$\\\hline
    
        + & + &
        $\mathbf{e}^\Delta$ & $\mathbf{a}'^\Delta$ & $\mathbf{a}''^\Delta$ & $\mathbf{a}^{\Delta}$ & $c^\Delta$\\\hline
    
        \multicolumn{6}{l}{\textbf{Negative + Instable}}\\\hline
    
        $\sim$ & -- &
        $\hat{\mathbf{e}}'$ & $\hat{\mathbf{a}}'$ & $0$ & $0$ & $\hat{c}'$\\\hline
    
        -- & $\sim$ &
        $-\hat{\mathbf{e}}''$ & $0$ & $-\hat{\mathbf{a}}''$ & $0$ & $-\hat{c}''$\\\hline
    
        \multicolumn{6}{l}{\textbf{Positive + Instable}}\\\hline
    
        $\sim$ & + &
        $\mathbf{e}^\Delta - \lambda' \mathbf{e}'$ & $\mathbf{a}'^\Delta - \lambda' \mathbf{a}'$ & $\mathbf{a}''^\Delta$ & $( (\mathbf{a}^{\Delta})^T \mid \mu' )^T$ & $c^\Delta - \lambda' c' + \mu'$\\\hline
    
        + & $\sim$ &
        $\mathbf{e}^\Delta + \lambda'' \mathbf{e}''$ & $\mathbf{a}'^\Delta$ & $\mathbf{a}''^\Delta + \lambda'' \mathbf{a}''$ & $( (\mathbf{a}^\Delta)^T \mid \mu'' )^T$ & $c^\Delta + \lambda'' c'' - \mu''$\\\hline
    
        \multicolumn{6}{l}{\textbf{All Instable}}\\\hline
        $\sim$ & $\sim$ &
        $\lambda^\Delta \mathbf{e}^\Delta$ & $\lambda^\Delta \mathbf{a}'^\Delta$ & $\lambda^\Delta \mathbf{a}''^\Delta$ & $\left(\lambda^\Delta (\mathbf{a}^{\Delta})^T \mid \mu^\Delta \right)^T$ & $\lambda^\Delta c^\Delta +\nu^\Delta - \mu^\Delta$
    \end{tabular}
    \label{tab:relu_transform}
\end{table}%

\section{Verification of Equivalence Properties}
\label{sec:top1}
\looseness=-1
To verify equivalence with $\textsc{Reach}_\Delta$, we proceed as follows:
We propagate a Zonotope $\zonotope{}_{\text{in}}$ through the two \acp{NN} $f_1,f_2$ as explained above.
Subsequently, we check a condition on the outputs $\left(\zonotope{}',\zonotope{}'',\zonotope{}^\Delta\right)$ that implies the desired equivalence property.
If this check fails, we refine the Zonotope by splitting the input space.
For the naive case, we compute $\zonotope{}^\Delta$ as explained in \Cref{sec:diff_zono}.

\paragraph{$\varepsilon$ equivalence.}
Checking $\varepsilon$ equivalence w.r.t. $\zonotope{}^\Delta$ works similarly to the approach for symbolic interval-based differential verification~\cite{paulsen_reludiff_2020,paulsen_neurodiff_2020}:
We compute  $\zonotope{}^\Delta$'s interval bounds and check for an absolute bound $> \varepsilon$.
If all bounds are smaller, we have proven $\varepsilon$ equivalence for $\langle \zonotope{}_{\text{in}} \rangle$ (see also \Cref{lem:soundness_epsilon_equiv} in \Cref{apx:proofs}).

\begin{lemmaE}[Soundness of $\varepsilon$ equivalence check][all end]
    \label{lem:soundness_epsilon_equiv}
    Let $\zonotope{}',\zonotope{}'',\zonotope{}^\Delta$ be $\textsc{Reach}_\Delta$'s result for two \acp{NN} $f_1,f_2$ and an input space described by $\zonotope{}_{\text{in}}$.
    If for all $1 \leq i \leq O$ it holds that $\max\left(\left|\zonoLower{\left(\zonotope{}^\Delta\right)_i}\right|,\left|\zonoUpper{\left(\zonotope{}^\Delta\right)_i}\right|\right) \leq \varepsilon$ then ${\forall \mathbf{x} \in \langle Z_{\text{in}}\rangle ~\left(f_1\left(\mathbf{x}\right)-f_2\left(\mathbf{x}\right)\right) \leq \varepsilon}$.
\end{lemmaE}
\begin{proofEnd}
This result follows directly from \Cref{thm:verydiff_verysound}:
$\zonotope{}^\Delta$ bounds the difference between $f_1$ and $f_2$.
Consequently, if $\zonotope{}^\Delta$'s bounds are smaller than $\varepsilon$, then so is the difference between $f_1$ and $f_2$.
\end{proofEnd}

\paragraph{Top-1 Equivalence.}
\looseness=-1
Since Top-1 equivalence considers the order of outputs, it cannot be read off from $\zonotope{}^\Delta$'s bounds directly.
We frame the property as an LP optimization problem in \Cref{def:top1_lp}.
The LP has an optimal solution $\leq0$ if no $\mathbf{x} \in \langle\zonotope{}_{\text{in}}\rangle$ classified as $k$ in $f_1$ is classified as $j$ in $f_2$.
Formally, the LP computes an upper bound for $\left(f_2\left(\mathbf{x}\right)\right)_j-\left(f_2\left(\mathbf{x}\right)\right)_k$ (maximized expression) under the condition that
$\left(f_1\left(\mathbf{x}\right)\right)_k$ is the maximum of $f_1\left(\mathbf{x}\right)$ (first constraint) and under the additional condition that $\zonotope{}^\Delta$ bounds the difference between $f_1$ and $f_2$, resp. the difference between the reachable points in $\zonotope{}'$ and $\zonotope{}''$ (second constraint; see \Cref{def:zono_lp_condition} in \Cref{apx:proofs}).
The first constraint ensures a gap $t$ between the largest and second largest output of $f_1$.
In this section, we only consider $t=0$.
\begin{definition}[Top-1 Violation LP]
\label{def:top1_lp}
Given $\zonotope{}'=\left(G',\mathbf{c}'\right)=\left(E',A',\mathbf{c}'\right),\zonotope{}''=\left(E'',A'',\mathbf{c}''\right),\zonotope{}^\Delta=\left(G^\Delta,\mathbf{c}^\Delta\right)$, a constant $t \geq 0$ and $k,j\in\left[1,O\right]$ with $k \neq j$ the \emph{Top-1 Violation LP} is defined below.
$E'$ and $E''$ have $n_1$ columns, $A'$ has $n_2$ columns, $A''$ has $n_3$ columns and $A^\Delta$ has $n_4$ columns. $\mathbf{x}$ contains generators for these matrices in order and has dimension $\bar{n}=n_1+n_2+n_3+n_4$.
\begin{align*}
    \max_{\mathbf{x}\in\left[-1,1\right]^{\overline{n}}} &
    \left(E'' \mathbf{x}_{1:n_1} + A'' \mathbf{x}_{(n_1+n_2+1):(n_1+n_2+n_3)} + \mathbf{c}\right)_j\\
    & - \left(E'' \mathbf{x}_{1:n_1} + A'' \mathbf{x}_{(n_1+n_2+1):(n_1+n_2+n_3)} + \mathbf{c}\right)_k\\
    \text{s.t. }&
    \left(G' \mathbf{x}_{1:(n_1+n_2)} + \mathbf{c}'\right)_l + t \leq \sum_{i=1}^{n_1+n_2} \left(G'\right)_{k,i} x_i + \left(\mathbf{c}'\right)_k \text{ for }l \neq k\\
    &\zonotope{}' = \zonotope{}'' + \zonotope{}^\Delta
\end{align*}
\end{definition}
\begin{definitionE}[LP condition][all end]
\label{def:zono_lp_condition}
Using the notation from \Cref{def:top1_lp}, we define $\zonotope{}' = \zonotope{}'' + \zonotope{}^\Delta$ as follows:
\begin{align*}
&G' \mathbf{x}_{1:(n_1+n_2)} + \mathbf{c}' \\
=~&
E'' \mathbf{x}_{1:n_1} + A'' \mathbf{x}_{(n_1+n_2+1):(n_1+n_2+n_3)} + \mathbf{c}''
+
G^\Delta \mathbf{x} + \mathbf{c}^\Delta
\end{align*}
\end{definitionE}
\noindent
\looseness=-1
If the LP's maximum is positive, we check whether the generated counterexample is spurious.
Verification requires $\mathcal{O}\left(O^2\right)$ LP optimizations.
In practice, we reuse the same constraint formulation for each $k$ and optimize over all possible $j \neq k$ admitting warm starts.
Our approach is sound (see proof on \cpageref{proof:lem_top1_sound}):
\begin{lemmaE}[Soundness for Top-1][end,restate,text link={}]
\label{lem:soundness_top1}
\looseness=-1
Consider $\zonotope{}',\zonotope{}'',\zonotope{}^\Delta$ provided by $\textsc{Reach}_\Delta$ w.r.t. $Z_{\text{in}}$:
If for all $k,j \in \left[1,O\right]$ ($k \neq j$) the
Top-1 Violation LP with $t=0$ has a maximum $\leq 0$, then $f_1,f_2$ satisfy Top-1 equivalence w.r.t. inputs in $\langle Z_{\text{in}}\rangle$.
\end{lemmaE}
\begin{proofEnd}
\label{proof:lem_top1_sound}
First, observe that via \Cref{thm:verydiff_verysound} and \Cref{cor:verydiff_tight_bounds} the Zonotopes $\zonotope{}'$, $\zonotope{}''$ overapproximate the behavior of $f_1$ and $f_2$.
I.e., for any $\mathbf{v} \in \mathbb{R}^n$ we know that $f_1\left(\zonotope{}_{\text{in}}\left(\mathbf{v}\right)\right) \in \langle \zonotope{}'\left(\mathbf{v}\right) \rangle$, $f_2\left(\zonotope{}_{\text{in}}\left(\mathbf{v}\right)\right) \in \langle \zonotope{}''\left(\mathbf{v}\right) \rangle$.
and
$f_1\left(\zonotope{}_{\text{in}}\left(\mathbf{v}\right)\right)-f_2\left(\zonotope{}_{\text{in}}\left(\mathbf{v}\right)\right) \in \langle \zonotope{}^\Delta\left(\mathbf{v}\right) \rangle$ \emph{with the same generators values} $\mathbf{\epsilon}_1,\mathbf{\epsilon}_2,\mathbf{\epsilon}_3,\mathbf{\epsilon}_4$.
Given 
$\zonotope{}'=\left(G',\mathbf{c}'\right)$ and $\zonotope{}^\Delta = \left(G^\Delta,\mathbf{c}^\Delta\right)$, the points reachable via $f_2$ are \emph{also} described by the Zonotope $\tilde{\zonotope{}}'' = \left( G' - G^\Delta, \mathbf{c}' - \mathbf{c}^\Delta \right)$ (where $G'$ is appropriately extended with $0$ columns).
Thus, any value actually reachable via $f_1$ and $f_2$ satisfies the constraint $\zonotope{}' = \zonotope{}'' + \zonotope{}^\Delta$.
The constraints $G' \mathbf{v} + \mathbf{c} + t \leq \sum_{i=1}^n \left(G'\right)_{k,i} \mathbf{v}_i + \left(\mathbf{c}'\right)_k$ overapproximate the set of $\mathbf{v}\in \left[-1,1\right]^n$ for which $\left(f_1\left(\zonotope{}_{\text{in}}\left(\mathbf{v}\right)\right)\right)_k$ is the maximum with distance of $t$ to the second largest value (here $t=0$).
We now consider these $\mathbf{v}$ for an arbitrary, but fixed $k$:
If for every $j \neq k$ it holds that $\left(f_2\left(Z_{\text{in}}\left(\mathbf{v}\right)\right)\right)_k \geq \left(f_2\left(Z_{\text{in}}\left(\mathbf{v}\right)\right)\right)_j$ then these $\mathbf{x}=\zonotope{}_{\text{in}}\left(\mathbf{v}\right)$ satisfy Top-1 equivalence.
This is equivalent to the statement that 
$0 \geq \left(f_2\left(\zonotope{}_{\text{in}}\left(\mathbf{v}\right)\right)\right)_j - \left(f_2\left(Z_{\text{in}}\left(\mathbf{v}\right)\right)\right)_k$.
As $\zonotope{}''$ overapproximates the behavior of $f_2$ it is thus sufficient to bound its upper bound for all $j \neq k$ (w.r.t. to the other constraints explained above).
Thus, if our optimization problems all return a value less than or equal to zero, this implies that all $x \in \langle \zonotope{}_{\text{in}}\left(\mathbf{v}\right)\rangle$ satisfy Top-1 equivalence.

Given that every $\mathbf{v}$ must have some $k$ such that $\left(f_1\left(\mathcal{Z}_{\text{in}}\left(\mathbf{v}\right)\right)\right)_k$ is the maximum and we consider all such $k$, all $\mathbf{v}$ satisfy Top-1 equivalence.
Therefore, all values from $\zonotope{}_{\text{in}}$ satisfy Top-1 equivalence.
\end{proofEnd}
\looseness=-1

\paragraph{Input Space Refinement.}
\looseness=-1
If verification fails, we split the input space in half and solve the verification problems separately.
To this end, we use a heuristic to estimate the influence of splits along different input dimensions
Splitting can improve the bounds in two ways:
Either the reduced input range directly reduces the computed output bounds, or the reduced range reduces the number of instable neurons and hence reduces the over-approximation error w.r.t. output bounds.
Our heuristic works similar to forward-mode gradient computation estimating the influence of input dimensions on output bounds.
For an analysis with $n$ generators in $\zonotope{}_{\text{in}}$ and $m$ generators in $\zonotope{}'$/$\zonotope{}''$
our heuristic requires two matrices ($n \times m$) and two additional matrix multiplication per layer.
For details see \Cref{apx:input_refinement}; we leave a fine-grained analysis of refinement strategies to future work.

\paragraph{Generator Compression}
To increase performance we analyze $Z_{\text{in}}$.
In case an input dimension has range $0$, we eliminate the generator in $Z_{\text{in}}$.
This optimization speeds up equivalence verification for, e.g., targeted pixel perturbations~\cite{paulsen_reludiff_2020}. %

\paragraph{Completeness.}
\looseness=-1
As we only employ axis-aligned input-splitting, we cannot provide a completeness guarantee.
However, our evaluation (\Cref{sec:evaluation}) demonstrates, that this approach outperforms complete State-of-the-Art solvers.

\section{Equivalence Verification for Classification \acp{NN}}
\label{sec:classification}
\looseness=-1
Top-1 equivalence is particularly useful when verifying the equivalence of classification \acp{NN}.
Indeed, there are examples of classification \acp{NN} which are $\varepsilon$-equivalent, but not Top-1 equivalent w.r.t. some input region (see also \Cref{apx:subsec:eval:epsilon}).
This underlines the importance of choosing the right equivalence property.
Unfortunately, as we empirically show in \Cref{apx:subsec:eval:abelation}, classic Top-1 equivalence does not benefit from Differential Verification.
Moreover, prior work on equivalence verification only provides guarantees for small parts of the input space, e.g. by proving Top-1 equivalence for $\epsilon$-balls around given data points.
As pointed out in orthogonal work~\cite{DBLP:conf/icml/GengLXWGS23}, $\epsilon$-balls around data points are not necessarily a semantically useful specification.
Moreover, proving Top-1 equivalence on large parts of the input space is typically impossible, because pruning \acp{NN} invariably \emph{will} change their behavior.
This raises two questions:
\begin{enumerate*}
    \item Why does Top-1 equivalence not benefit from Differential Verification?
    \item What equivalence property for classification \acp{NN} is verifiable on large parts of the input space while it can benefit from Differential Verification?
\end{enumerate*}
We will answer these questions in order.

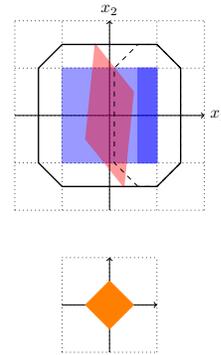
\begin{wrapfigure}[15]{r}{0.25\textwidth}
    \centering
    \vspace*{-2.5em}
    \resizebox{\linewidth}{!}{
    \begin{tikzpicture}
      \draw[thin,dotted] (-2,-2) grid (2,2);
      \draw[->] (-2,0) -- node[pos=1.0,anchor=west] {$x_1$} (2,0);
      \draw[->] (0,-2) -- node[pos=1.0,anchor=south] {$x_2$} (0,2);
      \draw[thick, blue, fill=blue, opacity=0.4] (-1,-1) -- (-1,1)
        -- (1,1) -- (1,-1) -- cycle;
      \draw[thick, blue, fill=blue, opacity=0.4] (0.6,-1) -- (0.6,1)
        -- (1,1) -- (1,-1) -- cycle;
      \draw[thick, red, fill=red, opacity=0.4] (-0.5,-0.5) -- (-0.3,1.5)
        -- (0.5,0.5) -- (0.3,-1.5) -- cycle;

      \draw[thick, black] (-1,-1.5) -- (-1.5,-1) -- (-1.5,1) -- (-1,1.5)
        -- (1,1.5) -- (1.5,1) -- (1.5,-1) -- (1,-1.5) -- cycle;

    \draw[dashed, black] (0.6,-1.5) -- (0.1,-1) -- (0.1,1) -- (0.6,1.5)
        -- (1,1.5) -- (1.5,1) -- (1.5,-1) -- (1,-1.5) -- cycle;

      \draw[thin,dotted] (-1,-5) grid (1,-3);
      \draw[->] (-1,-4) -- (1,-4);
      \draw[->] (0,-5) -- (0,-3);
      \draw[thick, orange, fill=orange, opacity=1.0] (-0.5,-4) -- (0,-3.5)
        -- (0.5,-4) -- (0,-4.5) -- cycle;
    \end{tikzpicture}
    }
    \caption{Differential Zonotopes for Top-1 equivalence}
    \label{fig:top_1_diff_zono}
\end{wrapfigure}
\paragraph{Ineffectiveness for Top-1 equivalence.}
\label{apx:top_1_tight_diff}
\looseness=-1
Our initial intuition would have been that the tighter bounds in $\zonotope{}^\Delta$ should also aid the verification of Top-1 equivalence.
To refute this intuition, consider the sketch in \Cref{fig:top_1_diff_zono}:
The light blue area represents a Zonotope for the output space reachable via $f_1$ (i.e. $\zonotope{}'$) and the red area describes a Zonotope for the output space reachable via $f_2$ (i.e. $\zonotope{}''$).
The orange Zonotope describes $\zonotope{}^\Delta$, i.e. it is a bound for the difference between the two outputs.
Depending on the nature of $\zonotope{}'$'s and $\zonotope{}^\Delta$'s generators (i.e. if they are shared or not), the output region for $\zonotope{}'-\zonotope{}^\Delta$ could reach as far as the solid black line.
Thus, while $\zonotope{}^\Delta$ limits the difference for individual input points, it does not necessarily provide effective bounds for the reachable values of $f_2$ resp. $\zonotope{}''$.
However, if via an LP formulation, the reachable values from $\zonotope{}'$ were restrained to $x_1 \geq 0.6$ (dark blue area on the right), then adding $\zonotope{}^\Delta$ would yield the black dashed region meaningfully constrains the behavior of $f_2$ beyond the constraints by $\zonotope{}''$ (red area).
We could then prove that $f_2$'s first dimension ($x_1$) is positive.
Notably, our constraint on $f_1$ ($\geq 0.6$) is \emph{stricter} than our constraint on $f_2$ ($\geq 0$) making the Differential Zonotope useful even if it has reachable values in negative $x_1$ direction.
Top-1 equivalence imposes \emph{equal} constraints on the difference between two dimensions in both \acp{NN}.
Thus, if the difference has a negative bound in $\zonotope{}^\Delta$ (a likely outcome), Differential Verification, independently of the considered abstract domain, \emph{cannot} help for Top-1 equivalence: While the concrete regions would look different for other abstract domains, the outcome would be the same given a negative differential bound for $x_1$.

\subsubsection{Confidence-Based Equivalence}
\looseness=-1
We propose the notion of $\delta$-Top-1 equivalence.
Our key idea is to integrate the confidence values (computed by $\mathrm{softmax}$) into the verified property.
In contrast to the arbitrary threshold $x_1 \geq 0.6$, constraints based on confidence values have intuitive meaning:
\begin{textAtEnd}%
\subsection{Proofs for Confidence Based Verification}
\end{textAtEnd}%
\begin{definition}[$\delta$-Top-1 equivalence]%
\label{def:delta_top_1_equivalence}
Given two \acp{NN} $f_1,f_2 : \mathbb{R}^I \to \mathbb{R}^O$, $\delta \in \left[\frac{1}{2},1\right]$ and an input region $Y \subseteq \mathbb{R}^I$,
$f_2$ is \emph{$\delta$-Top-1 equivalent} w.r.t. $f_1$ iff $f_1,f_2$ are Top-1 equivalent w.r.t. 
$X_{f_1,Y}\left(\delta\right) = \left\{ \mathbf{x} \in Y~\middle|~\exists i \in \left[1,O\right]\enspace \mathrm{sofmax}_i\left(f_1\left(\mathbf{x}\right)\right) \geq \delta\right\}$.
\end{definition}%
\looseness=-1

\begin{wrapfigure}[13]{r}{0.35\textwidth}
\centering
\includegraphics[width=\linewidth]{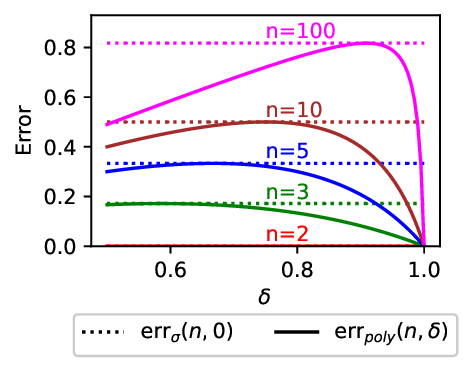}
\caption{Approximation Errors w.r.t. confidence $\delta$: Ours (solid) and Athavale \emph{et al.}~\cite{DBLP:conf/cav/AthavaleBCMNW24} (dashed).}
\label{fig:maximal_error}
\end{wrapfigure}
\noindent
\looseness=-1
We assume that $f_1,f_2$ are $\reluSym{}$ \acp{NN} with one $\mathrm{softmax}$ function after the last layer.
Verifying $\delta$-Top-1 equivalence achieves multiple objectives:
First, even for larger $Y$ (e.g. intervals over standard deviations for normalized inputs), we may provide meaningful guarantees for a suitable $\delta$.
Secondly, we rely on confidence estimates of a component we already trust: The reference \ac{NN}.
Finally, from a technical perspective, constraining the confidence level of $f_1$ to $\geq \delta$ while ``only'' requiring the same classification in $f_2$ achieves the asymmetry
necessary for exploiting Differential Verification.
Unfortunately, deciding $\delta$-Top-1 equivalence is a coNP-hard decision problem (proof on \cpageref{proof:delta_top_1_np}):
\begin{corollaryE}[Complexity of $\delta$-Top-1][end,restate,text link=]
Let $Y \subseteq \mathbb{R}^I$ be a polytope, $f_1,f_2$ be two $\reluSym{}$-$\mathrm{softmax}$-\acp{NN}, $\frac{1}{2} < \delta \leq 1$.
Deciding whether there exists a $\mathbf{y} \in Y$ and $k \in \left[1,O\right]$ s.t. $\left(f_1\left(x\right)\right)_k \geq \delta$ but $\exists j\in\left[1,O\right]~\left(f_2\left(\mathbf{x}\right)\right)_k < \left(f_2\left(\mathbf{x}\right)\right)_j$ is NP-hard.
\end{corollaryE}
\begin{proofEnd}
\label{proof:delta_top_1_np}
To prove NP-hardness we show a reduction from another NP-complete problem, specifically once again \textsc{StrictNetVerify}.
The reduction can be performed in a similar manner as for \Cref{thm:top_1_comp} with a minor adjustment:
The second \ac{NN} is constructed identically as before.
For the first \ac{NN} we choose an arbitrary $\frac{1}{2} < \delta < 1$ and set the outputs $y_{11},y_{22}$ of the first \ac{NN} to:
\begin{align*}
y_{11} &= \left(f\left(x\right)\right)_1 + \ln\left( \frac{\delta}{1-\delta}\right)
&y_{12} &=
\left(f\left(x\right)\right)_1\\
\intertext{Applying $\mathrm{softmax}$ to the outputs of $f_1$ then yields:}\\
\mathrm{softmax}\left(y_{11},y_{12}\right)_1 &=
\rlap{$
\frac{e^{y_{11}}}{e^{y_{11}} + e^{y_{1_2}}} =
\frac{
e^{\left(f\left(x\right)\right)_1}
e^{\ln\left( \frac{\delta}{1-\delta}\right)}
}{
e^{\left(f\left(x\right)\right)_1}
\left(
e^{\ln\left( \frac{\delta}{1-\delta}\right)}
+
1
\right)
}
$}\\
&= \frac{
\frac{\delta}{1-\delta}
}{
\frac{\delta}{1-\delta}
+1
} = 
\frac{\delta}{
\delta + \left(1-\delta\right)
} = \delta
\end{align*}
Consequently, we have ensured that for all inputs the $\mathrm{softmax}$ output of $f_1$ returns a probability $\geq \delta$ for output 1.
With this setup, there is a violation to $\delta$-Top-1 equivalence to $f_1$ iff the corresponding input satisfies the constraints of the \textsc{StrictNetVerify} instance.
Constraints on number representation may hinder the usage of the exact $\mathrm{softmax}$ function and the exact value of $\ln\left(\frac{\delta}{1-\delta}\right)$, however choosing an appropriate $\delta$ and choosing a larger (representable) number than the result of $\ln$ can mitigate this issue.

Unfortunately, the prior NP membership argument for Top-1 equivalence is not applicable to $\delta$-Top-1 equivalence as the $\mathrm{softmax}$ function cannot be represented in the LP problem.
\end{proofEnd}
\noindent
Due to the usage of $\mathrm{softmax}$, the previous NP-membership argument does not apply in this case.
To verify $\delta$-Top-1 equivalence, we part from prior work on confidence-based verification~\cite{DBLP:conf/cav/AthavaleBCMNW24} and propose the following approximation of all vectors $\mathbf{z}$ for which output $i$ has a confidence $\geq \delta$ (see proof on \cpageref{proof:lin_approx_softmax}):
\begin{textAtEnd}
\subsubsection[Approximation of softmax]{Approximation of $\mathrm{softmax}$.}
Prior work on confidence-based verification~\cite{DBLP:conf/cav/AthavaleBCMNW24} used a complex approximation procedure for the $\mathrm{softmax}$ function which relied on a reformulation of $\mathrm{softmax}$ using $\mathrm{sigmoid}$ and subsequent approximation of $\mathrm{sigmoid}$ using 35 linear segments.
Importantly, the prior approximation was independent of the considered probability threshold $\delta$.
\begin{definition}[Maximal Error for $\mathrm{softmax}$ approximation from Athavale \emph{et al.}~\cite{DBLP:conf/cav/AthavaleBCMNW24}]
\label{def:max_error_athavale}
Let $\widehat{\mathrm{softmax}}$ be the $\mathrm{softmax}$ approximation by Athavale \emph{et al.}~\cite{DBLP:conf/cav/AthavaleBCMNW24}.
For any vector $\mathbf{z}$ of dimension $n \geq 2$ and $\mathbf{z}_i = \max_{j=0}^{n} \mathbf{z}_j$ and approximation precision $\upsilon$ for $\mathrm{sigmoid}$ we get that:
\[
\mathrm{softmax}\left(\mathbf{z}\right)_i - \widehat{\mathrm{softmax}}\left(\mathbf{z}\right)_i
\leq
\errorSoftmaxSigmoid{n}{\upsilon} \coloneqq
\frac{n-2}{\left(\sqrt{n-1}+1\right)^2} + 2\upsilon
\]
\end{definition}
\noindent
Note, that while $\widehat{\mathrm{softmax}}$ is an approximation with bounded error, it is not guaranteed to be a strict lower/upper bound.
To consider all input vectors $\mathbf{z}$ s.t. $\mathrm{softmax}\left(\mathbf{z}\right)_i \geq \delta$, Athavale \emph{et al.}~\cite{DBLP:conf/cav/AthavaleBCMNW24} then used the (piece-wise linear) constraint $\widehat{\mathrm{softmax}}\left(\mathbf{z}\right)_i \geq \delta-\errorSoftmaxSigmoid{n}{\upsilon}$.
While the first part of the approximation by Athavale \emph{et al.}~\cite{DBLP:conf/cav/AthavaleBCMNW24} provides an exact lower-bound, this lower bound is then approximated using piece-wise linear constraints in order to remove a sigmoid function.
Hence, to recover soundness, their approach shifts the confidence threshold down.
It is our understanding that this downshift mixes two kinds of errors that could be treated separately for a more precise analysis (maximal error of the sound lower-bound and maximal error of the unsound piece-wise linear approximation).
However, even if we were to account for this distinction, our approach would yield a more precise approximation as it is parameterized in $\delta$ and does not rely on a sigmoid linearization.
\end{textAtEnd}
\begin{lemmaE}[Linear approximation of $\mathrm{softmax}$][end,restate,text link={}]
\label{lem:lin_approx_softmax}
For $\delta \in \left[1/2,1\right)$ the following set relationship holds:
\[
\left\{
\mathbf{z} \in \mathbb{R}^n ~\middle|~
\mathrm{softmax}\left(\mathbf{z}\right)_i \geq \delta
\right\}
\subseteq
\left\{
\mathbf{z} \in \mathbb{R}^n ~\middle|~
\bigwedge_{\substack{j=1\\j\neq i}}^n \mathbf{z}_i-\mathbf{z}_j \geq \ln\left(\frac{\delta}{1-\delta}\right)
\right\} \eqqcolon P_n\left(\delta\right)
\]
\end{lemmaE}
\begin{proofEnd}
\label{proof:lin_approx_softmax}
Consider some $\mathbf{z}$ such that $\mathrm{softmax}\left(\mathbf{z}\right)_i \geq \delta$.
This implies $\mathbf{z}$ is contained in the set on the left-hand-side.
Then we know in particular that $\frac{e^{\mathbf{z}_i}}{\sum_{j=1}^n e^{\mathbf{z}_j}} \geq \delta$.
Moving all $e^{\mathbf{z}_i}$ to the left this is equivalent to 
\[
\left(1-\delta\right) e^{\mathbf{z}_i} \geq \delta \sum_{\substack{j=1\\j\neq i}}^n \underbrace{e^{\mathbf{z}_j}}_{\geq 0}.
\]
This in turn implies that for all $j \neq i$ we have
$
\left(1-\delta\right) e^{\mathbf{z}_i} \geq
\delta e^{\mathbf{z}_j}
$
which is equivalent to
$
\mathbf{z}_i-\mathbf{z}_j \geq \ln\left(\frac{\delta}{1-\delta}\right).
$
Consequently, $\mathbf{z}$ is contained in the set on the right-hand-side.
\end{proofEnd}
\noindent
We can then prove $\delta$-Top-1 equivalence for $\delta \geq \frac{1}{2}$ by reusing the Top-1 Violation LP with $t$ chosen appropriately (see proof on \cpageref{proof:soundness_delta_top1}):
\begin{corollaryE}[Soundness for $\delta$-Top-1][end,restate,text link={}]
\looseness=-1
Given $\zonotope{}',\zonotope{}'',\zonotope{}^\Delta$ from $\textsc{Reach}_\Delta$ w.r.t. $Z_{\text{in}}$, $\frac{1}{2} \leq \delta < 1$.
If for all $k,j \in \left[1,O\right]$ ($k \neq j$) the
Top-1 Violation LPs with $t=\ln\left(\frac{\delta}{1-\delta}\right)$ have maxima $\leq 0$, then $f_2$ is $\delta$-Top-1 equivalence w.r.t. $f_1$ on $\langle Z_{\text{in}}\rangle$.
\end{corollaryE}
\begin{proofEnd}
\label{proof:soundness_delta_top1}
The implication follows from the soundness argument of \Cref{lem:soundness_top1}.
The key insight for this proof is the observation that it suffices to prove Top-1 equivalence for an overapproximation of the set of inputs for which $f_1$ has confidence $\geq \delta$.
Contrary to \Cref{lem:soundness_top1}, $t$ is now larger than $0$.
Considering that $\zonotope{}'$ overapproximates the reachable values of $f_1$, the chosen $t=\ln\left(\frac{\delta}{1-\delta}\right)$ constrains the set of $f_1$'s outputs exactly to the right-hand-set of \Cref{lem:lin_approx_softmax}.
Thus, we in particular prove a property for all inputs where the $k$-th $\mathrm{softmax}$ component of $f_1$'s output has a value larger than $\delta$ (left-hand-side of \Cref{lem:lin_approx_softmax}).
By proving Top-1 equivalence for all inputs such that $f_1$'s output has confidence $\geq \delta$, we thus prove $\delta$-Top-1 equivalence.
\end{proofEnd}

\noindent
\looseness=-1
In comparison to the $\mathrm{softmax}$ approximation by Athavale \emph{et al.}~\cite{DBLP:conf/cav/AthavaleBCMNW24}, we approximate via \emph{one} polytope in the output space (in contrast to a 35-segment piece-wise linear approximation).
Additionally, our approximation is parametrized in the confidence threshold $\delta$ while their approximation is uniform across confidence values.
We now analyze the precision of the two approximations.
Given a desired confidence level $\delta$, we consider its error the maximal deviation below $\delta$ still encompassed by the approximation.
For Athavale \emph{et al.}~\cite{DBLP:conf/cav/AthavaleBCMNW24} this maximal error is given as a function $\errorSoftmaxSigmoid{n}{\upsilon}$ in the input dimension $n$ and the sigmoid approximation error $\upsilon$ (see \Cref{def:max_error_athavale} or \cite[Thm. 1]{DBLP:conf/cav/AthavaleBCMNW24}).
For our approximation, we want to derive the error via the minimal confidence value that is still part of $P_n\left(\delta\right)$, i.e. as $\errorSoftmaxPolytope{n}{\delta} = \delta - \min \left\{
\max_i \mathrm{softmax}\left(\mathbf{z}\right)_i ~\middle|~
\mathbf{z}\in P_n\left(\delta\right)
\right\}$.
We can derive the following properties for our approximation error $\errorSoftmaxPolytope{n}{\delta}$ in relation to the approximation error $\errorSoftmaxSigmoid{n}{\upsilon}$ incurred by Athavale \emph{et al.}~\cite{DBLP:conf/cav/AthavaleBCMNW24} (proof on \cpageref{proof:approx_error}):
\begin{lemmaE}[Maximal Error for our $\mathrm{softmax}$ approximation][end,restate,text link=]
\label{lem:maximal_error_softmax}
Consider $n\geq 2$ and $\delta \geq \frac{1}{2}$, then:
\begin{enumerate}
    \item $\errorSoftmaxPolytope{n}{\delta} = \delta\enspace -\enspace\delta / \left(\delta\left(2-n\right)+n-1\right)$ and $\errorSoftmaxPolytope{2}{\delta} =0$
    \item For all $\upsilon>0$ we get $\errorSoftmaxSigmoid{n}{\upsilon} > \errorSoftmaxSigmoid{n}{0} = \max_{\delta\in\left[\frac{1}{2},1\right]} \errorSoftmaxPolytope{n}{\delta}$
    \item  $\lim_{\delta \to 1} \errorSoftmaxPolytope{n}{\delta} = 0$
\end{enumerate}
\end{lemmaE}
\begin{proofEnd}
\label{proof:approx_error}
We prove the statements in order:
\paragraph{Proof of 1.}
W.l.o.g. assume that $\mathbf{z}_1$ is the maximum over all components of $\mathbf{z}$ (otherwise, observing that the order of components is irrelevant to $\mathrm{softmax}$, reorder).
The objective is now to find the $\mathbf{z}$ with minimal value $\mathbf{z}_1$ contained in $P_n\left(\delta\right)$.
This corresponds to the following minimization problem:
\begin{align*}
    \min_{\mathbf{z}}~~ & \frac{e^{\mathbf{z}_1}}{\sum_{i=1}^n e^{\mathbf{z}_i}}
    & \text{s.t. }
    & \mathbf{z}_1 \geq \ln\left(\frac{\delta}{1-\delta}\right) + \mathbf{z}_i \enspace\left(\text{for all } i\in\left[2,O\right]\right)
\end{align*}
Note that for any value $\mathbf{z}_1$ the $\mathrm{softmax}$ expression on the left becomes minimal for large values in the faction's denominator.
Since the exponential function is positive and strictly increasing, our objective is thus to maximize all values $\mathbf{z}_j$ for $j \neq 1$.
Considering the bounds on the left, we therefore set their value to $\mathbf{z}_j = \mathbf{z}_1 - \ln\left(\frac{\delta}{1-\delta}\right)$.
Placing these values into the $\mathrm{softmax}$ function reduces the expression on the left-hand-side to:
\begin{align*}
\frac{
\exp{\mathbf{z}_1}
}{
\exp{\mathbf{z}_1} + 
\exp\left(\mathbf{z}_1 - \ln\left(\frac{\delta}{1-\delta}\right)\right)\left(n-1\right)
}
&= 
\frac{\exp{\mathbf{z}_1}}{
\exp{\mathbf{z}_1}\left(
1 + \frac{\left(n-1\right)\left(1-\delta\right)}{\delta}
\right)
}\\
&=
\frac{\delta}{\delta\left(2-n\right)+n-1}
\end{align*}
Subtracting this value from $\delta$ then yields the maximal error reachable via using our approximation.
Substituting $n$ by $2$ shows that our approximation is exact for $n=2$.

\paragraph{Proof of 2.}
The strict inequality $\errorSoftmaxSigmoid{n}{\upsilon} > \errorSoftmaxSigmoid{n}{0}$ follows directly from the definition because $\errorSoftmaxSigmoid{n}{\upsilon} = \errorSoftmaxSigmoid{n}{0} + 2\delta$.
For the identity with our maximal achievable error, i.e. $\max_{\delta\in\left[\frac{1}{2},1\right]} \errorSoftmaxPolytope{n}{\delta}$, we compute the partial derivative of $\errorSoftmaxPolytope{n}{\delta}$ w.r.t. $\delta$ which yields $ 1 +\frac{1-n}{\left(\delta\left(2-n\right)+n-1\right)^2}$.
For $n>2$ we find maximal values at $\frac{n \pm \sqrt{n-1}-1}{n-2}$ of which only $\delta^*=\frac{n - \sqrt{n-1}-1}{n-2}$ is smaller than 1.
Computing the value of $\errorSoftmaxPolytope{n}{\delta^*}$ yields 
$
\frac{\left(\sqrt{n-1}-2\right)n+2}{\left(n-2\right)\sqrt{n-1}}
$
as the maximal error in dependence of $n$.
Additional reformulations yield that this formula is equal to $\errorSoftmaxSigmoid{n}{0}=\frac{n-2}{\left(\sqrt{n-1}+1\right)^2}$.

\paragraph{Proof of 3.}
To prove that $\lim_{\delta \to 1} \errorSoftmaxPolytope{n}{\delta} = 0$, it suffices to show that $\lim_{\delta \to 1} \frac{\delta}{\delta\left(2-n\right)+n-1} = 1$.
Since $\lim_{\delta \to 1} \delta\left(2-n\right)+n-1 = 1$ and $\lim_{\delta \to 1} \delta = 1$ we get the limit via the quotient rule.
\end{proofEnd}
\noindent
Note, that while $\errorSoftmaxPolytope{n}{1}$ is well defined for $\delta$ the necessary bound $\ln\left(\frac{\delta}{1-\delta}\right)$ is not, i.e. we can only check for $\delta<1$.
The observations described in \Cref{lem:maximal_error_softmax} are also observable in \Cref{fig:maximal_error}: By parameterizing the approximation in the confidence threshold $\delta$ we achieve significant precision gains over prior work -- independent of output dimensionality ($n$) and in particular as we approach $\delta=1$.

\section{Evaluation}
\label{sec:evaluation}
\looseness=-1
We implemented Differential Zonotope verification in a new tool\footnote{\url{https://figshare.com/s/35fdc787de2872b59e9d}} called \emph{VeryDiff} in Julia~\cite{bezanson2017julia}. %
First, we analyze the efficiency of Differential Zonotopes compared to our naive approach for verifying $\varepsilon$ or ($\delta$-)Top-1 equivalence.
We also compare the performance of VeryDiff to the previous State-of-the-Art and demonstrate significant performance improvements for $\varepsilon$ and $\delta$-Top-1 equivalence across all benchmark families.
\Cref{apx:additional_experimental_results} contains an extended evaluation.

\paragraph{Experimental Setup.}
\begin{wrapfigure}[10]{r}{0.5\textwidth}
    \centering
    \vspace*{-2em}
    \includegraphics[width=\linewidth]{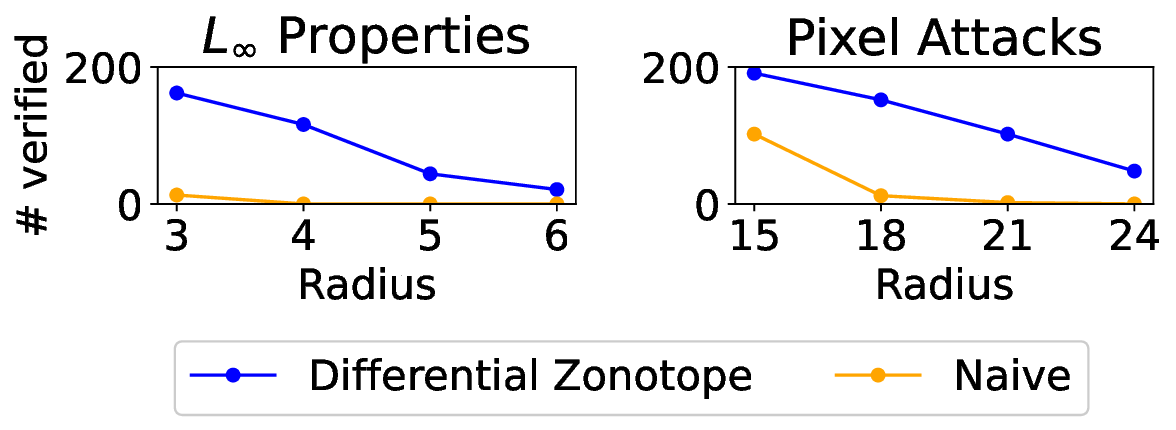}
    \caption{MNIST benchmark queries for which VeryDiff with(out) Differential Zonotopes proves equivalence.}
    \label{fig:abelation_by_radius}
\end{wrapfigure}
\looseness=-1
We compare six different tools or configurations on old and new benchmark families.
A detailed summary of baselines, benchmark families and \ac{NN} architectures can be found in \Cref{apx:subsec:eval:setup}.
For $\varepsilon$ and Top-1 equivalence, we evaluate on preexisting and new ACAS and MNIST \acp{NN} (airborne collision avoidance and handwritten digit recognition) where the second \ac{NN} is generated via pruning (and possibly further training).
For MNIST, we evaluate w.r.t. input regions generated by Paulsen \emph{et al.}~\cite{paulsen_neurodiff_2020} which prove equivalence on $L_\infty$ bounded perturbations of images ($L_\infty$ Properties) or targeted pixel perturbations (Pixel Attacks)
For $\delta$-Top-1 equivalence, we introduce a new \ac{NN} verification benchmark for particle jet classification at CERN's Large Hadron Collider (LHC)~\cite{duarte2018fast}.
We analyze equivalence w.r.t. pruned and further trained \ac{NN}.
\acp{NN} in this context come with strict real-time requirements making pruned \acp{NN} highly desirable~\cite{duarte2018fast}.
We verify equivalence for boxes defined via standard deviations over the normalized input space.

\subsubsection{Where Differential Verification helps.}
\looseness=-1
We compared VeryDiff with Differential Zonotopes activated to the naive computation without Differential Zonotopes.
A summary of all results can be found in \Cref{tab:abelation_diff_zono} (\Cref{apx:subsec:eval:abelation}).
We see significant improvements for $\varepsilon$ equivalence (1430\% more instances certified for the MNIST (VeriPrune) benchmark family).
\Cref{fig:abelation_by_radius} breaks down the verified $\varepsilon$ equivalence queries with/without Differential Zonotopes by input radius across all MNIST benchmark queries and shows that Differential Verification helps to verify larger input regions.
On the other hand, differential analysis slows down the verification of Top-1 equivalence verification (see \Cref{tab:abelation_diff_zono}).
We provide a complementary theoretical analysis to this observation in \Cref{sec:classification} and posit this is a fundamental limitation of Differential Verification and not specific to our implementation.
In contrast, certification of confidence-based equivalence can profit from Differential Verification across two benchmark families.
While we see diminishing speedups as we push the confidence threshold $\delta$ closer to $1$ (implicitly reducing the input space with guarantees), realistic thresholds $\delta > 0.5$ (e.g. $0.9$ instead of $1-10^{-7}$) profit \emph{most} from differential verification (up to speedups of 677 over the naive technique on commonly solved queries for LHC w.r.t. $\delta=0.99$).
We found queries where $\varepsilon$ and Top-1 equivalence results differ -- underlining the importance of choosing equivalence properties w.r.t. the task at hand.

\begin{table}[t]
    \centering
    \caption{Verification results for $\varepsilon$ and $\delta$-Top-1 equivalence: Speedups for commonly solved instances; improvements reported w.r.t. the best other tool.}
    \label{tab:epsilon_equivalence_sota}
\begin{tabular}{l|p{1.7cm}|l|rr|rr|rr}
\multicolumn{2}{c|}{\multirow{2}{*}{\textbf{Benchmark}}} &
\multirow{2}{*}{\textbf{Variant}} &
\multicolumn{2}{c|}{\multirow{2}{*}{\textbf{Equiv.}}}  &
\multicolumn{2}{c|}{\multirow{2}{*}{\textbf{Counterex.}}} &
\multicolumn{2}{c}{\textbf{Speedup}}\\ \cline{8-9}
\multicolumn{2}{c|}{}&&&&&& Median & Max\\\hline\hline

\multirow{9}{*}{\rotatebox[origin=c]{90}{\parbox{1.5cm}{\centering Standard\newline $\varepsilon$ eq.}}} &
\multirow{5}{*}{ACAS}
               &  VeryDiff (ours) &  \textbf{150} & \good{(+24.0\%)}       &  \textbf{153} & \good{(+2.0\%)}       &    --- &     --- \\
      &        &    NNEquiv &   37 &        &  142 &        &   \good{37.3} &  \good{8091.2} \\
      &        &   MILPEquiv &   16 &        &    3 &        & \good{7224.8} & \good{36297.0} \\
      &        &   Marabou   &  110 &        &  109 &        &  \good{141.3} & \good{10070.5} \\
      &        &    $\alpha,\beta$-CROWN &  121 &        &  150 &        &   \good{15.4} &  \good{1954.1} \\\cline{2-9}
      
&\multirow{4}{1.7cm}{MNIST (VeriPrune)} 
               &  VeryDiff  (ours) &  \textbf{352} &  \good{(+101.1\%)}        &   62 &   \bad{(-43.6\%)}      &    --- &     --- \\
      &        &    NNEquiv &    0 &        &  103 &        &   \good{12.6} &   \good{166.2} \\
      &        &   Marabou   &  10 &        &   24 &        &  \good{183.9} & \good{1390.8} \\
      &        &    $\alpha,\beta$-CROWN\tablefootnote{$\alpha,\beta$-CROWN for MNIST spends 80\% of its time on neuron-bound refinement. Speedups may be exaggerated while increases in solved instances are accurate.} &  175 &        &  \textbf{110} &        &  \good{(516.9)} &  \good{(4220.8)} \\ \hline\hline

\multirow{4}{*}{\rotatebox[origin=c]{90}{\parbox{1.5cm}{\centering NeuroDiff \newline $\varepsilon$ eq.}}}
&\multirow{2}{*}{ACAS}
               &  VeryDiff  (ours) &  \textbf{169} &  \good{(+39.7\%)}       &  \textbf{161} &    \good{(+103.8\%)}     &    --- &     --- \\
      &        &   NeuroDiff &  121 &        &   79 &        &   \good{43.1} & \good{16134.7} \\\cline{2-9}
&\multirow{2}{1.7cm}{MNIST (VeriPrune)}
               &  VeryDiff  (ours) &  \textbf{457} &    \good{(+11.5\%)}     &  \textbf{242} &   \good{(+404.1\%)}      &    --- &     --- \\
      &        &   NeuroDiff &  410 &        &   48 &        &    \good{4.5} &  \good{1086.8} \\ \hline\hline

\multirow{2}{*}{$\delta$-Top-1} &
\multirow{2}{*}{LHC} & VeryDiff (ours) &
\textbf{77} & \good{(327.8\%)} & --- & & --- & ---\\
&& $\alpha,\beta$-CROWN &
18 & & --- & & \good{324.5} & \good{11274.3}
    \end{tabular}
\end{table}
\subsubsection{$\varepsilon$ Equivalence.}
\looseness=-1
We compare our tool with other $\varepsilon$ equivalence verification techniques from the literature and summarize the results in \Cref{tab:epsilon_equivalence_sota}.
Across both benchmarks from prior literature (ACAS and MNIST), we significantly outperform equivalence-specific verifiers (NNEquiv~\cite{Teuber2021a}, MILPEquiv~\cite{kleine_buning_verifying_2020}, NeuroDiff~\cite{paulsen_neurodiff_2020}) as well as general NN verification techniques ($\alpha,\beta$-CROWN~\cite{zhang2018efficient,xu2020automatic,xu2021fast,wang2021beta,zhang22babattack,zhang2022general,shi2024genbab,kotha2023provably}, Marabou~\cite{katz_marabou_2019,DBLP:conf/cav/WuIZTDKRAJBHLWZKKB24}) for certification of equivalence.
$\alpha,\beta$-CROWN outperforms VeryDiff in the search for counterexamples.
We suspect this is due to its adversarial attack techniques~\cite{zhang22babattack}.
Due to incompatible differences in the property checked by NeuroDiff's implementation (see \Cref{apx:subsec:eval:setup}), we performed a separate comparison where VeryDiff outperforms NeuroDiff on the same property.

\subsubsection{$\delta$-Top-1 Equivalence.}
\looseness=-1
Differential Verification significantly improves upon the generic State-of-the-Art \ac{NN} verifier $\alpha,\beta$-CROWN (see \Cref{tab:epsilon_equivalence_sota}).
We concede that $\alpha,\beta$-CROWN's attack techniques outperform VeryDiff's counterexample generation (see $\varepsilon$ equivalence).
Hence, our evaluation focuses on equivalence certification.
The objective for $\delta$-Top-1 equivalence is to provide guarantees for low values $\delta$.
$\alpha,\beta$-CROWN was only able to provide guarantees  for 10 of the \acp{NN} in the benchmark set.
In each case, VeryDiff was able to prove equivalence for lower (i.e. better) or equal $\delta$ values.
For the 3 \acp{NN} where the provided guarantees of $\alpha,\beta$-CROWN and VeryDiff matched, both tools only verified equivalence for $\delta=1-10^{-7}$, i.e. they provided an extremely limited guarantee.
This underlines that VeryDiff is a significant step forward in the verification of $\delta$-Top-1 equivalence.

\subsubsection{Limitations}
\looseness=-1
Larger weight differences between \acp{NN} and accumulating $\reluSym{}$ approximations may decrease speedups achievable via Differential Verification (see discussion in \Cref{apx:subsec:eval:confidence}).
Nonetheless, VeryDiff outperforms alternative verifiers -- often by orders of magnitude.
Another limitation for confidence-based \ac{NN} verification is the possibility of satisfied for high confidence thresholds (for mitigation via calibration~\cite{DBLP:conf/icml/GuoPSW17,DBLP:conf/uai/AoRS23} see Athavale \emph{et al.}~\cite{DBLP:conf/cav/AthavaleBCMNW24}).
However, for equivalence verification, we consider the reference \ac{NN} $f_1$ (incl. its confidence) trustworthy.

\section{Conclusion}
\looseness=-1
We introduced
Differential Zonotopes as an abstract domain for \ac{NN} equivalence verification.
Our extensive evaluation shows that we outperform the Differential Verification tool \mbox{NeuroDiff}~\cite{paulsen_reludiff_2020,paulsen_neurodiff_2020}) as well as State-of-the-Art \ac{NN} verifiers.
Moreover, our paper provides insights into the circumstances where differential reasoning does (not) aid verification.
As discussed in \Cref{sec:classification,sec:evaluation}, whether Differential Verification helps is not always straightforward for specifications involving classification.
We believe that confidence-based equivalence is the way forward to scale equivalence verification beyond tiny input regions such as $\epsilon$-balls criticized in the literature~\cite{DBLP:conf/icml/GengLXWGS23}.
Finally, we introduced a simpler approximation for $\mathrm{softmax}$ that is provably tighter than prior work~\cite{DBLP:conf/cav/AthavaleBCMNW24}.

\paragraph{Future Work.}
We see potential in extending generic \ac{NN} verifiers (e.g. Marabou's network level reasoner~\cite{DBLP:conf/cav/WuIZTDKRAJBHLWZKKB24}) with differential abstract domains to improve their reasoning capabilities for relational properties.

\bibliographystyle{splncs04}
\bibliography{content/main}

-%
\clearpage
\appendix
\section{Proofs}
\label{apx:proofs}
\printProofs

\section{Input Space Refinement}
\label{apx:input_refinement}
To split the input space, we require a heuristic that estimates the influence of a split along some dimension on the verification outcome.
Splitting along some dimension can improve the obtained bounds in two ways:
Either the reduced input range directly reduces the computed output bounds (\emph{direct influence}),
or the reduced range reduces the number of instable neurons and hence reduces the overapproximation error w.r.t. output bounds (\emph{indirect influence}).
Consider an affine form $\affineForm{}=\left(\mathbf{e},\mathbf{a},c\right)$ with $n$ dimensions in $\mathbf{e}$ and $p$ dimension in $\mathbf{a}$.
The direct influence of dimension $i \in \left[1,n\right]$ can be estimated via $\left|\mathbf{e}_i\right|$:
Intuitively, splitting the input region into two parts along dimension $i$ ought to approximately reduce $\left|\mathbf{e}_i\right|$ by $\frac{1}{2}$.
However, for larger \acp{NN} $\mathbf{a}$ also has a large influence on the computed bounds.
Unfortunately, while $\mathbf{a}_j$ (for $j \in \left[1,p\right]$) tells us the influence of the $j$-th additional generator on $\affineForm{}$'s bounds, it contains no information on the relationship between generator $j$ and the input dimensions.
In this sense, Zonotopes lack information on the indirect influence of splitting some input dimension.
Thus, for each additional generator $\mathbf{a}_j$ we propose to add an additional vector $\mathbf{d}\left(j\right) \in \mathbb{R}^n$ which stores the indirect influence.
The key insight for computing $\mathbf{d}\left(j\right)$ is the idea that each generator is added due to some instable $\reluSym{}$ node.
Let $\tilde{\affineForm{}} = \left(\tilde{\mathbf{e}},\tilde{\mathbf{a}},\tilde{c}\right)$ be the affine form representing the input of the instable $\reluSym{}$ node, then we compute $\mathbf{d}\left(j\right)$ as follows:
\[
\mathbf{d}\left(j\right) = \left|\tilde{\mathbf{e}}\right| + \sum_{k=1}^{\tilde{p}} \left|\tilde{\mathbf{a}}_k\right| \left|\mathbf{d}\left(k\right)\right|.
\]
When the influence vectors $\mathbf{d}\left(k\right)$ are given as a matrix $D \in \mathbb{R}^{n \times \left(n+p\right)}$ where the first $n$ columns encode an identity matrix (these columns estimate the direct impact of the input dimensions), and the instable $\reluSym{}$ inputs are given via a Zonotope $\zonotope{} = \left(G,\mathbf{c}\right)$, the missing influence vectors can simply be computed via $D \left(G^T\right)$.
Subsequently, to compute the combined direct and indirect influence of splitting dimension $i$ w.r.t. the bounds of $\affineForm{}$, we compute:
\[
\left|\mathbf{e}_i\right| + \sum_{k=1}^p \left|\mathbf{a}_k\right| \left|\mathbf{d}\left(k\right)_i\right|.
\]

For refinement, given an output $\left(\zonotope{}',\zonotope{}'',\zonotope{}^\Delta\right)$ of $\textsc{Reach}_\Delta$ and influence vector matrices $D'$ and $D''$ for $\zonotope{}',\zonotope{}''$ , we compute\footnote{$\left|A\right|$ represents the component-wise application of $\left| \cdot \right|$ to $A$.} $\left|G'\right| \left|D'\right|^T + \left|G''\right| \left|D''\right|^T$
and sum the resulting matrix over the output dimensions of $G'$/$G''$.
This yields an approximation of the direct and indirect influence of all input dimensions on all output dimensions.
These influence values are then scaled by the range of the input dimension.
Subsequently, the input dimension with maximal value is chosen for refinement.
Experiments preceding the final evaluation showed that this heuristic computation improved performance over more naive baselines.
We leave an in-depth evaluation of heuristics for equivalence verification refinement to future work.

\section{Extended Evaluation}
\label{apx:additional_experimental_results}
\FloatBarrier{}
To evaluate the techniques proposed in this paper, we performed a comprehensive evaluation w.r.t. different properties and various baselines.
\Cref{apx:subsec:eval:setup} provides an overview of the experimental setup, including used baselines and evaluated benchmark families.
\Cref{apx:subsec:eval:abelation} contains an ablation study for the usage of Differential Zonotopes.
In particular, it demonstrates that Differential Zonotopes do not help verify Top-1 equivalence (see also \Cref{sec:classification} for a complimentary theoretical analysis).
\Cref{apx:subsec:eval:epsilon} evaluates the performance of our tool for $\varepsilon$-equivalence including a comprehensive comparison to State-of-the-Art tools.
Finally, \Cref{apx:subsec:eval:confidence} evaluates our tool's performance for confidence-based equivalence verification.

\subsection{Experimental Setup}
\label{apx:subsec:eval:setup}
We implemented our approach in a new tool called \emph{VeryDiff} in the programming language Julia~\cite{bezanson2017julia}.
Where necessary, we use Gurobi~\cite{gurobi} as LP optimization backend.
All experiments were performed on a Ubuntu machine with a 4 core, 3.20GHz Intel Core i5-6500 CPU, with 16GB of RAM.
All experiments were performed in single-threaded mode and in sequence.
With the exception of \Cref{apx:subsec:eval:confidence}, all timeouts were fixed to 2 minutes per benchmark query.

\subsubsection{Baselines}
To evaluate the performance of our tool, we compared it to six other tools/configurations that could be used for equivalence verification:

\paragraph{Ablations}
To demonstrate the (dis)advantages of Differential Verification we compare to our own naive implementation where Differential Zonotopes are not computed explicitly, but derived from reachable Zonotopes of the individual \acp{NN}.
This allows us to evaluate the effect of adding Differential Zonotopes while keeping all other implementation details constant.

\paragraph{Verification Tools.}
We compare to the equivalence verification tools NNEquiv~\cite{Teuber2021a} and MILPEquiv~\cite{kleine_buning_verifying_2020} which are resp. based on a MILP encoding via Gurobi~\cite{gurobi} and a reachability analysis via star-based geometric path enumeration~\cite{bak_improved_2020,tran_star-based_2019}.
Both tools support Top-1 and $\varepsilon$ equivalence verification.
For $\varepsilon$-equivalence we also compare to the differential verification tool NeuroDiff~\cite{paulsen_neurodiff_2020} (this tool only handles $\varepsilon$-equivalence properties).
The implementation of NeuroDiff checks a slightly different property than the standard definition of $\varepsilon$-equivalence.
Namely, the tool checks whether one \emph{given} output node (usually the predicted output) does not change by more than $\varepsilon$\footnote{See line 64 of \texttt{DiffNN-Code/split.c} in artifact.}.
Unfortunately, this implicitly also affects the tool's branching heuristic, which optimizes split decisions to show this property.
Since it was unclear how the heuristic could be modified to account for all outputs, we instead decided to compare NeuroDiff and VeryDiff w.r.t. the modified property (from hereon called \emph{NeuroDiff $\varepsilon$ equivalence}).
We want to emphasize, that all other experiments were run w.r.t. the standard definition of $\varepsilon$ equivalence (see \Cref{def:epsilon_equivalence}) and that all provided comparisons are w.r.t. the \emph{same} property.
As a baseline for generic \ac{NN} verification, we use $\alpha,\beta$-CROWN~\cite{zhang2018efficient,xu2020automatic,xu2021fast,wang2021beta,zhang22babattack,zhang2022general,shi2024genbab,kotha2023provably} and Marabou~\cite{katz_marabou_2019,DBLP:conf/cav/WuIZTDKRAJBHLWZKKB24} as the resp. fastest verifier and fastest CPU-only verifier at VNNComp 2024~\cite{DBLP:journals/sttt/BrixMBJL23,vnncomp2024}.
For both tools, we encode the property via a product \ac{NN} and provide specification via VNNLIB~\cite{Guidotti}.
For $\delta$-Top-1 equivalence we require linear equations as output constraints which are not natively supported by $\alpha,\beta$-CROWN.
Hence, we encoded the constraint matrix as an additional affine layer with $O^2$ outputs for \acp{NN} with $O$ outputs.
For Marabou, we used the default configuration of Marabou 2.0.
For $\alpha,\beta$-CROWN, we adapted the recommended ACAS and MNIST configurations to single-threaded CPU computation.
We also used the modified ACAS configuration for the LHC benchmark comparison.
$\alpha,\beta$-CROWN for MNIST is configured to use 80\% of its time for neuron-bound refinement.
Hence, speedups on commonly solved instances are to be taken with a grain of salt while improvements in the number of solved instances signify progress.

\paragraph{Omitted Comparisons.}
We omit comparisons to ReluDiff~\cite{paulsen_reludiff_2020} (outperformed by NeuroDiff~\cite{paulsen_neurodiff_2020}), VeriPrune~\cite{WANG2024127347} (currently available implementation is bit-equivalent to ReluDiff\footnote{\url{https://github.com/RM2PT/VeriPrune}}) and SMT solving~\cite{Eleftheriadis2022} ($\alpha,\beta$-CROWN outperforms SMT solvers for NN verification).
We omit a comparison to Habeeb and Prabhakar~\cite{DBLP:conf/nfm/PP24} as their approach relies on product \acp{NN} and $\alpha,\beta$-CROWN which we compare to.
We do not compare to RaVeN~\cite{DBLP:journals/pacmpl/BanerjeeXS24} as it only supports verification of relational properties w.r.t. a single \ac{NN}.
We also omit a comparison to the relational verification tool RACoon~\cite{banerjee2024relational} which also focuses on properties w.r.t. a single \ac{NN} and furthermore requires tailored relaxations that are not available for equivalence properties.

\begin{table*}[t]
    \centering
    \caption{Overview on \ac{NN} architectures}
\begin{tabular}{c||c|c|c|c}
    Network & Input Dimension & Activation & Hidden Nodes & Output Dimension \\ \hline
    \textsc{ACAS} & 5 &\reluSym{} & 50-50-50-50-50-50 & 5 \\
    \textsc{MNIST\_2\_100} & 784 &\reluSym{} & 100-100 & 10 \\
    \textsc{MNIST\_2\_512} & 784 &\reluSym{} & 512-512 & 10 \\
    \textsc{MNIST\_3\_1024} & 784 &\reluSym{} & 1024-1024-1024 & 10\\
    \textsc{2\_20} & 7 / 16 & \reluSym{} & 20-20 & 2 / 5\\
    \textsc{2\_40} & 7 / 16 & \reluSym{} & 40-40 & 2 / 5\\
    \textsc{2\_80} & 7 / 16 & \reluSym{} & 80-80 & 2 / 5\\
    \textsc{4\_20} & 16 & \reluSym{} & 20-20-20-20 & 5 \\
    \textsc{4\_40} & 7 & \reluSym{} & 40-40-40-40 & 2
\end{tabular}
    \label{tab:net_architectures}
\end{table*}

\subsubsection{Benchmarks}
This subsection provides an overview of the benchmark families that we used for our evaluation.
We use the following terminology:
A \emph{benchmark query} is a tuple of an input specification, an equivalence property, and a neural network.
One neural network may be used w.r.t. different benchmark queries.
All considered neural networks have one of the architectures specified in \Cref{tab:net_architectures}.
A \emph{benchmark family} then is a set of benchmark queries.
We note, that all currently available equivalence verification tools~\cite{paulsen_neurodiff_2020,Teuber2021a,kleine_buning_verifying_2020}, similar to many general NN verification tools~\cite{bak_nnenum_2021,wang2021beta,katz_marabou_2019}, do not come with a principled argument for their floating point soundness\footnotemark.
\footnotetext{In particular, outward rounding of symbolic intervals (e.g.~\cite{paulsen_neurodiff_2020}) is insufficient and ignores rounding errors at inference time (see e.g.~\cite[Sec. 4.1]{DBLP:conf/mlsys/MullerS0PV21})}
Hence, parting from the original evaluation of NeuroDiff, we omit an evaluation on the quantized \acp{NN} by Paulsen \emph{et al.}~\cite{paulsen_reludiff_2020,paulsen_neurodiff_2020}.

\paragraph{ACAS.}
Wang \emph{et al.}~\cite{WANG2024127347} evaluate their technique on the ACAS \acp{NN} from Julian et al.~\cite{julian_policy_2016}.
ACAS NNs provide advisories to a pilot to avoid Near Mid Air Collisions.
The authors prune 5 to 30 percent of the \acp{NN}' nodes (based on the weight norm) and (dis)prove $\varepsilon$ equivalence w.r.t. the regions proposed by Katz \emph{et al.}~\cite{katz_reluplex_2017} for $\varepsilon=0.05$.
We reuse this benchmark family while omitting benchmark queries that are trivially violated, i.e. the input box's middle point violates the property.
This yields 306 queries spanning 45 \textsc{ACAS} \acp{NN} for $\varepsilon$ equivalence, and 338 \textsc{ACAS} queries spanning 45 \acp{NN} for NeuroDiff $\varepsilon$ equivalence.

\paragraph{MNIST (VeriPrune).}
The authors of VeriPrune~\cite{WANG2024127347} also evaluate their technique on three MNIST \acp{NN} from Paulsen et al.~\cite{paulsen_neurodiff_2020}.
MNIST NNs have 784 inputs representing pixel values and predict the digit (0-9) visible on the image.
We consider the benchmark queries where a second \acp{NN}'s nodes have been pruned by 5\%.
The objective is to prove $\varepsilon$-equivalence for $\varepsilon=1.0$ or (in our evaluation) classification equivalence.
The input regions in this case are provided by Paulsen et al.~\cite{paulsen_neurodiff_2020} and based on a set of 100 reference images.
For each image $x_0$ we either prove equivalence on an $L_\infty$-ball ($\left\{~x \in \mathbb{R}^{784} ~\middle|~\left|x_0-x\right|_\infty \leq \nu~\right\}$) of radius $\nu\in\left\{3,4,5,6\right\}$ or we verify equivalence for targeted pixel perturbations chosen in prior work~\cite{paulsen_neurodiff_2020} where the chosen $p\in\left\{15,18,21,24\right\}$ pixels may vary arbitrarily and all other values are fixed.
We again observed some trivially violated properties (middle point violates property) which we omit.
Due to computational constraints, we only evaluate up to $\nu \leq 5$ and $p \leq 21$ for Top-1 equivalence.
This yields 701 benchmark queries across one \textsc{MNIST\_2\_100} \ac{NN} for $\varepsilon$ equivalence, 1089 benchmark queries across three \acp{NN} (\textsc{MNIST\_2\_100}, \textsc{MNIST\_2\_512},\textsc{MNIST\_3\_1024}) for NeuroDiff $\varepsilon$ equivalence, and 1199 benchmark queries across two \acp{NN} (\textsc{MNIST\_2\_100}, \textsc{MNIST\_2\_512}) for Top-1 equivalence.
In the latter case, we omitted the larger \ac{NN} due to excessive computational load, see benchmark family ``MNIST (ours)'' for an evaluation on \textsc{MNIST\_3\_1024} architectures.

\paragraph{MNIST (Ours).}
Reusing the same input space regions as for MNIST (VeriPrune), we pruned resp. 21\% and 51\% of the weights of the original MNIST \acp{NN} by Paulsen et al.~\cite{paulsen_neurodiff_2020} and performed further training to regain accuracy.
We add this benchmark to evaluate whether Differential Verification is still applicable when weight configurations have been modified by gradient descent during additional training.
This resulted in \acp{NN} with architectures and test accuracies as reported in \Cref{tab:mnist_retrained_accuracies}.
We used these to prove $\varepsilon$ equivalence ($\varepsilon$=1.0) and Top-1 equivalence.
This yields 3200 queries for $\varepsilon$ and 2400 queries for Top-1 equivalence across the 4 pruned \acp{NN} from \Cref{tab:mnist_retrained_accuracies}.

\begin{table}[t]
    \centering
    \caption{Architectures \& Test Accuracies for pruned and further trained MNIST \acp{NN}}
    \begin{tabularx}{\textwidth}{X|l|X|X}
        \multirow{2}{*}{\textbf{Architecture}} & 
        \multirow{2}{*}{\textbf{Accuracy} (\%)} & 
        \multicolumn{2}{c}{\textbf{Accuracy after Pruning } (\%)}\\\cline{3-4}
        &&\multicolumn{1}{c}{\textbf{21\%}} & \multicolumn{1}{|c}{\textbf{51\%}}\\\hline\hline
        \textsc{MNIST\_2\_100} & 
        96.54 &
        96.54 &
        96.45 \\
        \textsc{MNIST\_2\_100} &
        98.28 &
        98.21 &
        ---\\
        \textsc{MNIST\_3\_1024} &
        93.85 &
        93.78 &
        ---
    \end{tabularx}
    \label{tab:mnist_retrained_accuracies}
\end{table}

\paragraph{LHC \acp{NN}.}
As a first benchmark instance for confidence-based equivalence verification, we consider a set of \acp{NN} that we trained on a dataset for particle classification from CERN's Large Hadron Collider (LHC)~\cite{duarte2018fast}.
As explained by Duarte \emph{et al.}~\cite{duarte2018fast}, such \acp{NN} could eventually be used in the Level 1 trigger systems at the LHC.
The Level 1 trigger is a system implemented via custom hardware (e.g. FPGAs) to reduce the online data stream of the running LHC experiment to a rate processable via commercial hardware (e.g. CPUs; Level 2 trigger) under strict real-time constraints.
To this end, the system classifies observed events, e.g. based on the observed particle jet.
Particle Jet Classification is a task that could be performed by an \ac{NN}.
To apply \acp{NN} in this setting, it is paramount to generate reliable, very small \acp{NN} capable of satisfying these real-time constraints.
Here, we consider the setting where a previously trained \ac{NN} (classifying observations into one of 5 classes of particle jets) is pruned and trained further to recover accuracy.
To this end, we trained four NNs (\textsc{2\_20},\textsc{2\_40},\textsc{2\_80},\textsc{4\_20}).
We then pruned $p\in\left\{10,20,30,40,50\right\}$ percent of the \reluSym{} nodes and subsequently trained the NN for an additional $e \in \left\{0.1,1\right\}$ epochs.
This results in 40 \acp{NN} for which we aim to verify confidence-based equivalence.
Accuracies of the trained \acp{NN} can be found in \Cref{tab:lhc_accuracies}.
For the original \ac{NN} the accuracies are comparable to Duarte \emph{et al.}~\cite{duarte2018fast}.
Since the \ac{NN}'s input space was normalized, we attempted verification for standard deviatons $\sigma \in \left\{0.1,0.5,1,2,3\right\}$.
We attempted confidence-based verification for confidence thresholds $\delta =0.5$ and $\delta=1-10^{-i}$ for $i\in\left[1,7\right]$ (i.e. $\delta=0.9$, $\delta=0.99$ etc.).
Overall, this yields 1600 benchmark queries.

\paragraph{Rice \acp{NN}.}
For confidence-based equivalence verification, we also consider a set of \acp{NN} that were trained on a dataset for rice classification~\cite{cinar2019classification}.
To this end, 7 engineered features (e.g. pixel count, Perimeter, etc.) are given to an \ac{NN} and the \ac{NN} has the task of predicting which type of rice is pictured (Cammeo and Osmancik).
Note that this task only involves two classes and our $\mathrm{softmax}$ approximation is thus exact.
We consider NNs of the architectures 2\_20,2\_40, 2\_80 and 4\_40.
The \ac{NN} are then either only pruned by 10 to 50 percent (prune only) or pruned by by 10 to 90 percent and retrained for 5 epochs (prune \& retrain).
We omit a table with all accuracies that can be found in the artifact.
In general, the original \acp{NN} had an accuracy of approx. 93\% and even when pruning 90\% of \reluSym{} nodes the resulting \ac{NN} recovered to 92-93\% accuracy after 5 epochs of training.
For pruning, we only saw a drop in accuracy after pruning 50\% of nodes.
We also evaluate w.r.t. a normalized input space and attempted verification for standard deviations $\sigma \in \left\{0.5,1.0,2.0,3.0\right\}$.
Overall, this resulted in 1152 benchmark queries for prune \& retrain and 640 benchmark queries for prune only.

\begin{table}[t]
    \centering
    \caption{Architectures \& Test Accuracies for pruned and further trained LHC \acp{NN} (pruned by 10/20/30/40/50 percent; trained for an additional 0.1 / 1 epoch after pruning).}
    \begin{tabularx}{\textwidth}{l | l | X | X | X | X | X | X | X | X | X | X}
        \multirow{3}{*}{\textbf{Architecture}} &
        \multirow{3}{1.5cm}{\textbf{Accuracy}\newline (\%)} &
        \multicolumn{10}{|c}{\textbf{Accuracy after Prune + Training} (\%)}\\\cline{3-12}
        &&
        \multicolumn{2}{|c}{10\%} &
        \multicolumn{2}{|c}{20\%} &
        \multicolumn{2}{|c}{30\%} &
        \multicolumn{2}{|c}{40\%} &
        \multicolumn{2}{|c}{50\%}\\\cline{3-12}
        &&
        0.1 & 1.0 &
        0.1 & 1.0 &
        0.1 & 1.0 &
        0.1 & 1.0 &
        0.1 & 1.0 \\\hline\hline
        \textsc{2\_20} & 74.78 &
        74.62 & 74.69  &
        74.21 & 56.37 &
        46.42 & 46.30 &
        36.54 & 45.96 &
        36.76 & 36.85  \\
        \textsc{2\_40} & 74.96 &
        74.90 & 74.95  &
        74.88 & 74.97  &
        74.77 & 74.68  &
        74.67 & 74.57  &
        55.59 & 73.87   \\
        \textsc{2\_80} & 75.10 &
        74.94 & 74.91  &
        75.06 & 74.96  &
        74.97 & 74.74  &
        56.35 & 74.40  &
        56.35 & 46.48   \\
        \textsc{4\_20} & 74.64 &
        74.47 & 74.52  &
        74.32 & 74.60  &
        34.99 & 33.97  &
        33.18 & 35.01  &
        35.83 & 35.48
    \end{tabularx}
    \label{tab:lhc_accuracies}
\end{table}

\subsection{Ablation Studies}
\label{apx:subsec:eval:abelation}
We will now address the elephant in the room:
As noted in \Cref{sec:diff_zono}, ``reasoning about the difference of two \reluSym{}s is not particularly intuitive''.
This raises the question of whether this unintuitive reasoning is even necessary, i.e. can we verify more properties with than without Differential Zonotopes?
To this end, we evaluate the performance of our tool when the computation of Differential Zonotopes is activated and deactivated.
In the deactivated case, we use the naive Differential Zonotope (see \Cref{sec:diff_zono}).
An overview of this comparison can be found in \Cref{tab:abelation_diff_zono}.

For $\varepsilon$ equivalence, VeryDiff with Differential Zonotopes can solve more instances than without across all considered benchmark families.
While on commonly solved instances, Differential Zonotopes slow down the median runtime for ACAS (see speedups), this behavior becomes irrelevant when we scale to larger input dimensions, i.e. to MNIST:
Here, we either observe a speedup >1 in the median case or observe substantial improvements (>1400\%) in the number of verified equivalence properties.
This contrast becomes particularly stark when analyzing the number of benchmark instances for which the tools were able to certify equivalence by radius:
As can be seen in \Cref{fig:abelation_by_radius} (\Cref{sec:evaluation}), the naive approach stops working entirely for larger radii.
Thus, we posit that Differential Verification indeed improves the performance of VeryDiff for the verification of $\varepsilon$ equivalence.
This observation corresponds to earlier results by Paulsen \emph{et al.}~\cite{paulsen_reludiff_2020,paulsen_neurodiff_2020}.
We note that even for our new MNIST benchmark family, Differential Verification significantly outperforms the naive approach.
This demonstrates that our technique can be applied even after gradient descent updates.

For Top-1 equivalence, we observe that activating Differential Zonotopes not only makes the approach slower on commonly solved instances but moreover \emph{decreases} the number of instances solved.
While this result may initially come as a surprise, we provide a theoretical intuition for this behavior in \Cref{sec:classification} which explains why differential reasoning rarely helps in verifying Top-1 equivalence.
Given the additional information from differential reasoning is not helpful, the slowdown is a result of the overhead incurred during the computation of the Differential Zonotope:
Without this additional overhead, VeryDiff can explore a more fine-grained partitioning of the input space which may yield an equivalence proof or a counterexample.

The effect of Differential Verification for $\delta$-Top-1 equivalence will be studied in \Cref{apx:subsec:eval:confidence}.

\begin{table}[t]
    \centering
\caption{Comparison of two variants of VeryDiff: Only propagation of Zonotopes in individual NNs (Naive) and additional propagation of a Differential Zonotope (Diff. Zono). We report numbers on verified instances (Equiv.) and found counterexamples (Counterex.) resp. with relative decrease/increase.
We also report relative speedups on commonly solved instances for Differential Zonotopes over the naive approach.}
\label{tab:abelation_diff_zono}
\begin{tabular}{l|p{2cm}|l|rr|rr|rr}
\multicolumn{2}{c|}{\multirow{2}{*}{\textbf{Benchmark}}} &
\multirow{2}{*}{\textbf{Variant}} &
\multicolumn{2}{c|}{\multirow{2}{*}{\textbf{Equiv.}}}  &
\multicolumn{2}{c|}{\multirow{2}{*}{\textbf{Counterex.}}} &
\multicolumn{2}{c}{\textbf{Speedup}}\\ \cline{8-9}
\multicolumn{2}{c|}{}&&&&&& Median & Max\\\hline\hline
\multirow{6}{*}{\rotatebox[origin=c]{90}{$\epsilon$ eq.}} & \multirow{2}{2cm}{ACAS} & Diff. Zono &        \textbf{150} & \good{(+0.6\%)} &            \textbf{153}  &       \good{(+0.6\%)} &       --- &          --- \\
        &                   & Naive &        149 & &           152 &        &  \bad{0.8} &    \good{33.6} \\\cline{2-9}
        & \multirow{2}{2cm}{MNIST (Ours)} & Diff. Zono &        \textbf{484} & \good{(+356.6\%)} &            \textbf{310} &    \good{(+101.3\%)} &       --- &          --- \\
        &                   & Naive &        106 & &           154 &     &  \good{5.1} &  \good{3326.4} \\\cline{2-9}
        & \multirow{2}{2cm}{MNIST (VeriPrune)} & Diff. Zono &        \textbf{352} & \good{(+1430.4\%)}&           \textbf{1761}  &     \good{(+2.4\%)} &       --- &          --- \\
        &                   & Naive &         23 &  &         1720 &      &  \bad{0.9} &  \good{3008.3} \\\hline\hline
\multirow{4}{*}{\rotatebox[origin=c]{90}{Top-1 eq.}} & \multirow{2}{2cm}{MNIST (Ours)} & Diff. Zono &       1401 & \bad{(-2.2\%)}&            131 &     \bad{(-13.8\%)} &       --- &          --- \\
        &                   & Naive &       \textbf{1433} & &            \textbf{152} &      &  \bad{0.8} &     \good{4.2} \\\cline{2-9}
        & \multirow{2}{2cm}{MNIST (VeriPrune)} & Diff. Zono &        885 & \bad{(-1.1\%)} &             53 &     \bad{(-28.4\%)} &       --- &          --- \\
        &                   & Naive &        \textbf{895} & &             \textbf{74} &      &  \bad{0.8} &    \good{17.4} \\ 

\end{tabular}
\end{table}

\subsection{Verifying $\mathbf{\varepsilon}$ equivalence: Comparison to State-of-the-Art}
\label{apx:subsec:eval:epsilon}
\begin{figure}
    \centering
    \includegraphics[width=0.5\linewidth]{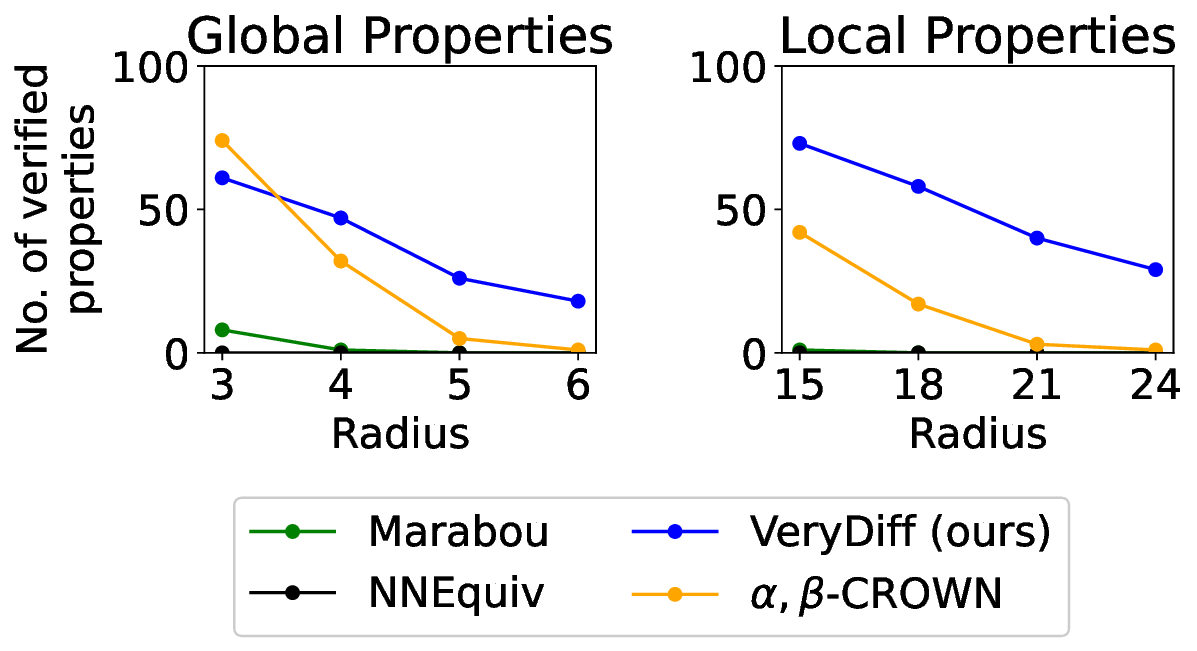}
    \caption{Performance comparison for $\varepsilon$ equivalence certification on MNIST (VeriPrune) benchmark: For larger radii competing tools almost always yield timeouts.}
    \label{fig:epsilon_mnist_radius_comparison.eps}
\end{figure}
The ablation study preceding this subsection has already shown some results on $\varepsilon$ equivalence, however, there remains the question of how these results compare to the wider literature on \ac{NN} verification.
To this end, we compared the performance of VeryDiff with the performance of other (equivalence) verification tools for \acp{NN}.
For standard $\varepsilon$ equivalence and NeuroDiff $\varepsilon$ equivalence (see note on the difference in \Cref{apx:subsec:eval:setup}) results can be found in \Cref{tab:epsilon_equivalence_sota} (see \Cref{sec:evaluation}).
Note that we always compare w.r.t. the verification of the same equivalence property.
We report relative improvements w.r.t. the best other tool and report speedups achieved by VeryDiff on commonly solved instances for all tools.
As indicated in the table, VeryDiff universally beats the State of the Art with the exception of counterexample generation for standard $\varepsilon$ equivalence on the MNIST (VeriPrune) family.
Here, $\alpha,\beta$-CROWN can outpeform VeryDiff.
We suspect that this is due to the counterexample search techniques employed in the early phases of $\alpha,\beta$-CROWN.
Implementing such attack techniques in our tool remains future work.
For the fastest CPU-only verifier at VNNComp 2024, Marabou, we observe significant speedups on commonly solved instances (>140 in the median).
Moreover, we observe that VeryDiff verifies many more equivalence properties than Marabou (+3420\%) for the MNIST (VeriPrune) benchmark family.
For Marabou we observed 13 instances where the tool returned a spurious counterexample while all other tools certified equivalence\footnote{We have reported these instances to the tool authors as they point to a potential bug.}. %
We also ran MILPEquiv on the first 400 benchmark queries of the MNIST (VeriPrune) family:
MILPEquiv verified equivalence 7 times and found 28 counterexamples.
Due to excessive timeouts, we omit a full comparison.
As can be seen in \Cref{fig:epsilon_mnist_radius_comparison.eps}, our approach in particular shines for larger radii: Here, the other tools almost always run into a timeout while the differential bounds provided by $\zonotope{}^\Delta$. suffice to show the given equivalence property.

\paragraph{Choice of Equivalence Property.}
We also cross-referenced our verification results for $\varepsilon$ and Top-1 equivalence.
For the MNIST (VeriPrune) benchmark family, we observe benchmark queries where the results for $\varepsilon$ and Top-1 equivalence mismatch.
This underlines the importance of choosing the correct verification property for the task at hand.
Hence, the subsequent subsection will focus on exploiting differential verification for confidence-based equivalence verification in classification \acp{NN}.

\subsection{Confidence-Based Equivalence Verification}
\label{apx:subsec:eval:confidence}
To analyze the effectiveness of our proposed approach for confidence-based equivalence verification, we derived two new benchmark families (see \Cref{apx:subsec:eval:setup}).
The \acp{NN} of these benchmark families are significantly smaller than the \acp{NN} studied in the prior sections.
As also shown by Athavale \emph{et al.}~\cite{DBLP:conf/cav/AthavaleBCMNW24}, \emph{global} properties can currently only be analyzed w.r.t. smaller-scale \acp{NN} (e.g. prior work only analyzed \acp{NN} with up to 50 \reluSym{} nodes~\cite{DBLP:conf/cav/AthavaleBCMNW24}).
For the tasks at hand, we observe no significant increase in accuracy when scaling from our smallest to our largest \acp{NN} indicating that the chosen \acp{NN} are sufficient for the problem at hand.
We report results w.r.t. a 10 minute timeout.
As shown in \Cref{apx:subsec:eval:epsilon}, falsification of equivalence can be achieved more efficiently using $\alpha,\beta$-CROWN's attack techniques.
Hence, this section focuses on \emph{certifying} equivalence.

\subsubsection{LHC \acp{NN}.}
\begin{wrapfigure}[17]{r}{0.4\textwidth}
    \centering
    \includegraphics[width=\linewidth]{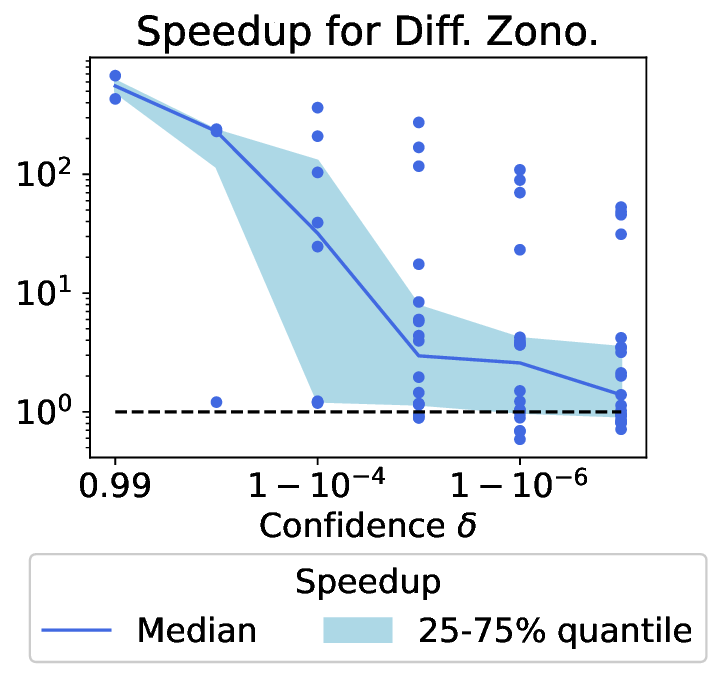}
    \caption{Speedups for certification with Differential Zonotopes over naive technique on LHC \acp{NN}.}
    \label{fig:speedup_lhc}
\end{wrapfigure}
For the LHC benchmark family, we observe 1427 queries with concrete counterexamples.
As the \acp{NN} have 5 output nodes, there may be cases where our $\mathrm{softmax}$ approximation leads to spurious counterexamples, i.e. the output of $f_1$ does not actually pass the confidence threshold $\delta$.
Of the 173 queries without concrete counterexamples, there are 10 cases in which we only find spurious counterexamples.
For 52 queries further search yielded concrete counterexamples after the initial detection of a spurious counterexample (these are included in the 1427 queries above).
We want to underscore that VeryDiff's softmax approximation immediately yielded concrete counterexamples for 1375 out of 1427 queries and found concrete counterexamples for 52 out of 62 cases through continued search.

For the remaining analysis, we focus on instances for which we were able to prove equivalence or observed timeouts.
First, we compared the performance of Differential Zonotopes with the naive VeryDiff implementation.
Here, we observe that both techniques solve a common set of 70 queries while Differential Zonotopes solve an additional 7, and the naive approach solves an additional 3 benchmarks.
For commonly solved instances, we plot the speedups achieved by Differential Zonotopes in \Cref{fig:speedup_lhc}:
As we increase the required confidence threshold towards 1, the speedups achieved through the use of Differential Verification diminish.
This observation is further supported by the benchmark queries not commonly solved:
The queries only solved via Differential Zonotopes have confidence thresholds $\delta=0.9$ (not achieved by the naive technique) or $\delta\in\left\{0.99,0.999,0.9999\right\}$ while the benchmarks only solved by the naive technique have confidence threshold $\delta=1-10^{-7}$.
For example, for a \textsc{2\_20} \ac{NN} trained for an additional epoch after 10\% pruning, Differential Zonotopes can prove $0.9$-Top-1 equivalence in 300 seconds while the naive technique times out after 10 minutes.
Within those 10 minutes, the naive approach had performed 4x as many input space splits while only certifying equivalence on 19\% of the input space.
As we increase $\delta$, we provide guarantees for ever smaller parts of the state space which seems to be easier even without relying on differential reasoning.
Conversely, for $\delta \to 0.5$, the verification procedure converges to regular Top-1 equivalence for which we have shown that Differential Zonotopes do not help (see \Cref{apx:subsec:eval:abelation}).
We conjecture that there exists a sweet spot $\delta^* \in \left(0.5,0.9\right]$ for which Differential Zonotopes provide maximal speedup.
Fortunately, this sweet spot would coincide with the kind of property that we would like to prove ($\delta=0.5$ seems unlikely to hold while $\delta>0.999$ seems overconstrained).
We also compared the capabilities of VeryDiff (with Differential Zonotopes) to $\alpha,\beta$-CROWN (see \Cref{tab:epsilon_equivalence_sota}):
Our technique solves 327\% more queries with a median speedup of 324.5 on commonly solved benchmark queries.

\subsubsection{Rice \acp{NN}.}
\begin{wrapfigure}[15]{r}{0.4\textwidth}
    \centering
    \vspace*{-2.2em}
    \includegraphics[width=\linewidth]{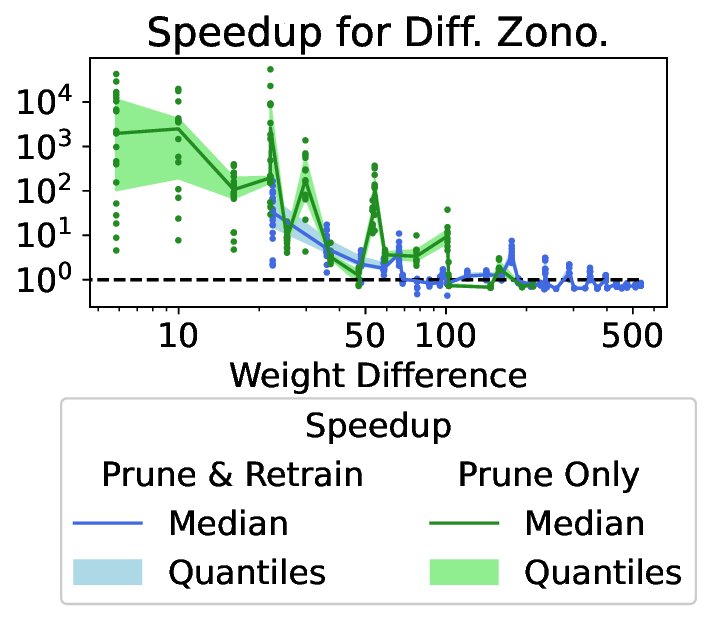}
    \caption{Speedups achieved by Differential Verification on Rice \acp{NN} by weight difference.}
    \label{fig:speedup_rice}
\end{wrapfigure}
We performed additional experiments on a benchmark family of \acp{NN} for rice classification based on 7 engineered features.
Since these \acp{NN} only have two outputs our softmax handling is exact.
For the \acp{NN} that were only pruned, both approaches certified equivalence for 199 common queries (Differential Zonotopes: 37 additional; Naive: 1 additional).
For the \acp{NN} that were pruned and trained for an additional 5 epochs, both approaches certified equivalence for 339 queries (Differential Zonotopes: 8 additional).
We again focus our analysis on the commonly certified instances.
Given the vast range of pruning factors (10 to 90 percent) and the pruned only and pruned \& retrained \acp{NN}, we can analyze the effect of growing weight differences on the efficiency of Differential Verification.
To this end, \Cref{fig:speedup_rice} shows the speedups achieved via Differential Zonotopes over our naive technique w.r.t. increasing (absolute) differences in the weight matrices between the two \acp{NN} (note both axes are in log scale).
Differential Zonotopes perform best for weight differences below 100.
In particular, for weight differences below 20 the median speedups reach beyond 100 and sometimes even beyond 1000.
Absolute weight differences accumulate through increased pruning rates, further training after pruning, and pruning across more layers.
Hence, these influence factors determine the efficiency gains of Differential Verification over naive analyses.

\subsubsection{Limitations.}
Across both benchmark families (LHC \acp{NN} and Rice \acp{NN}) we observed diminishing returns for the usage of Differential Verification when moving from \acp{NN} with 2 hidden \reluSym{} layers to \acp{NN} with 4 hidden \reluSym{} layers.
On the one hand, this observation can be explained by the increasing weight differences between the two \acp{NN} (see also analysis for Rice \acp{NN}).
In addition, we conjecture that this may be an artifact of the \emph{wrapping effect}, i.e. the accumulation of overapproximation errors across layered approximations.
While this limits the applicability at the current time, we can also use this limitation as guidance by scaling \acp{NN} in width rather than depth when we plan to use equivalence verification.

\end{document}